%% file: main.tex
\documentclass{article}

 \usepackage[preprint]{neurips_2026}

\usepackage[utf8]{inputenc} 
\usepackage[T1]{fontenc}    
\usepackage{hyperref}       
\usepackage{url}            
\usepackage{booktabs}       
\usepackage{amsfonts}       
\usepackage{nicefrac}       
\usepackage{microtype}      
\usepackage{xcolor}         
\usepackage{wrapfig}

\usepackage[english]{babel}
\usepackage{amsmath,amssymb,amsfonts,amsthm}
\usepackage{mathtools}
\usepackage{bm}
\usepackage{bbm}
\usepackage{subcaption}
\usepackage{multirow}

\usepackage{float}

\newtheorem{theorem}{Theorem}
\newtheorem*{theorem*}{Theorem}
\newtheorem{corollary}{Corollary}

\newtheorem*{proposition*}{Proposition}

\title{Latent-MoE: Domain-Aware Mixture-of-Experts for PDEs with Multi-Regime Physics}

\author{%
  Hanwen Wang \\
  Graduate Group of Applied Mathematics and Computational Science\\
  University of Pennsylvania\\
  Philadelphia, PA 19104 \\
  \texttt{wangh19@sas.upenn.edu} \\
  \And
  Paris Perdikaris \\
  Department of Mechanical Engineering and Applied Mechanics\\
  University of Pennsylvania\\
  Philadelphia, PA 19104 \\
  \texttt{pgp@seas.upenn.edu} \\
}

\begin{document}

\maketitle

\begin{abstract}
Physics-informed neural networks (PINNs) struggle on PDEs whose governing physics varies across the domain. We trace this to a structural property of standard coordinate networks: their neural tangent kernel (NTK) is translation-variant and lets training points of large coordinate magnitude disproportionately influence predictions elsewhere, producing long-range coupling and gradient conflict during training. We show analytically and empirically that mixture-of-experts (MoE) architectures with centered, compact-support routers yield a uniformly banded NTK whose kernel-regression weights decay exponentially with distance, localizing the learning. Building on this, we propose \emph{Latent-MoE}, which interleaves domain-aware MoE blocks within a shared backbone. Unlike FB-PINNs or X-PINNs, which rigidly partition both the domain and the parameters so that the parameters on different subdomains are updated independently, Latent-MoE is designed to preserve the localization benefit of domain-aware routing while allowing capacity to flow across regions through the shared backbone. On standard homogeneous-physics benchmarks Latent-MoE is competitive with established baselines; on benchmarks with multi-stage time-variable physics, where global models and rigid domain decompositions both fall into spurious solutions, it improves over them by more than an order of magnitude, with markedly reduced gradient conflict during training.
\end{abstract}

\section{Introduction}
Physics-informed neural networks (PINNs)\cite{raissi2017physicsinformeddeeplearning} embed the governing equations of a partial differential equation (PDE) directly into the training objective, allowing a coordinate network to act as a surrogate solution given only partial observations at the boundary or initial time. Modern automatic differentiation frameworks\cite{jax2018github,Ansel_PyTorch_2_Faster_2024,martin_abadi_tensorflow_2015}  make it straightforward to evaluate PDE residuals by differentiating the network with respect to its input coordinates, yielding a unified loss that enforces both the physics and the available data. This flexibility has made PINNs an attractive alternative to classical solvers, particularly for inverse problems and settings where mesh generation is difficult.

In practice, however, PINNs inherit the optimization pathologies of
coordinate networks~\cite{wang2021understanding}, and the higher-order
differentiation in the residual loss amplifies them. A range of
architectural and frequency-aware remedies have been
proposed~\cite{rahaman2019spectralbiasneuralnetworks,
basri2019convergencerateneuralnetworksfrequencies,
xu2019trainingbehaviordeepneuralnetwork,
sitzmann2020implicitneuralrepresentationsperiodic,
tancik2020fourierfeaturesletnetworkshighfrequency}, including
domain-decomposition methods that partition the spatio-temporal
domain and assign a subnetwork to each
region~\cite{jagtap2020extended, moseley2023finite, dolean2024multilevel,
botvinickgreenhouse2025abpinnsadaptivebasisphysicsinformedneural}.
Domain decomposition is conceptually appealing and empirically effective,
yet the question of \emph{why} partitioning helps -- and what kind of
partitioning is best -- has received limited theoretical attention.

In this work we revisit domain decomposition through the lens of the neural tangent kernel (NTK). Our starting point is that the NTK of a standard coordinate network is translation-variant: training points of larger coordinate magnitude exert disproportionate influence on predictions across the domain, even at distant locations. This long-range coupling is benign for spatially homogeneous physics but becomes destructive when the governing equations or forcing change across the domain, since gradients from regions with different dynamics end up competing during optimization. Collapse to spurious solutions is a known PINN failure mode. Even state-of-the-art architectures such as PirateNet fall into it on a homogeneous advection equation\cite{wang2026pinnswrongpseudotimestepping}, and we observe the same collapse on PDEs with multi-stage time-variable physics. We show that domain-aware mixture-of-experts (MoE) architectures with centered, compact-support routers eliminate this coupling by yielding a banded NTK, and we use this observation to motivate an architectural choice that has not, to our knowledge, been explored in the PINNs literature: interleaving sparse domain-aware experts within a shared backbone, rather than partitioning the entire model.

\paragraph{Contributions.}
\begin{itemize}
  \item Using NTK theory, we identify long-range coupling as a structural obstacle to PINN training: standard ReLU coordinate networks have a translation-variant NTK that lets training points of larger magnitude disproportionately influence predictions elsewhere in the domain.
  \item We prove that domain-aware MoE architectures with centered, compact-support routers yield a banded NTK whose kernel-regression weights decay exponentially in distance, eliminating the coupling and localizing the learning.
  \item We introduce \emph{Latent-MoE}, which interleaves domain-aware
experts within a shared backbone. Unlike rigid decompositions such as
FB-PINNs, in which each subdomain holds its own parameters and the training updates them separately, the shared backbone enables genuine gradient alignment across subdomains, while strictly subdomain-bound experts are designed to retain the localization that Theorem~\ref{thm:moe_decay} motivates.
  \item On standard homogeneous-physics PINN benchmarks Latent-MoE matches established baselines, while on PDEs with multi-stage time-variable physics -- where global baselines collapse to spurious solutions and rigid domain decomposition fails to capture stage-specific dynamics -- it achieves more than an order of magnitude lower relative $L^2$ error, with substantially reduced gradient conflict during training.
\end{itemize}

\section{Preliminaries}
\label{sec:prelim}

\paragraph{Physics-informed neural networks.}
Given a PDE operator $\mathcal{N}$, a forcing function $f$, a boundary operator $\mathcal{B}$, and boundary or initial data $g$ on $\partial\Omega$, PINNs train a neural network surrogate $u_\theta$ to satisfy
\begin{equation}
\mathcal{N}(u_\theta) = f \text{ on } \Omega, \qquad
\mathcal{B}(u_\theta) = g \text{ on } \partial\Omega,
\label{eq:pinn-pde}
\end{equation}
by minimizing a weighted sum of residual losses,
\begin{equation}
\mathcal{L}(\theta) =
\mathbb{E}_{x \in \Omega}\!\left[(\mathcal{N}(u_\theta) - f)^2\right]
+ \sum_i \lambda_i \,
\mathbb{E}_{x \in \partial\Omega}\!\left[(\mathcal{B}(u_\theta)_i - g_i)^2\right],
\label{eq:pinn-loss}
\end{equation}
where the weights $\lambda_i$ may be fixed or adapted during training to stabilize the multi-objective optimization \cite{wang2023expert}. The higher-order automatic differentiation required to evaluate $\mathcal{N}(u_\theta)$ makes the residual loss landscape stiff \cite{wang2021understanding} and motivates the architectural and optimization machinery discussed below.

\paragraph{Neural tangent kernel.}
For a network $f(\cdot;\theta)$ with parameters $\theta$, the empirical
neural tangent kernel (NTK)~\cite{jacot2018neuraltangentkernelconvergence}
is the Gram matrix of parameter-Jacobians,
\begin{equation}
K_\theta(x, x') = \nabla_\theta f(x;\theta)^\top \nabla_\theta f(x';\theta).
\label{eq:ntk}
\end{equation}
Under appropriate scaling, $K_\theta$ converges to a deterministic kernel
$K$ that remains constant during gradient-descent
training~\cite{lee2019wide}, and the trained network at any test point
becomes a kernel-regression predictor whose weights $H(x, X)$ characterize
how training-data influence propagates across the input domain. We
analyze the structure of these weights in
Section~\ref{sec:method}.

\section{Related Work}
\label{sec:related}

\paragraph{Domain decomposition for PINNs.}
The closest line of work to ours partitions the spatio-temporal domain into
subdomains and assigns a separate subnetwork to each. X-PINNs
\cite{jagtap2020extended} enforce solution and residual agreement on the interfaces
between adjacent experts, while FB-PINNs \cite{moseley2023finite} and their
multilevel extension \cite{dolean2024multilevel} use smooth partition-of-unity
windows to blend independently parametrized experts. AB-PINNs
\cite{botvinickgreenhouse2025abpinnsadaptivebasisphysicsinformedneural}
adapt the partitioning to the residual landscape by setting the domain decomposition parameters learnable. All of
these approaches share a common structural choice: the model is split
\emph{end-to-end}, so each subdomain has its own dedicated parameters with
no shared computation. This rigid partitioning localizes capacity but
prevents capacity from being reallocated across regions whose physics
differs in difficulty -- a limitation we identify and address.

\paragraph{Mixture-of-experts.}
Mixture-of-experts (MoE) architectures \cite{jacobs1991adaptive,
jordan1994hierarchical} have recently been revived in large language models
as a route to scaling parameter count while keeping per-token compute fixed
\cite{fedus2022switch}. Modern MoE designs interleave sparsely activated
expert layers within a shared backbone and use \emph{learnable, input-agnostic}
routers trained jointly with the model. Domain-decomposition PINNs can be
viewed as a special case of MoE in which a single MoE layer sits at the
output and the router is a fixed function of the input coordinates. Our
Latent-MoE architecture sits between these two regimes: it borrows the
interleaved-with-shared-backbone structure of language-model MoE, but uses
\emph{fixed, domain-aware} routers that exploit the prior geometric
structure of the PDE problem.

\paragraph{NTK analysis of PINNs.}
The neural tangent
kernel~\cite{jacot2018neuraltangentkernelconvergence,lee2017deep,lee2019wide}
characterizes the learning dynamics of wide networks under gradient
descent and has become a standard tool for analyzing PINN training. Wang
et al.~\cite{wang2022whenandwhypinnsfailtrain, wang2020eigenvectorbiasfourierfeature}
use NTK analysis to expose gradient pathologies in the multi-loss PINN
objective and to explain spectral bias in Fourier-feature networks. More
recent work studies gradient alignment as a second-order optimization
signal~\cite{wang2026gradient} and the spurious-solution failure mode
under stiff physics~\cite{wang2026pinnswrongpseudotimestepping}.

\paragraph{Architectures and training strategies for PINNs.}
Complementary work improves PINN training without changing domain
structure, including PirateNet~\cite{wang2024piratenets}, causal
training~\cite{wang2022respecting}, sinusoidal
activations~\cite{sitzmann2020implicitneuralrepresentationsperiodic}, and
Fourier feature mappings~\cite{tancik2020fourierfeaturesletnetworkshighfrequency};
these are largely orthogonal to ours and Latent-MoE in fact uses Fourier
features and causal weighting in its loss.

\section{Method}
\label{sec:method}

We begin with an analytical diagnosis: standard ReLU coordinate networks
have a translation-variant NTK that produces long-range coupling between
training points, and this coupling is exactly what makes PINNs struggle
when the governing physics varies across the domain
(\S\ref{subsec:method_ntk}). Centered, compact-support routing yields a uniformly
banded NTK whose kernel-regression weights decay exponentially with
distance, eliminating the coupling. We then introduce the
\emph{Latent-MoE} architecture, which interleaves domain-aware MoE blocks
within a shared backbone (\S\ref{subsec:architecture}), and specify the
router (\S\ref{subsec:routers}) and encoder (\S\ref{subsec:encoder}).

\subsection{NTK analysis: long-range coupling and its localization}
\label{subsec:method_ntk}

\paragraph{Translation-variance and long-range coupling.}
Consider the simplest non-trivial case: a single-hidden-layer ReLU network
with bias scale $\sigma_b$. A standard arc-cosine
calculation~\cite{cho2009kernel} (full derivation in
Appendix~\ref{appendix:relu_ntk_analytic}) yields the closed-form NTK
\begin{equation}
  K(x, y) =
  \left(xy + \sigma_b^2\right) \frac{\pi - \arccos(\rho)}{\pi}
  + \frac{\sigma_b \, |x - y|}{2\pi},
  \quad
  \rho =
  \frac{xy + \sigma_b^2}{\sqrt{(x^2 + \sigma_b^2)(y^2 + \sigma_b^2)}}.
  \label{eq:ntk_1d_relu}
\end{equation}
This kernel is translation-variant: for nearby points $y = x + \epsilon$, a
Taylor expansion (see Eq.~\eqref{eq:arccos_approx}, Appendix~\ref{appendix:relu_ntk_analytic})
 gives
$K(x, x + \epsilon) \approx x^2 + x\epsilon + \sigma_b^2 - \sigma_b
|\epsilon|/(2\pi)$, so the local kernel value scales with $|x|^2$. Two
training points at large $|x|$ are strongly correlated even when far apart,
while two points near the origin are only weakly correlated. When the
input coordinate is time, this means later-time data exerts
disproportionate influence on predictions everywhere in the domain -- a
violation of causality that is benign when the physics is homogeneous but
becomes destructive when different temporal regimes have different
dynamics, since gradients from competing regimes are forced into a single
shared kernel.

\paragraph{Banded NTK from domain-aware routing.}
Now consider a mixture of $M$ experts with disjoint or weakly overlapping
supports, blended by a fixed router $\phi$, so that $f(x) = \sum_i
\phi_i(x)\, e_i(x)$ where $e_i$ is the expert gated by router $\phi_i$.
For a hard router with non-trainable parameters, the NTK decomposes as
\begin{equation}
  K_{\text{MoE}}(x, y) = \sum_{i=1}^{M} \phi_i(x)\,\phi_i(y)\, K_i(x, y),
  \label{eq:moe_ntk}
\end{equation}
where $K_i$ is the NTK of expert $e_i$. If the router supports are compact intervals of width $w$ tiled across the
domain -- and, crucially, if each expert is \emph{centered} on its
subdomain so that $K_i$ has the small-magnitude form near the
origin -- then the empirical kernel matrix $K_{\text{MoE}}(X, X)$ becomes
uniformly banded: entries vanish for training point pairs separated by more than
$w$. Banded positive-definite matrices have a classical inverse-decay
property, which we leverage in the following result.

\begin{theorem}[Effective-neighborhood bound for centered MoE]
\label{thm:moe_decay}
Let $X = \{x_1, \dots, x_N\}$ be evenly spaced points with grid spacing
$\Delta x$ on $[A, B]$, and consider an MoE architecture with $M$ experts
whose routers $\phi_i$ have compact, uniformly-spaced supports of width
$w$ that partition $[A,B]$ with bounded overlap (concretely, the bump router of
Eq.~\eqref{eq:bump_router}; see Appendix~\ref{appendix:da_router}). Assume the component kernels $K_i$
are continuous on their router's supports, and constructed as translation of the same kernel, and the minimum eigenvalue of
$K_{\textnormal{MoE}}(X, X)$ is bounded below by $\lambda_0 > 0$. Then
there exist constants $C, \alpha > 0$ such that for every evaluation point
$x \in [A, B]$ and training point $x_j$, the kernel-regression weight
$H_j(x, X) := K_{\textnormal{MoE}}(x, X)\,(K_{\textnormal{MoE}}(X, X) + \lambda I)^{-1}$, where $\lambda \geq 0$, satisfies
\begin{equation}
  |H_j(x, X)| \le C e^{-\alpha |x - x_j|}.
  \label{eq:exp_decay}
\end{equation}
The decay rate $\alpha$ depends on $\lambda_0$, $\Delta x$, the bound on
$K_i$, and the support width $w$.
\end{theorem}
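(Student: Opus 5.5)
The plan is to reduce the statement to the classical exponential off-diagonal decay of inverses of banded positive-definite matrices (Demko--Moss--Smith), and then to multiply by the banded row $K_{\text{MoE}}(x,X)$.

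\textbf{Step 1: bandedness.} By Eq.~\eqref{eq:moe_ntk}, $K_{\text{MoE}}(x,y)=\sum_i \phi_i(x)\phi_i(y)K_i(x,y)$. The supports of the $\phi_i$ have width $w$ and bounded overlap, so $K_{\text{MoE}}(x,y)=0$ whenever $|x-y|>w$: no single router is nonzero at both points. On the grid this makes $G:=K_{\text{MoE}}(X,X)+\lambda I$ a banded matrix with half-bandwidth $m=\lceil w/\Delta x\rceil$.

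\textbf{Step 2: uniform entry bound.} Each $K_i$ is a translate of a single continuous kernel restricted to a compact support, so $\sup|K_i|\le \kappa$ for a constant $\kappa$ independent of $i$. This is exactly where centering enters: without it, the $|x|^2$ growth of Eq.~\eqref{eq:ntk_1d_relu} would make $\kappa$ depend on the location of the expert. With bounded overlap (at most $p$ routers active at any point) and $0\le\phi_i\le 1$, we get $|K_{\text{MoE}}|\le p\kappa$. Hence $\|G\|_2\le (2m+1)p\kappa+\lambda=:b$, while $\lambda_{\min}(G)\ge \lambda_0+\lambda\ge\lambda_0=:a$. Note that $b$ depends on $\lambda$; I would either fix $\lambda$ in a bounded range or observe that a larger $\lambda$ only improves the condition number, so the bound is uniform.

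\textbf{Step 3: inverse decay.} Apply Demko--Moss--Smith to the symmetric positive-definite $m$-banded matrix $G$ with spectrum in $[a,b]$. Let $\kappa_G=b/a$ and $q=\bigl(\sqrt{\kappa_G}-1\bigr)/\bigl(\sqrt{\kappa_G}+1\bigr)^{1/m}$. Then
$|(G^{-1})_{kj}|\le C_0\, q^{|k-j|}$ with $C_0=\max\{a^{-1},(1+\sqrt{\kappa_G})^2/(2b)\}$. The proof sketch of that result: approximate $1/t$ on $[a,b]$ by Chebyshev polynomials of degree $n$, with error of order $q^{mn}$. A polynomial of degree $n$ in $G$ is $nm$-banded, so entries with $|k-j|>nm$ of $G^{-1}$ equal the approximation error. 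Since $|k-j|=|x_k-x_j|/\Delta x$, this gives $|(G^{-1})_{kj}|\le C_0 e^{-\alpha|x_k-x_j|}$ with $\alpha=-\ln q/\Delta x>0$.

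\textbf{Step 4: the test point.} We have $H_j(x,X)=\sum_k K_{\text{MoE}}(x,x_k)(G^{-1})_{kj}$. Only $k$ with $|x-x_k|\le w$ contribute, and there are at most $2m+1$ such terms, each bounded by $p\kappa$. Using $|x_k-x_j|\ge|x-x_j|-w$ gives
$|H_j|\le (2m+1)p\kappa C_0 e^{\alpha w}e^{-\alpha|x-x_j|}$, which is Eq.~\eqref{eq:exp_decay} with $C=(2m+1)p\kappa C_0e^{\alpha w}$.

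\textbf{Main obstacle.} The delicate step is uniformity in $N$ in Step 3. The rate $\alpha$ must not degrade as $N$ grows, since $m$ is fixed by $w/\Delta x$ but $\|G\|$ and the eigenvalue bound must be $N$-independent. The assumed uniform $\lambda_0$ handles the lower end, and the centered translation structure from Step 2 handles the upper end. I would state explicitly that $\alpha\approx \frac{\Delta x}{w}\cdot\frac{2}{\sqrt{\kappa_G}}\cdot\frac{1}{\Delta x}$ for large condition number. This makes the claimed dependence on $\lambda_0$, $\Delta x$, $\kappa$ and $w$ transparent.
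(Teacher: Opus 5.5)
Your proposal is correct and follows the paper's proof step for step. Like the paper, you get bandedness of $K_{\text{MoE}}(X,X)$ from the width-$w$ router supports and a common entry bound from the translated expert kernels. You then apply Demko--Moss--Smith to $(K_{\text{MoE}}(X,X)+\lambda I)^{-1}$ and sum over the at most $2w/\Delta x+1$ nonzero entries of $K_{\text{MoE}}(x,X)$, using $|x_k-x_j|\ge|x-x_j|-w$.

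Your version is a bit more careful than the paper's in three places. It converts index distance to $|x_k-x_j|/\Delta x$ explicitly. It keeps the overlap factor $p\kappa$, which the paper drops to $L$ in the final sum. It also addresses uniformity in $\lambda$, though there you should take $a=\lambda_0+\lambda$ rather than $a=\lambda_0$. Finally, the exponent in your $q$ should read $\bigl((\sqrt{\kappa_G}-1)/(\sqrt{\kappa_G}+1)\bigr)^{1/m}$.
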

The proof, given in Appendix~\ref{pf:1d_bound}, applies the
Demko--Moss--Smith inverse-decay theorem~\cite{demko1984decay} to the
banded matrix $K_{\text{MoE}}(X, X)$ and uses the compact support of $\phi$
to extend the decay from matrix inverses to kernel-regression weights.

\paragraph{Effective neighborhood width.}\label{def:1d_interval_width}
Theorem~\ref{thm:moe_decay} states that the influence of training points
on the prediction at $x$ decays exponentially in distance. To
quantitatively measure this localization, we define the
\emph{$\alpha$-effective neighborhood width}
\begin{equation}
  B_\alpha(x, X) =
  \min_{i, j}
  \left\{
    x_j - x_i :
    \frac{\sum_{k=i}^{j} H_k(x, X)^2}{\|H(x, X)\|_2^2} \ge 1 - \alpha,
    \; j > i
  \right\},
  \label{eq:eff_neighborhood}
\end{equation}
the size of the smallest interval containing $1 - \alpha$ of the
kernel-regression mass. Smaller $B_\alpha$ means more localized learning.
For the spatio-temporal setting where only the temporal coordinate is
decomposed, an analogous \emph{temporal} width is defined by integrating
out the spatial coordinate (Appendix~\ref{appendix:2d_interval_def}).

\paragraph{Empirical verification.}
Figures~\ref{fig:ntk_diagnosis}(a,b) confirm that MoE
alone does not fix the translation-variant NTK structure, whereas
\emph{centering} the experts produces a uniformly bounded effective
neighborhood across the domain; the eigenspectrum
(Fig.~\ref{fig:ntk_diagnosis}(c)) remains well-spread and
well-conditioned. These diagnostics validate the two design choices that
drive the rest of the paper: \emph{compact-support routing} bands the
kernel, and \emph{centering} keeps the exponential decay from the diagonal uniform. Additional kernel and regression-weight visualizations are provided in
Appendix~\ref{appendix:ntk_diagnosis}.

\begin{figure*}[t]
  \centering
    \centering
\noindent\makebox[\textwidth][c]{%
        \includegraphics[width=1.05\textwidth]{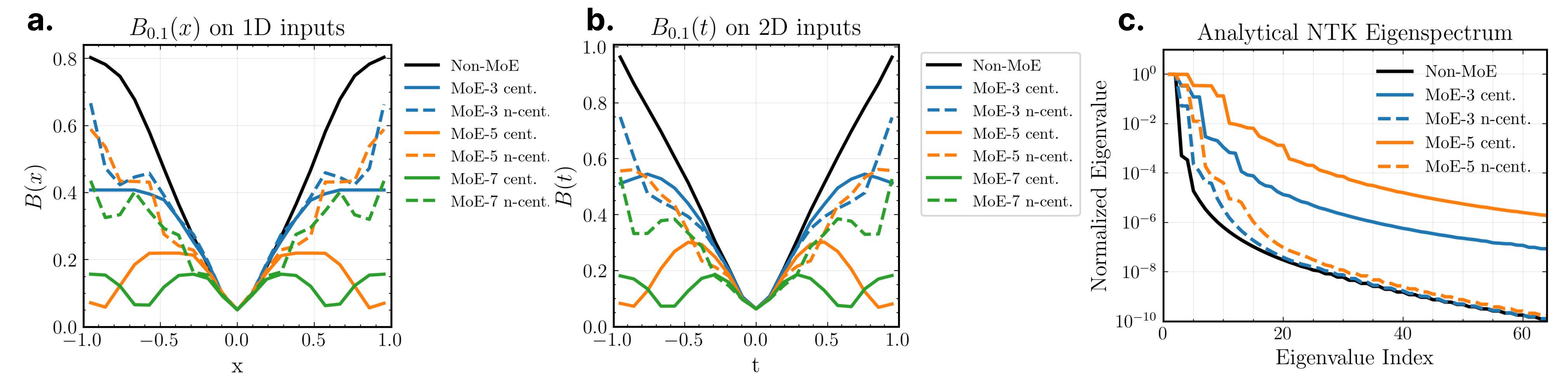}%
    }
  \caption{(a) 1D $\alpha$-effective neighborhood width, $B_{\alpha}(x)$. (b) 2D temporal $\alpha$-effective neighborhood width, $B_{\alpha}(t)$. (c) Analytical NTK eigenspectrum for 1D models. In all cases, weight scale is $1$, and bias scale is $0.05$. We set $\alpha=0.1$, corresponding to the $90\%$ kernel weight concentration.}
  \label{fig:ntk_diagnosis}
\end{figure*}

\subsection{From localization to architecture}
\label{subsec:architecture_motivation}

The NTK analysis says that \emph{centered, compact-support routing}
localizes learning. The most direct way to instantiate this -- used by
FB-PINNs~\cite{moseley2023finite} and X-PINNs~\cite{jagtap2020extended}
-- partitions the model end-to-end and blends experts at the output by a
fixed partition-of-unity. This is restrictive in two ways: per-subdomain
capacity is fixed, so the per-expert width must be set to the worst-case
region; and we hypothesize that errors propagate across subdomain boundaries when early-stage experts are capacity-limited. The variable-$t_1$ heatmaps for FB-PINNs in
Appendix~\ref{sec:variable_stage_transition}
(Fig.~\ref{fig:variable_transition_heatmaps_latest}(c,g)) are consistent
with this picture.

We propose \emph{Latent-MoE} as a remedy. Rather than partitioning the
entire model, Latent-MoE interleaves domain-aware MoE blocks within a
shared backbone, mirroring the interleaved-with-shared-backbone structure
of language-model MoE~\cite{fedus2022switch} but replacing learnable,
input-agnostic routers with fixed routers that exploit the geometric
structure of the PDE problem. The localization benefit, that
Theorem~\ref{thm:moe_decay} theoretically establishes for the idealized model, motivates the design, that is, experts remain strictly
subdomain-bound, while the shared backbone allows representational
capacity to flow flexibly across regions during training.

\subsection{The Latent-MoE architecture}
\label{subsec:architecture}

\begin{wrapfigure}[13]{r}{0.55\textwidth}
  \vspace{-1em}
  \centering
  \includegraphics[width=\linewidth]{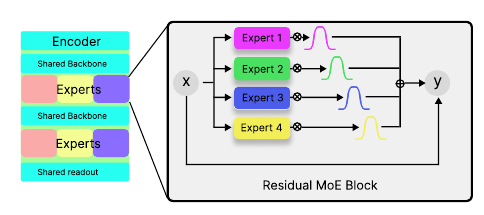}
  \caption{{\small The Latent-MoE architecture (shown with four experts). Domain-aware
  MoE blocks (with fixed, compact-support routers) are interleaved with
  shared residual blocks. Only experts whose routers cover the input
  point contribute to the forward pass.}}
  \label{fig:latent_moe_backbone}
\end{wrapfigure}

The model alternates two types of residual block. A \emph{shared block}
is a standard two-layer feedforward residual block,
\begin{equation}
  f_{\text{shared}}(z) = z + \sigma(W^{(0)} z + b^{(0)}) W^{(1)} + b^{(1)},
\end{equation}
applied uniformly across the input domain. A \emph{MoE block} replaces
the feedforward path with a sparse mixture of $E$ experts $\{e_i\}_{i=1}^E$
gated by a fixed domain-aware router $\phi: \mathbb{R}^d \to \mathbb{R}^E$
(specified in \S\ref{subsec:routers}):
\begin{align}
  &f_{\text{MoE}}(z) = z + \sum_{i=1}^{E} \phi(x)_i \, e_i(z),\\
  &e_i(z) = \sigma(W_i^{(0)} z + b_i^{(0)}) W_i^{(1)} + b_i^{(1)}.
\end{align}
Note that the router is a function of the input coordinate $x$ rather than
the latent $z$, so the routing pattern is fixed once the input is given
and is independent of depth. Because the routers have compact support,
only experts whose support contains $x$ contribute, yielding a sparse
activation pattern. The full network stacks $L$ alternating shared and MoE blocks on top of an
encoder, with a linear readout at the top (see Figure~\ref{fig:latent_moe_backbone}):
\begin{equation}
  z^{(0)} = \text{Encoder}(x),
  \qquad
  z^{(l+1)} = f_{\text{MoE}}^{(l)} \circ f_{\text{shared}}^{(l)}(z^{(l)}),
  \qquad
  u_\theta(x) = W^{(L)} z^{(L)} + b^{(L)}.
\end{equation}

\paragraph{Domain-aware routers.}\label{subsec:routers}
The router $\phi$ assigns each input coordinate to a small number of
experts via a partition-of-unity over the domain. We use compact-support
bump functions with a tunable plateau-to-bridge ratio: each expert $i$ is
assigned an interval $[a_i, b_i]$ on the decomposed coordinate, with a
flat plateau $[a_i + \delta_i, b_i - \delta_i]$ where $w_i \equiv 1$ and
two smooth bridge regions of width $\delta_i = \tfrac{1 - \rho}{2}(b_i - a_i)$
controlled by a polynomial $g_n: [0, 1] \to [0, 1]$ satisfying
$g_n(0) = 0$, $g_n(1) = 1$, and $g_n^{(j)}(0) = g_n^{(j)}(1) = 0$ for
$j = 1, \dots, n$:
\begin{equation}
  w_i(x) =
  \begin{cases}
    g_n\!\left((x - a_i)/\delta_i\right), & a_i \le x < a_i + \delta_i, \\
    1, & a_i + \delta_i \le x \le b_i - \delta_i, \\
    g_n\!\left((b_i - x)/\delta_i\right), & b_i - \delta_i < x \le b_i, \\
    0, & \text{otherwise}.
  \end{cases}
  \label{eq:bump_router}
\end{equation}
The plateau ratio $\rho \in (0, 1)$ controls the overlap between adjacent
experts: $\rho \to 1$ recovers a hard partition, while smaller $\rho$
yields wider blending regions. The plateau ratio values reported in our experiments use a different convention. They are computed as the plateau length divided by the bridge length, which equals $2\rho/(1-\rho)$.
In higher dimensions we can use a separable
router, $w_{\mathbf{i}}(\mathbf{x}) = \prod_{j=1}^d w_{i_j}^{(j)}(x_j)$,
which lets the decomposition act on a chosen subset of input coordinates. The router is normalized
to a partition of unity, $\phi_i(x) = w_i(x) / \sum_j w_j(x)$.
Visualizations of the bump construction and tiling are provided in
Appendix~\ref{appendix:architecture_details}.

\paragraph{Encoder.}\label{subsec:encoder}
The encoder maps raw input coordinates into a feature space suitable for
the MoE backbone. We use a periodic spatial embedding followed by a
trainable random-Fourier-feature
projection~\cite{tancik2020fourierfeaturesletnetworkshighfrequency}; the
exact form is given in Appendix~\ref{appendix:architecture_details}.

The NTK analysis suggests an additional refinement on the encoder side.
Centering each expert's input on its subdomain centroid produces a banded
NTK (\S\ref{subsec:method_ntk}). We carry the same idea into the encoder
by using a per-expert encoder that receives the input shifted by the
corresponding subdomain centroid $c_i$:
\begin{equation}
  z^{(0)} = \sum_{i=1}^{E} \phi_i(x^{\text{dd}}) \,
  g(x^{\text{dd}} - c_i, \, x^{\text{ndd}}),
\end{equation}
where $x^{\text{dd}}$ and $x^{\text{ndd}}$ denote the decomposed and
non-decomposed coordinates. We found that a centered and shared encoder outperforms both a centered but independent encoder and an uncentered shared encoder (Appendix~\ref{appendix:encoder_ablation}). We
attribute this to the encoder's role of producing a consistent
representation that the shared backbone can process; per-expert encoders
with independent random Fourier projections create a fragmented latent
space that disrupts this consistency. Centering and weight sharing are therefore both essential.

\section{Experiments}
\label{sec:experiments}

We evaluate Latent-MoE on three PDE benchmarks designed to stress
architectures with physics that varies vastly across the domain: an
advection--diffusion equation with time-dependent coefficients, a
damped wave equation with staged forcing, and a layered wave equation with a spatially layered wave speed. The first two stress multi-stage time-variable physics, while the layered wave probes a known spatial domain decomposition. All three have periodic spatial
boundary conditions to isolate the effect of the varying physics
from the complication from boundary-loss balancing. Reference solutions are obtained on a fine
grid; we report the relative $L^2$ error
$\|f_{\text{pred}} - f_{\text{true}}\|_2 / \|f_{\text{true}}\|_2$
evaluated on a regular grid.

\paragraph{Baselines.}
We compare against three baselines, all configured to approximately
equal total parameter count of the Latent-MoE.
\textbf{ResNet} is a widened residual network whose feedforward
bottlenecks are widened to reach the target parameter budget.
\textbf{FB-PINNs}~\cite{moseley2023finite} represents the rigid
domain-decomposition family and uses the same domain partitioning as
Latent-MoE (temporal on the time-variable benchmarks, spatial on the layered wave). \textbf{PirateNet}~\cite{wang2024piratenets} is a recent
state-of-the-art architecture for challenging PINN problems. We note that on the time-variable benchmarks PirateNet and FB-PINNs are
configured with slightly \emph{more} parameters than Latent-MoE
($\approx$550K vs $\approx$480K); the order-of-magnitude gap reported
below is therefore not attributable to capacity. Full
hyperparameters and training details are in
Appendix~\ref{appendix:experiment_details}.

\paragraph{Standard benchmarks.}\label{subsec:exp_standard}
To establish that Latent-MoE does not pay a cost on conventional PINN
problems, we first evaluate it on Burgers, Allen--Cahn and Korteweg--de Vries,
three canonical benchmarks with spatially and temporally homogeneous
physics from JAXPI\cite{wang2023expert}. Table~\ref{tab:benchmark_model_stats_main} reports the median relative $L^2$
errors across five seeds. All four architectures reach comparably low relative errors,
and the Latent-MoE model is competitive to the baseline model performance.
The localization machinery in Latent-MoE is benign in this regime: it
neither helps nor materially hurts when there is no cross-region gradient
conflict to resolve.

These results confirm that on problems where the physics is uniform
across the domain, dense global models are already adequate -- there is
no localization benefit to extract. The interesting question, and the
focus of the rest of this section, is what happens when this assumption
breaks down: when the governing operator or the forcing changes
qualitatively across regions of the domain, forcing a single network to
reconcile competing dynamics within a shared parameter set.
\subsection{Advection--diffusion with time-variable coefficients}
\label{subsec:exp_ad}

\begin{table}[t]
  \centering
  \renewcommand{\arraystretch}{1.15}
  \begin{tabular}{lcccc}
    \toprule
    Benchmark & ResNet & PirateNet & FB-PINNs & \textbf{Latent-MoE} \\
    \midrule
    Allen--Cahn            & $\mathbf{2.172 \times 10^{-5}}$ & $2.197 \times 10^{-5}$ & $2.735 \times 10^{-5}$ & $2.739 \times 10^{-5}$ \\
    KdV                    & $\mathbf{4.957 \times 10^{-5}}$ & $9.398 \times 10^{-5}$ & $8.754 \times 10^{-5}$ & $1.290 \times 10^{-4}$ \\
    Burgers                & $\mathbf{4.023 \times 10^{-5}}$ & $4.027 \times 10^{-5}$ & $4.126 \times 10^{-5}$ & $4.044 \times 10^{-5}$ \\
    \midrule
    Advection--Diffusion   & $0.553$ & $0.607$ & $0.373$ & $\mathbf{0.010}$ \\
    Damped Wave            & $0.734$ & $0.740$ & $0.741$ & $\mathbf{0.011}$ \\
    Layered Wave           & $0.147$ & $0.484$ & $0.066$ & $\mathbf{0.036}$ \\
    \bottomrule
  \end{tabular}
  \vspace{4pt}
  \caption{{\small Median relative $L^2$ error across benchmarks. All models are at comparable total parameter counts within the benchmark. Top block: homogeneous-physics benchmarks (5 seeds), where all architectures perform comparably. Bottom block: heterogeneous-physics benchmarks (5 seeds): multi-stage time-variable physics (advection--diffusion, damped wave), where Latent-MoE outperforms every baseline by more than an order of magnitude, and spatially-layered physics (layered wave), where the domain-decomposition models (FB-PINNs, Latent-MoE) lead the global baselines. Full mean$\pm$2\,std in Appendix~\ref{appendix:experiment_details}, Table~\ref{tab:benchmark_model_stats}; the high ResNet variance on damped wave reflects bimodal seed behavior, while PirateNet's tight $0.74$ on damped wave indicates a shared trivial-solution attractor.}}
  \label{tab:benchmark_model_stats_main}
\end{table}

\begin{figure}
  \centering
\noindent\makebox[\textwidth][c]{%
        \includegraphics[width=1.1\textwidth]{result.pdf}%
    }
  \caption{{\small Predictions and absolute errors across all three benchmarks. (a, b, c)
  Ground-truth solutions for advection--diffusion (AD), damped wave
  (DW), and layered wave (LW); dashed lines mark stage transitions or boundaries. (d, e, f) Latent-MoE predictions
  and pointwise errors. (g, h, i) Parameter-matched widened ResNet
  predictions and pointwise errors. ResNet collapses to a near-trivial
  solution after the first stage transition in both benchmarks with time-variable physics, and fails to learn the fine details brought by the layered medium in LW, while
  Latent-MoE preserves solution fidelity across stages and layers. Visualizations for FB-PINNs and PirateNet can be found in Appendix~\ref{appendix:experiment_details}.}}
  \label{fig:results_aggregate}
\end{figure}
Here we test how architectures handle a time-variable
\emph{differential operator} under zero forcing:
\begin{equation}
  u_t + c(t)\,u_x = \kappa(t)\,u_{xx},
  \qquad x \in [0, 1], \; t \in [0, 2],
\end{equation}
with periodic boundary conditions and the multi-mode initial condition
$u(x, 0) = \sin(2\pi x) + 0.7\sin(4\pi x) + \sin(6\pi x) + 0.25\sin(8\pi x)$.
The advection velocity $c(t)$ is a multi-frequency oscillation with a
linear drift, and the diffusion coefficient $\kappa(t)$ takes three
distinct values across the domain ($10^{-3}$, $5 \times 10^{-3}$,
$5 \times 10^{-2}$) joined by smooth $\tanh$ ramps at $t = 0.5$ and $t = 4/3$.
The full coefficient profiles are shown in
Appendix~\ref{appendix:experiment_details},
Figure~\ref{fig:adv_coefficients_appendix}.

Table~\ref{tab:benchmark_model_stats_main} reports relative $L^2$ errors
across five random seeds. Latent-MoE achieves a median error of
$1.0 \times 10^{-2}$, more than an order of magnitude below all
baselines. Figure~\ref{fig:results_aggregate}(g) shows that ResNet
preserves the early-time dynamics but collapses to a near-trivial solution
once the diffusion coefficient grows, whereas Latent-MoE tracks the
solution through all three stages. Per-seed distributions are in
Appendix~\ref{appendix:experiment_details}.

\subsection{Damped wave equation with staged forcing}
\label{subsec:exp_dw}

The second benchmark adds a time-variable \emph{forcing} on top of a
time-variable operator, testing both the homogeneous and inhomogeneous solution components:
\begin{equation}
  u_{tt} + 2\gamma(t)\,u_t - c^2 u_{xx} + a\,u_x = F(x, t),
  \qquad x \in [0, 1], \; t \in [0, 2],
\end{equation}
with $c = 5$, a small advection $a = 10$ to break axial symmetry of the
frequency signature, and periodic boundary conditions. The system has
three stages after the initial free propagation: a damping stage with $\gamma(t) = 5$ on $t \in [0.3, 0.9]$
and zero elsewhere; a forcing stage in which a resonant external drive
$F(x, t) =-250 \, r(t) \cos(4\pi x)\cos(20\pi t)$ is ramped on between
$t = 0.9$ and $t = 1.1$ and switched off at $t = 1.5$; and free evolution
afterward. The initial condition is
$u(x, 0) = \cos(2\pi x) + 0.5\cos(4\pi x) + 0.8\cos(6\pi x)$.

This benchmark elicits the spurious-solution failure mode characterized
by Wang et al.~\cite{wang2026pinnswrongpseudotimestepping}: ResNet,
PirateNet, and FB-PINNs all converge to near-trivial solutions
(Table~\ref{tab:benchmark_model_stats_main}, errors above 0.7 relative
$L^2$ for all three, with PirateNet's $\pm 0.007$ two-standard-deviation band
indicating that every seed reaches the same trivial-solution attractor).
Figure~\ref{fig:results_aggregate}(h) shows
that the parameter-matched ResNet fails to capture the resonant excitation
introduced by the forcing stage, while Latent-MoE recovers the correct
frequency signature throughout. We interpret this as evidence that the
shared-backbone-with-localized-experts structure provides enough
representational specialization to escape the trivial-solution basin that
single-network architectures fall into on stiff time-variable problems.
\subsection{Layered wave equation}
\label{subsec:exp_lw}

To test whether Latent-MoE's localization benefit extends to \emph{spatially} varying physics with known domain decomposition, we evaluate it on a pure wave equation with a spatially layered wave speed:
\begin{equation}
  u_{tt} = c(x)^2\, u_{xx}, \qquad x \in [0, 1),\; t \in [0, 1],
\end{equation}
where $c(x)$ is a periodic, piecewise-smooth function that takes four distinct values $\{2.0, 4.0, 0.5, 3.0\}$ across four spatial layers. These layers use the same bump functions defined in Eq.~\eqref{eq:bump_router} (\S \ref{subsec:architecture}), with a plateau ratio of $1.5$ (the plateau length divided by the bridge length) to render a smooth transition. The initial displacement $u(x,0)$ is a sinusoidal function whose local spatial frequency slightly adapts to the layer structure, with zero initial velocity $u_t(x,0)=0$; see Appendix~\ref{appendix:lw_details} for the full specification. Unlike the AD and DW benchmarks, where stage transitions are temporal and unknown to the model, the layered wave (LW) benchmark decomposes the \emph{spatial} axis. For FB-PINNs and Latent-MoE, we configure the domain-aware routers and the four experts to align exactly with these spatial layers. Additionally, we evaluate model performance when this architectural bias misaligns with the true physics, either by shifting the expert subdomains with various offsets or by incorrectly configuring the layer overlap ratio.

Table~\ref{tab:lw_ablation} reports median relative $L^2$ errors across three seeds. Thanks to its flexible capacity allocation, Latent-MoE does not require exact alignment with the physical subdomains; a coarse spatial ansatz suffices, significantly reducing implementation overhead. Interestingly, we find that for both MoE models, bump function routers with plateau ratio of $1$ outperforms the native bump function with plateau ratio of $1.5$ of the actual physics, implying that exact alignment in the domain decomposition perspective may not be optimal for the actual performance.

\begin{table}
  \centering
  \renewcommand{\arraystretch}{1.15}
    \begin{tabular}{lccccccc}
      \toprule
      & \multicolumn{3}{c}{Plateau ratio} & \multicolumn{4}{c}{Spatial shift} \\
      \cmidrule(lr){2-4}\cmidrule(lr){5-8}
      Value & 1.000 & \textbf{1.500} & 2.000 & \textbf{0.000} & 0.050 & 0.100 & 0.125 \\
      \midrule
      FBPINNs & 0.059 & 0.066 & 0.081 & 0.066 & 0.064 & 0.078 & 0.096 \\
      L-MoE & 0.025 & 0.036 & 0.065 & 0.036 & 0.062 & 0.042 & 0.038 \\
      \bottomrule
    \end{tabular}
  \vspace{4pt}
  \caption{{\small Median relative $L^2$ error on the layered-wave benchmark (3 seeds, except for the models with the exact domain decomposition, marked as bold values). Left block: plateau-ratio sensitivity at zero boundary offset, with ground truth ratio $1.5$. Right block: sensitivity to a rigid spatial offset $\delta$ of the bump-router boundaries from the physical layer transitions (plateau ratio fixed at $1.5$).}}
  \label{tab:lw_ablation}
\end{table}

\subsection{Robustness to stage-transition placement}
\label{subsec:exp_variable_t1}
\begin{table}
  \centering
  \renewcommand{\arraystretch}{1.15}
  \begin{tabular}{lccccccc}
    \toprule
    & \multicolumn{7}{c}{First stage transition $t_1$} \\ 
    \cmidrule(lr){2-8}
    & $0.1$ & $0.2$ & $0.3$ & $0.4$ & $0.5$ & $0.6$ & $0.7$ \\ 
    \midrule
    \multicolumn{8}{l}{\emph{Advection--Diffusion}} \\
    \textbf{Latent-MoE} & $\mathbf{0.002}$ & $\mathbf{0.002}$ & $\mathbf{0.004}$ & $\mathbf{0.006}$ & $\mathbf{0.023}$ & $\mathbf{0.013}$ & $\mathbf{0.005}$ \\
    ResNet & $0.421$ & $0.306$ & $0.509$ & $0.356$ & $0.531$ & $0.593$ & $0.477$ \\
    FB-PINNs & $0.192$ & $0.239$ & $0.288$ & $0.424$ & $0.409$ & $0.403$ & $0.393$ \\
    PirateNet & $0.413$ & $0.476$ & $0.503$ & $0.526$ & $0.551$ & $0.554$ & $0.608$ \\
    \midrule
    \multicolumn{8}{l}{\emph{Damped Wave}} \\
    \textbf{Latent-MoE} & $\mathbf{0.013}$ & $\mathbf{0.026}$ & $\mathbf{0.015}$ & $\mathbf{0.012}$ & $\mathbf{0.037}$ & $\mathbf{0.030}$ & $\mathbf{0.028}$ \\
    ResNet & $0.890$ & $0.818$ & $0.658$ & $0.430$ & $0.572$ & $0.478$ & $0.706$ \\
    FB-PINNs & $0.877$ & $0.803$ & $0.748$ & $0.655$ & $0.664$ & $0.664$ & $0.724$ \\
    PirateNet & $0.890$ & $0.819$ & $0.737$ & $0.664$ & $0.580$ & $0.548$ & $0.827$ \\
    \bottomrule
  \end{tabular}
  \vspace{4pt}
  \caption{{\small Median relative $L^2$ error across 3 seeds as the first
  stage-transition time $t_1$ is shifted, while keeping inter-stage gaps
  fixed. The Latent-MoE temporal subdomains are held fixed across all
  $t_1$ values, so most settings deliberately misalign the physical
  staging with the architectural decomposition. Latent-MoE remains in
  the $10^{-3}$--$10^{-2}$ regime throughout.}}
  \label{tab:variable_median_l2}
\end{table}

A natural concern with these benchmarks is that the stage transitions
might happen to align with the Latent-MoE subdomain boundaries, in which
case the method would benefit from a hand-chosen partition. To rule this
out, we vary the first transition time $t_1$ over the range
$[0.1, 0.7]$ -- which deliberately misaligns it with the fixed temporal
subdomain boundaries used by Latent-MoE.
Table~\ref{tab:variable_median_l2} reports median relative $L^2$ error
across three seeds for each $t_1$ value.\footnote{The $t_1 = 0.5$ column for advection--diffusion and the $t_1 = 0.3$ column for damped wave coincide with the default benchmarks in Table~\ref{tab:benchmark_model_stats_main}. The medians differ slightly because this sweep uses three seeds instead of five.} Latent-MoE remains in the
$10^{-3}$--$10^{-2}$ range across all settings, while ResNet errors stay
above $0.3$. Appendix~\ref{sec:variable_stage_transition} provides visualizations of the windowed relative $L^2$-error for the 4 models.

\subsection{Gradient conflict and the localization hypothesis}
\label{subsec:exp_gradient_conflict}

\begin{wrapfigure}[13]{r}{0.50\textwidth}
  \vspace{-1.2em}
  \centering
  \begin{subfigure}[t]{0.48\linewidth}
    \centering
    \includegraphics[width=\linewidth]{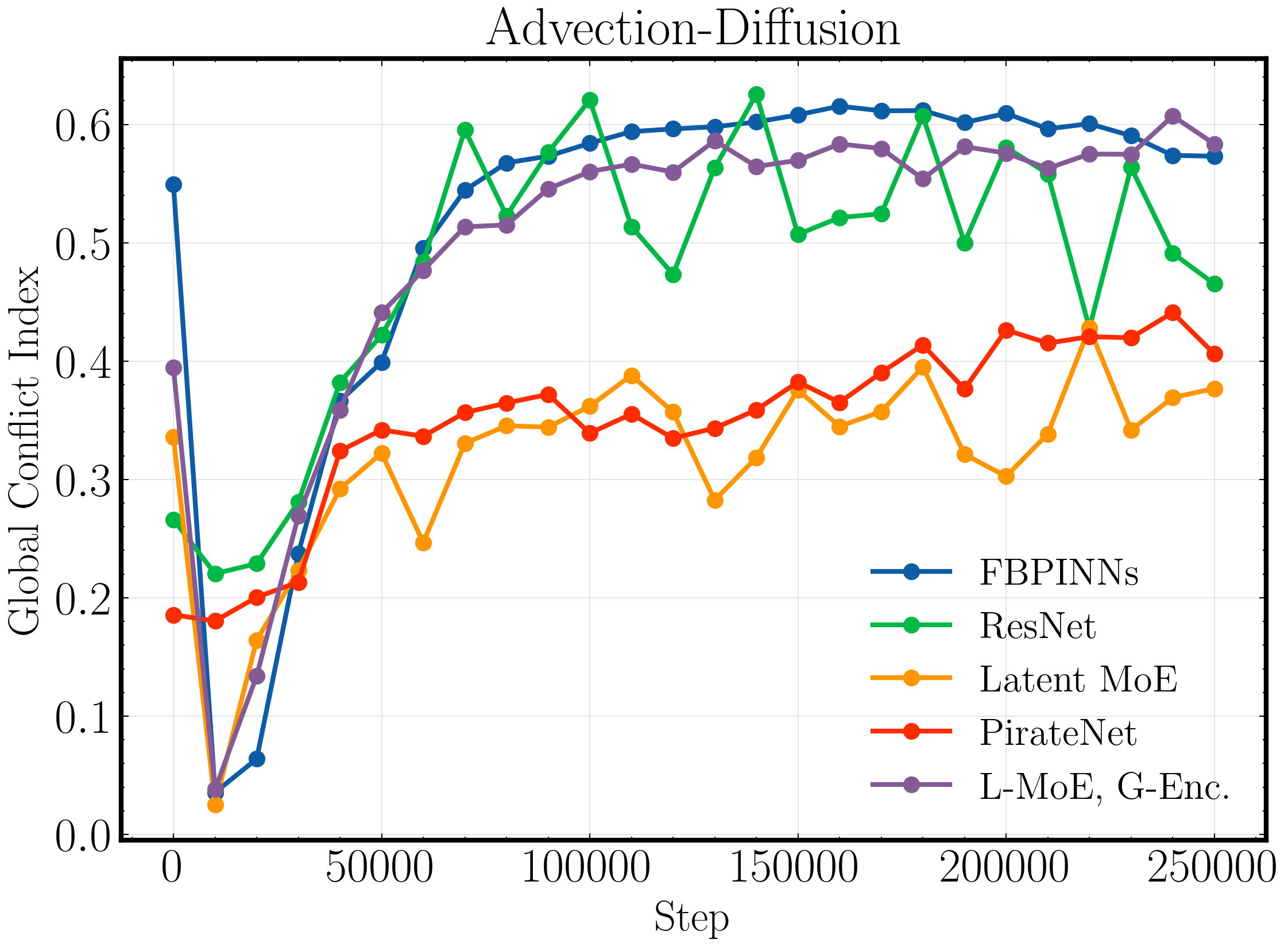}
    \caption{Advection--diffusion}
  \end{subfigure}
  \hfill
  \begin{subfigure}[t]{0.48\linewidth}
    \centering
    \includegraphics[width=\linewidth]{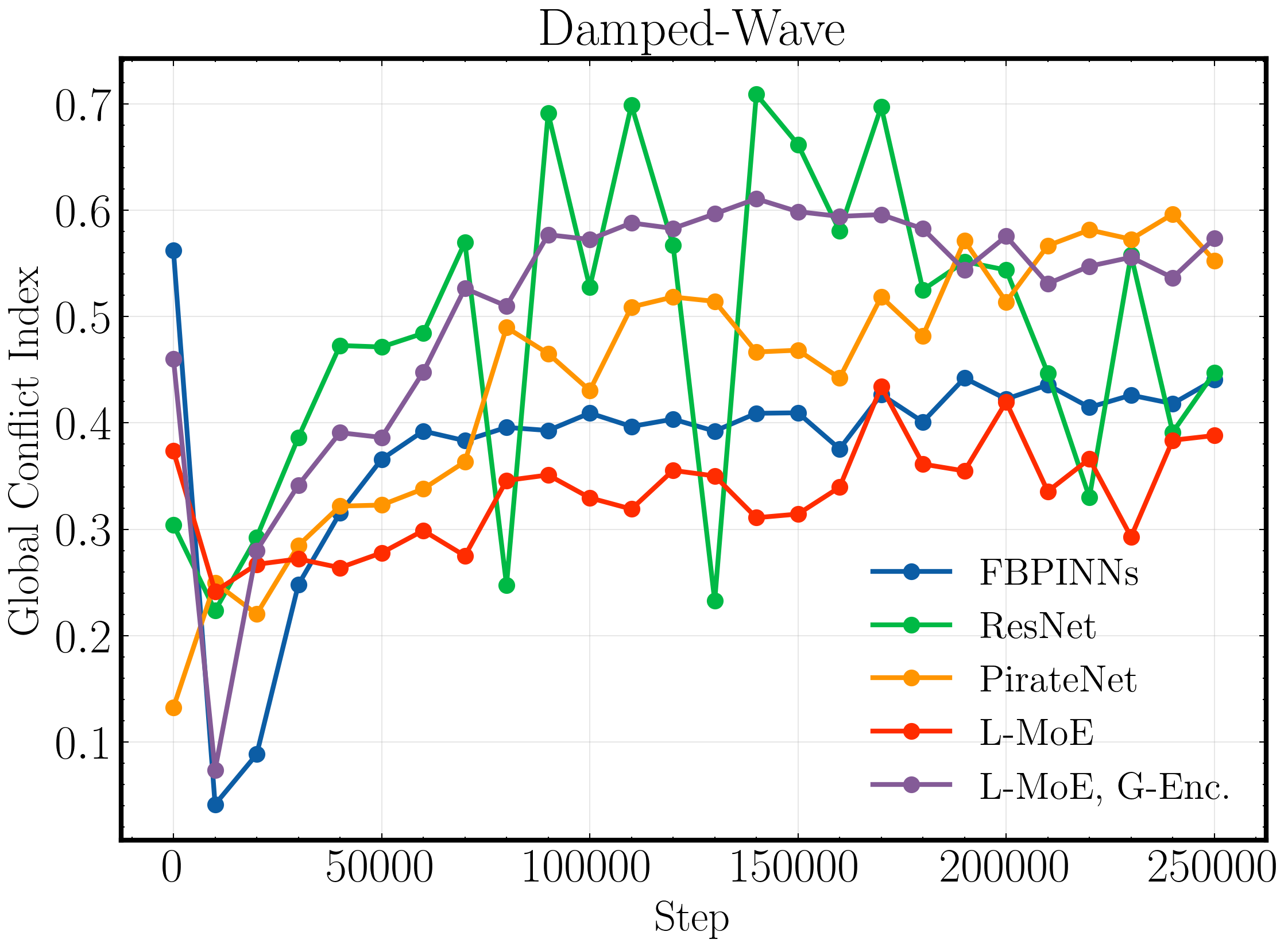}
    \caption{Damped wave}
  \end{subfigure}
  \caption{{\small Gradient-conflict index over training. Lower is better. L-MoE G-Enc. stands for Latent-MoE with global encoder that is not expert-centered. Latent-MoE consistently exhibits the least cross-chunk cancellation.}}
  \label{fig:grad_conflict_curves_main}
\end{wrapfigure}

Our central claim is that domain-aware routing eliminates the long-range
coupling between training
points that pits gradients from different
regions against each other. We test this directly by measuring gradient
conflict during training. Following the gradient-alignment perspective
of Wang et al.~\cite{wang2026gradient}, we partition the temporal domain
into $c$ non-overlapping chunks (we use $c=8$; see 
Appendix~\ref{app:gradient_conflict_index}), compute the residual-loss Jacobian
$J_i \in \mathbb{R}^p$ on each chunk, and define the
\emph{gradient conflict index}
\begin{equation}
  G(J) = 1 - \frac{\left\|\sum_i J_{i, \cdot}\right\|}
                  {\sum_i \left\|J_{i, \cdot}\right\|}.
  \label{eq:grad_conflict}
\end{equation}
$G = 0$ corresponds to perfectly aligned gradients across chunks; values
near $1$ indicate that gradients from different temporal regions cancel
each other during the optimization step
(Appendix~\ref{app:gradient_conflict_index} gives an asymptotic NTK
interpretation).

Figure~\ref{fig:grad_conflict_curves_main} shows $G(J)$ throughout
training for all four architectures and the global-encoder ablation on both benchmarks. Latent-MoE
maintains lower conflict than the dense baselines (ResNet, PirateNet) on both problems, with a clear margin on the damped wave and a smaller margin over PirateNet on advection--diffusion, consistent with its localized NTK structure.
FB-PINNs registers high $G$ on advection--diffusion as well, but for a structurally different
reason: its experts have disjoint parameters, so the per-chunk Jacobians
$J_i$ have nearly disjoint support and the metric mechanically inflates
toward $1 - 1/\sqrt{c}$ regardless of training dynamics. The diagnostic distinguishes Latent-MoE not just from globally coupled models
but also from rigid decompositions: the shared backbone is what enables
genuine gradient alignment across regions, rather than mere isolation.
Replacing the per-expert centered encoder with a single global encoder
(L-MoE G-Enc, Fig.~\ref{fig:grad_conflict_curves_main}) raises the
gradient-conflict index toward the dense-baseline range. This isolates
\emph{centering}, not MoE structure alone, as the mechanism that
suppresses cross-chunk gradient cancellation -- in direct agreement with
Theorem~\ref{thm:moe_decay}.

\section{Discussion}
\label{sec:discussion}

We have proposed Latent-MoE, a PINN architecture that interleaves
domain-aware mixture-of-experts blocks within a shared backbone. The
design is motivated by an NTK analysis: standard coordinate networks
have translation-variant kernels that produce long-range coupling
between training points, and centered compact-support routing localizes
learning by yielding a banded NTK whose kernel-regression weights decay
exponentially with distance. The shared backbone preserves the
flexibility to allocate capacity across regions, which we identify as
the central limitation of rigidly partitioned approaches such as
FB-PINNs.

The benefit is regime-specific. On standard PINN benchmarks with
homogeneous physics, Latent-MoE matches established baselines: when
there is no cross-region gradient conflict to resolve, the localization
machinery does not help. The advantage emerges when the governing
equations or forcing change qualitatively across the domain -- the
regime in which the global baselines we tested (ResNet, PirateNet) collapse to spurious solutions
while Latent-MoE preserves the dynamics, with reduced gradient conflict
throughout training. When the heterogeneity is spatial and the domain decomposition is known (the layered wave benchmark), rigid decomposition (FB-PINNs) is also effective; yet Latent-MoE remains competitive without requiring its experts to align exactly with the physical layers, indicating that a coarse spatial ansatz suffices. We view this scope -- problems whose physics is
well-defined but varies vastly across time or space -- as the natural
target of the architecture, with practical relevance to wave
propagation through layered media, reactive flows with
temperature-dependent kinetics, and fluid systems with time-variable
forcing.

\paragraph{Limitations.}
Our quantitative claims rest on three synthetic benchmarks designed to
isolate heterogeneous physics -- two with multi-stage time-variable physics and one with a spatially layered wave speed -- supplemented by Burgers,
Allen--Cahn and KdV as homogeneous-physics controls. The variable-$t_1$
and gradient-conflict experiments mitigate cherry-picking concerns, but
evaluation on real engineering or geophysical problems remains the
natural next step. The router supports are also fixed bump functions
tiled along a single (temporal or spatial) axis; adaptive decomposition driven by the
residual landscape~\cite{botvinickgreenhouse2025abpinnsadaptivebasisphysicsinformedneural,
wang2026adaptive}, joint spatio-temporal partitioning, and
non-uniform per-expert capacity are straightforward extensions we have
not pursued. Finally, the NTK analysis assumes infinite width and
squared-error loss; the gradient-conflict diagnostic provides a
training-centric proxy that tracks the analytical predictions on our
benchmarks, but a finite-width characterization under the residual loss
would close a real gap.

\paragraph{Broader Outlook.}
The broader observation underlying this work is that architectural
machinery developed for scaling language models -- sparse activation,
expert routing, interleaved specialization -- can be repurposed for
scientific computing when the routing is informed by physical structure
rather than learned from scratch, expanding the range of multi-regime
problems where PINNs are a viable modeling tool.



\begin{ack}
This work is supported by the U.S. Department of Energy (DOE), Office of Advanced Scientific Computing Research (ASCR) under award number DE-SC0024563.
\end{ack}

\newpage
\bibliographystyle{unsrt}
\bibliography{references}
\newpage
\include{appendix}





\end{document}

%% file: appendix.tex
\appendix

\section{Neural Tangent Kernel Theory}\label{appendix:relu_ntk_analytic}
\subsection{Preliminaries}
Without loss of generality, we only consider single-output neural networks without residual connections; however, the convergence results also apply to multi-output networks and architectures with residual connections\cite{barzilai2023kernelperspectiveskipconnectionsconvolutional, belfer2024spectralanalysisneuraltangentkernelresidualnetworks}. For an $L$-layer fully connected neural network $f(\bm{x}; \bm{\theta}): \mathbb{R}^{d} \rightarrow \mathbb{R}$ with parameters $\bm{\theta}$, let the layer input be $x^{(l)}$ for $l=0,\ldots,L$, where $x^{(0)}$ is the network input and $x^{(L)}$ is the post-activation output of the last hidden layer. Let the weight matrix and bias vector at layer $l$ be $W^l\in\mathbb R^{n_l\times n_{l+1}}$ and $b^l\in\mathbb R^{n_{l+1}}$, respectively, where $n_l$ is the width of layer $l$. The forward pass is defined as
\begin{align}\label{def:ffl}
\begin{split}
  &h^{l} = \frac{\sigma_w^l}{\sqrt{n_l}} x^l W^l + \sigma_b^l b^l,\\
  &x^{l+1} = \sigma(h^{l}), \quad l = 0,...,L-1,\\
  &f(x; \bm{\theta}) = h^{L} =  \frac{\sigma_w^L}{\sqrt{n_{L}}}x^{L}W^{L} + \sigma_b^L b^{L},
\end{split}
\end{align}
where $h^l$ is the pre-activation at layer $l$, $\sigma(\cdot)$ is the activation function, and $s\sigma_w^l, \sigma_b^l$ are scales of the weight and bias, which themselves are initialized as i.i.d Gaussian. The last layer is a linear readout layer and therefore has no activation. NTK analysis assumes i.i.d. weight and bias initialization. The collection of all parameters is denoted by $\bm{\theta} = \{W^l, b^l\}_{l=0}^{L}$. The neural tangent kernel $K$ is defined as the inner product of Jacobians at two input points over all parameters:
\begin{equation}
  K_{\bm\theta}(\bm{x}, \bm{x}') = \nabla_{\bm{\theta}} f(\bm{x}; \bm{\theta})^\top \nabla_{\bm{\theta}} f(\bm{x}'; \bm{\theta}).
\end{equation}
Given a training dataset $X\in\mathbb {R}^{N\times d}$ and corresponding labels $Y\in\mathbb{R}^{N}$, the empirical NTK matrix is defined as
\begin{equation}
  K_{\bm\theta}(X, X)_{i,j} = K_{\bm\theta}(x_i, x_j),
\end{equation}
that is, $K_{\bm\theta}(X, X) = \nabla_{\bm{\theta}} f_{\bm{\theta}}(X)^\top \nabla_{\bm{\theta}} f_{\bm{\theta}}(X)$.

We assume that the continuous time gradient descent training follows the dynamics
\begin{equation}
  \dot{\bm{\theta}} = - \eta\nabla_{\bm{\theta}} \mathcal{L}(\bm{\theta}),
\end{equation}
where $\eta$ is the learning rate (equivalently, the time-scaling factor in continuous time), and $\mathcal{L}(\bm{\theta})$ is the summed squared-error loss\footnote{The analysis can be extended to cross-entropy loss as well \cite{jacot2018neuraltangentkernelconvergence}.}.
\begin{equation}
  \mathcal{L}(\bm{\theta}) = \frac{1}{2}\sum_{i=1}^{N}(f(x_i; \bm{\theta}) - y_i)^2.
\end{equation}
Under such training dynamics, the evolution of the network parameters and its predictions on the training set can be described by the following ODE system, linearized around the initialization $\bm{\theta}(0)$:
\begin{align}
  &\dot{\bm{\theta}} = - \eta \nabla_{\bm{\theta}} f_{\bm{\theta}}(X)^\intercal (f(X; \bm{\theta}) - Y),\\
  &\dot{f}(X; \bm{\theta}) = \nabla_{\bm \theta} f(X) \dot{\bm{\theta}} = -\eta K_{\bm{\theta}}(X, X)(f(X; \bm{\theta}) - Y).\label{eq:ntk_ode}
\end{align}
The NTK theory states that under certain conditions, as the width of each layer $n_l \rightarrow \infty$, the NTK $K_{\bm{\theta}}$ converges to a deterministic kernel matrix $K$ and remains constant during the training with continuous gradient descent dynamics. We denote the limiting kernel as $K$. Therefore, the training dynamics of the network predictions can be obtained at the initialization as
\begin{equation}
  \dot{f}(X; \bm{\theta}) = -\eta K(X, X)(f(X; \bm{\theta}) - Y),
\end{equation}
and the solution to the ODE is given as
\begin{equation}
  f(X; \bm{\theta}(t))- Y = e^{-\eta K(X, X) t}(f(X; \bm{\theta}(0)) - Y),
\end{equation}
One prominent property of the NTK is translation variance, which implies uneven learning of the target function across the input domain. We use a simple one-hidden-layer fully connected ReLU network to illustrate this property. General recursion formulas for NNGP\cite{lee2017deep} and NTK\cite{jacot2018neuraltangentkernelconvergence,lee2019wide} can be found in the corresponding literature.
\begin{theorem}[Recursive NTK formula for Fully Connected Networks]\label{thm:ntk_recursive}
  Let the neural tangent kernel at layer $L$ be denoted as $K^L(x, x')$, and the covariance matrix of the pre-activation at layer $L$ be denoted as $\Sigma^L(x, x')$, and the covariance matrix of the pre-activation derivatives at layer $L$ be denoted as $\dot{\Sigma}^L(x, x')$, then the NTK at layer $L$ can be computed recursively as
  $$K^{L+1}(x, x') = K^{L}(x, x')\dot{\Sigma}^{L+1}(x, x') + \Sigma^{L+1}(x, x'),$$
  where $K^{1}(x,x') = (\sigma_b^0)^2 + \frac{(\sigma_w^0)^2}{d} xx'$ and $d$ is the input dimension.
\end{theorem}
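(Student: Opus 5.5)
We argue by induction on depth, splitting the gradient into the parameters of the newest layer and all earlier parameters, and taking widths to infinity sequentially (first $n_1,\dots,n_{L}$, then $n_{L+1}$), in the standard way of~\cite{jacot2018neuraltangentkernelconvergence,lee2019wide}.

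\emph{Conventions.} We take the covariances to absorb the weight and bias scales:
\[
\Sigma^{L+1}(x,x')=(\sigma_b^{L})^2+(\sigma_w^{L})^2\,\mathbb E[\sigma(u)\sigma(v)],
\qquad
\dot\Sigma^{L+1}(x,x')=(\sigma_w^{L})^2\,\mathbb E[\sigma'(u)\sigma'(v)].
\]
Here $(u,v)$ is a centered Gaussian pair with covariance given by $\Sigma^{L}$ evaluated on $\{x,x'\}$. The base of the recursion is $\Sigma^1(x,x')=(\sigma_b^0)^2+\tfrac{(\sigma_w^0)^2}{d}\,x x'$.

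\emph{Base case.} The first pre-activation $h^0=\tfrac{\sigma_w^0}{\sqrt d}\,xW^0+\sigma_b^0 b^0$ is linear in $(W^0,b^0)$. For each coordinate $k$, the gradient inner product is exactly
\[
\tfrac{(\sigma_w^0)^2}{d}\,x x'+(\sigma_b^0)^2=K^1(x,x').
\]
This holds deterministically, with no limit needed. The same computation shows that $h^0_k$ is a centered Gaussian process with covariance $\Sigma^1$, i.i.d.\ across $k$.

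\emph{Inductive step.} Assume every coordinate $h^{L-1}_k$ converges to an i.i.d.\ Gaussian process with covariance $\Sigma^{L}$. Assume also that its NTK with respect to $\theta_{<L}$ converges to $K^{L}(x,x')\delta_{kk'}$. Write $h^{L}_j=\tfrac{\sigma_w^L}{\sqrt{n_L}}\sum_k \sigma(h^{L-1}_k)W^L_{kj}+\sigma_b^L b^L_j$, and split the kernel into two parts.

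\begin{enumerate}
\item \emph{Gradient with respect to $(W^L,b^L)$.} This part equals
\[
\tfrac{(\sigma_w^L)^2}{n_L}\sum_k\sigma(h^{L-1}_k(x))\,\sigma(h^{L-1}_k(x'))+(\sigma_b^L)^2 .
\]
By the law of large numbers over the $n_L$ i.i.d.\ coordinates, it converges to $\Sigma^{L+1}(x,x')$.

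\item \emph{Gradient with respect to $\theta_{<L}$.} By the chain rule this part equals
\[
\tfrac{(\sigma_w^L)^2}{n_L}\sum_{k,k'}W^L_{kj}W^L_{k'j}\,\sigma'(h^{L-1}_k(x))\,\sigma'(h^{L-1}_{k'}(x'))\,\Theta_{kk'}(x,x'),
\]
where $\Theta_{kk'}$ is the previous-layer NTK.
\end{enumerate}

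For term 2, substitute the inductive limit $\Theta_{kk'}\to K^{L}\delta_{kk'}$. This kills the cross terms, and what remains is
\[
K^{L}(x,x')\,\tfrac{(\sigma_w^L)^2}{n_L}\sum_k (W^L_{kj})^2\,\sigma'(h^{L-1}_k(x))\,\sigma'(h^{L-1}_k(x')).
\]
Since $W^L$ is independent of $h^{L-1}$, the law of large numbers gives the limit $K^{L}(x,x')\,\dot\Sigma^{L+1}(x,x')$.

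Adding the two parts gives the stated recursion for the diagonal entries $j=j'$. Off-diagonal entries $j\neq j'$ vanish in the limit because $W^L_{kj}$ and $W^L_{kj'}$ are independent with mean zero. Finally, the Gaussianity of $h^{L}$ with covariance $\Sigma^{L+1}$ follows from the central limit theorem applied conditionally on $h^{L-1}$. This closes the induction.

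\emph{Main obstacle.} The delicate step is justifying the substitution $\Theta_{kk'}\to K^{L}\delta_{kk'}$ inside a sum of $n_L^2$ terms. The off-diagonal entries of $\Theta$ are only $O(1/\sqrt{n})$ individually, but there are $n_L^2$ of them, so their sum must be controlled. The sequential limit resolves this: the previous widths go to infinity first, so $\Theta$ becomes exactly diagonal before $n_L$ grows. If a simultaneous limit were required, I would instead use the tensor-program or Gaussian-conditioning arguments of~\cite{lee2019wide}. For ReLU, $\sigma'$ is discontinuous, so one needs the mild fact that $h^{L-1}$ has no atom at zero, which holds whenever $\sigma_b>0$ or $x\neq 0$.
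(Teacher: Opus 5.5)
Your proposal is correct and matches the paper's approach. The paper's proof only cites Section~4.1 of Jacot et al.~\cite{jacot2018neuraltangentkernelconvergence}, and that proof is this same induction on depth with sequential width limits: the newest layer's parameters give $\Sigma^{L+1}$ by the law of large numbers, and the earlier parameters give $K^{L}\dot\Sigma^{L+1}$ once $\Theta_{kk'}\to K^{L}\delta_{kk'}$. One cosmetic inconsistency: your preamble's limit order should read $n_1,\dots,n_{L-1}$ first and then $n_L$, which is what your inductive step actually uses, since $n_{L+1}$ is the output dimension of $h^{L}$ and is never sent to infinity.
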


\begin{proof}
  See section 4.1 of {\it Jacot et al.} \cite{jacot2018neuraltangentkernelconvergence}.
\end{proof}
In this corollary, we provide the exact closed-form expression for the NTK of a two-layer ReLU network with bias, which is used in the main text to analyze NTK locality and the effect of bias on kernel structure. The proof technique is based on the well-known arc-cosine kernel\cite{cho2009kernel}.
\begin{corollary}[Analytic NTK for Single-Layer ReLU Networks]\label{cor:relu_ntk_analytic}
Consider a one hidden layer neural network defined above with ReLU activation $\sigma(z) = \max(0, z)$, unit Gaussian initialization for weights $w_i^{(0)}$ and $w_i^{(1)}$, and Normal bias initialization with zero mean and standard deviation $\sigma_b$.\footnote{Due to the scale-invariance of the ReLU activation, the initialization variance offset for the inner and outer weights can be absorbed into the bias initialization.} That is, the network is defined as
\begin{equation}
  f(x) = \sigma(xw^{(0)} + \sigma_b b)w^{(1)},
\end{equation}
its limiting NTK $K(x,y)$ is given by
\begin{align}
  &K(x,y) = \frac{\sigma_b |x-y|}{2\pi} + \left(xy + \left(\sigma_b\right)^2\right)\frac{\pi - \arccos{\rho}}{\pi},\\
  &\rho = \frac{xy + \left(\sigma_b\right)^2}{\sqrt{\left(x^2 + \left(\sigma_b\right)^2\right)\left(y^2 + \left(\sigma_b\right)^2\right)}}.
\end{align}
\end{corollary}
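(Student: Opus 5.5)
The plan is to compute the infinite-width limit of $K_{\bm\theta}(x,y)=\nabla_{\bm\theta}f(x)^\top\nabla_{\bm\theta}f(y)$ directly. I will use the NTK parametrization of Eq.~\eqref{def:ffl}, so $f(x)=\frac{1}{\sqrt n}\sum_{k=1}^n \sigma(u_k(x))\,w^{(1)}_k$ with pre-activations $u_k(x)=x w^{(0)}_k+\sigma_b b_k$. The gradient splits into three parameter groups: the outer weights $w^{(1)}$, the inner weights $w^{(0)}$, and the biases $b$. Each group contributes an average over $k$ of i.i.d.\ terms. By the law of large numbers, as $n\to\infty$ each average converges to an expectation over a single neuron. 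Equivalently, this is Theorem~\ref{thm:ntk_recursive} applied with one hidden layer. This gives
\begin{equation*}
K(x,y)=\mathbb{E}\big[\sigma(u)\sigma(v)\big]+\big(xy+\sigma_b^2\big)\,\mathbb{E}\big[(w^{(1)})^2\big]\,\mathbb{E}\big[\mathbbm{1}\{u>0\}\mathbbm{1}\{v>0\}\big].
\end{equation*}
Here $(u,v)=(x w^{(0)}+\sigma_b b,\;y w^{(0)}+\sigma_b b)$ is a centered Gaussian pair. Its covariance entries are $\Sigma_{uu}=x^2+\sigma_b^2$, $\Sigma_{vv}=y^2+\sigma_b^2$ and $\Sigma_{uv}=xy+\sigma_b^2$. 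The derivatives with respect to $w^{(0)}_k$ and $b_k$ produce the factors $x$ and $\sigma_b$ times $w^{(1)}_k\mathbbm{1}\{u_k>0\}$. This is why the two terms combine into the prefactor $xy+\sigma_b^2$. Also $\mathbb{E}[(w^{(1)})^2]=1$, and $w^{(1)}$ is independent of $(u,v)$.

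Next I evaluate both Gaussian expectations with the arc-cosine kernels of Cho and Saul~\cite{cho2009kernel}. Let $\theta=\arccos\rho$, where $\rho=\Sigma_{uv}/\sqrt{\Sigma_{uu}\Sigma_{vv}}$. The degree-zero arc-cosine kernel gives $\mathbb{E}[\mathbbm{1}\{u>0\}\mathbbm{1}\{v>0\}]=\frac{\pi-\theta}{2\pi}$. The degree-one kernel gives $\mathbb{E}[\sigma(u)\sigma(v)]=\frac{\sqrt{\Sigma_{uu}\Sigma_{vv}}}{2\pi}\big(\sin\theta+(\pi-\theta)\cos\theta\big)$. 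If I want to avoid citing these, I would rederive them by writing $(u,v)$ in polar coordinates after whitening and integrating over the wedge where both coordinates are positive. This is routine.

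Now I collect terms. Since $\sqrt{\Sigma_{uu}\Sigma_{vv}}\cos\theta=xy+\sigma_b^2$, the $(\pi-\theta)\cos\theta$ part of the degree-one kernel equals the derivative term. Together they give $(xy+\sigma_b^2)\frac{\pi-\theta}{\pi}$.

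The only genuinely non-obvious step is simplifying the remaining term $\frac{1}{2\pi}\sqrt{\Sigma_{uu}\Sigma_{vv}}\sin\theta$. I write it as $\frac{1}{2\pi}\sqrt{\Sigma_{uu}\Sigma_{vv}-\Sigma_{uv}^2}$ and expand:
\begin{equation*}
(x^2+\sigma_b^2)(y^2+\sigma_b^2)-(xy+\sigma_b^2)^2=\sigma_b^2(x-y)^2 .
\end{equation*}
Taking the square root gives $\sigma_b|x-y|$, which is exactly the translation-invariant term $\frac{\sigma_b|x-y|}{2\pi}$ in the corollary.

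I expect the main care point to be bookkeeping rather than analysis. The factors of $\sigma_b$ from the bias derivative must match the convention that $b$ is unit Gaussian scaled by $\sigma_b$. The infinite-width scaling must also be consistent, as the footnote notes that weight scales can be absorbed into the bias. A last check is the degenerate case $x=y$ with $\sigma_b=0$ at the origin, where $\rho$ is undefined. This case is excluded by $\sigma_b>0$, which makes $\Sigma_{uu},\Sigma_{vv}>0$.
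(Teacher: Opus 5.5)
Your proposal is correct and follows essentially the same route as the paper. Both arguments split the kernel into outer-weight, inner-weight and bias contributions, give the term $P(u>0,v>0)=(\pi-\arccos\rho)/(2\pi)$ the prefactor $xy+\sigma_b^2$, and finish with the same cancellation. There are two small differences. You take $\mathbb{E}[\sigma(u)\sigma(v)]$ from the degree-one arc-cosine kernel, whereas the paper derives it via Price's theorem by integrating the degree-zero expression in $\rho$ with boundary value $1/2$ at $\rho=1$. You also make explicit the identity $\Sigma_{uu}\Sigma_{vv}-\Sigma_{uv}^2=\sigma_b^2(x-y)^2$, which the paper leaves as ``simplifying.''
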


\begin{proof}

Let $z_i = w^{(0)}_i x + b_i$ and $z'_i = w^{(0)}_i y + b_i$. Since the weights and biases are initialized i.i.d. from $\mathcal{N}(0,1)$, we have that $z_i$ and $z'_i$ are jointly Gaussian with zero mean and covariance matrix $\Sigma$, where $\Sigma_{11} = x^2 + \left(\sigma_b\right)^2$, $\Sigma_{22} = y^2 + \left(\sigma_b\right)^2$, and $\Sigma_{12} = \Sigma_{21} = xy + \left(\sigma_b\right)^2$, and the correlation coefficient
\begin{align}
  \rho = \frac{xy + \left(\sigma_b\right)^2}{\sqrt{\left(x^2 + \left(\sigma_b\right)^2\right)\left(y^2 + \left(\sigma_b\right)^2\right)}}.
\end{align}
We first derive NTK component for the first layer weights $w^{(0)}$ at infinite width. By the Strong Law of Large Numbers, as $n\to\infty$,
\begin{align}
  \langle \nabla_{w^{(0)}} f(x), \nabla_{w^{(0)}} f(y) \rangle &= \mathbb{E}_{z, z'}\left[\dot\sigma(z)\dot\sigma(z') w^{(1)}_i w^{(1)} x y\right],\\
  &= xy \mathbb{E}_{z, z'}\left[\dot\sigma(z)\dot\sigma(z')\right].
\end{align}
Similarly for the first layer bias $b$, as $n\to\infty$,
\begin{align}
  \langle \nabla_{b} f(x), \nabla_{b} f(y) \rangle &= \mathbb{E}_{z, z'}\left[\dot\sigma(z)\dot\sigma(z') w^{(1)}_i w^{(1)}\right],\\
  &= \left(\sigma_b\right)^2\mathbb{E}_{z, z'}\left[\dot\sigma(z)\dot\sigma(z')\right].
\end{align}
Given that the $\dot\sigma$ is a step function from $0$ to $1$ at $0$, $\dot\sigma(z)\dot\sigma(z')$ is an indicator function of the event that $z>0$ and $z'>0$. Therefore, we can geometrically compute that
\begin{align}
  \mathbb{E}_{z, z'}\left[\dot\sigma(z)\dot\sigma(z')\right] &= \mathbb{P}(z>0, z'>0)\\
  &= \frac{\pi - \arccos(\rho)}{2\pi}.
\end{align}
We then derive the NTK component for the second layer weights $w^{(1)}$ at infinite width.
\begin{align}
  \langle \nabla_{w^{(1)}} f(x), \nabla_{w^{(1)}} f(y) \rangle &= \mathbb{E}_{z, z'}\left[\sigma(z)\sigma(z')\right],\\
  &= \sqrt{\Sigma_{11}\Sigma_{22}} \mathbb{E}_{z, z'}\left[\sigma\left(\frac{z}{\sqrt{\Sigma_{11}}}\right)\sigma\left(\frac{z'}{\sqrt{\Sigma_{22}}}\right)\right].
\end{align}
Let $U, V$ be two standard Gaussian random variables with correlation coefficient $\rho$, then
\begin{align}
  \mathbb{E}_{z, z'}\left[\sigma(z)\sigma(z')\right] &= \sqrt{\Sigma_{11}\Sigma_{22}} \mathbb{E}_{U, V}\left[\sigma\left(U\right)\sigma\left(V\right)\right].
\end{align}
Using Price's theorem, we compute the derivative of $\mathbb{E}_{U, V}\left[\sigma\left(U\right)\sigma\left(V\right)\right]$ with respect to $\rho$ as
\begin{align}
  \frac{\partial}{\partial \rho} \mathbb{E}_{U, V}\left[\sigma\left(U\right)\sigma\left(V\right)\right] &= \mathbb{E}_{U, V}\left[\dot\sigma\left(U\right)\dot\sigma\left(V\right)\right],\\
  &= \frac{\pi - \arccos(\rho)}{2\pi}.
\end{align}
We can then integrate the above equation with respect to $\rho$ to obtain
\begin{align}
  \mathbb{E}_{U, V}\left[\sigma\left(U\right)\sigma\left(V\right)\right] &= \int_0^\rho \frac{\pi - \arccos(t)}{2\pi} dt + C,\\
  &= \frac{1}{2\pi}\left(\rho \pi + \sqrt{1-\rho^2} - \rho \arccos(\rho)\right) + C.
\end{align}
Using the boundary condition of $\mathbb{E}_{U, V}\left[\sigma\left(U\right)\sigma\left(V\right)\right] = \frac{1}{2}$ when $\rho=1$, we can solve for the constant $C$ and derive the final expression for $\mathbb{E}_{U, V}\left[\sigma\left(U\right)\sigma\left(V\right)\right]$ as
\begin{align}
  \mathbb{E}_{U, V}\left[\sigma\left(U\right)\sigma\left(V\right)\right] &= \frac{1}{2\pi}\left(\sqrt{1-\rho^2} + \rho(\pi - \arccos(\rho))\right).\label{eq:ntk_EUV}
\end{align}

Summing up the above components, we have that the NTK of the two-layer ReLU network at infinite width is
\begin{align}
  &K(x,y) = (\sigma_b^2 + xy)\frac{\pi - \arccos(\rho)}{2\pi} + \frac{\sqrt{\Sigma_{11}\Sigma_{22}}}{2\pi}\left(\sqrt{1-\rho^2} + \rho(\pi - \arccos(\rho))\right),\\
  &= \Sigma_{12}\frac{\pi - \arccos(\rho)}{2\pi} + \frac{\sqrt{\Sigma_{11}\Sigma_{22}}}{2\pi}\left(\sqrt{1-\rho^2} + \rho(\pi - \arccos(\rho))\right),\\
  &=\Sigma_{12}\frac{\pi - \arccos(\rho)}{\pi} + \frac{\sqrt{\Sigma_{11}\Sigma_{22}}}{2\pi}\sqrt{1-\rho^2},\\
  &=(xy+\sigma_b^2)\frac{\pi - \arccos(\rho)}{\pi} + \frac{\sigma_b|x-y|}{2\pi},
\end{align}
where the last line is obtained by plugging in the definition of $\rho,\Sigma_{\cdot}$ and simplifying.

\end{proof}
With the analytic NTK formula, we can quantitatively show that the NTK kernel is highly translation-variant with scale growing quadratically with the input magnitude. 
\begin{corollary}[Linearized NTK approximation for Single-Layer ReLU Networks]\label{cor:relu_ntk_locality}
With $y = x + \epsilon$, where $x > 0$ and $\epsilon$ is a perturbation term that is sufficiently small, then
\begin{align}
K(x, x+\epsilon) \approx (x^2 + x\epsilon + \sigma_b^2) - \frac{\sigma_b|\epsilon|}{2\pi}
\end{align}
\end{corollary}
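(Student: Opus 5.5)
We expand the closed form of Corollary~\ref{cor:relu_ntk_analytic} with $y = x+\epsilon$ to first order in $\epsilon$, treating $x>0$ and $\sigma_b$ as fixed. The first term $\sigma_b|x-y|/(2\pi) = \sigma_b|\epsilon|/(2\pi)$ is already exact, so all the work is in the factor $(xy+\sigma_b^2)(\pi-\arccos\rho)/\pi$.

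\textbf{Step 1: expand the correlation $\rho$.} Write $a = x^2+\sigma_b^2$. Then
\[
xy+\sigma_b^2 = a + x\epsilon, \qquad y^2+\sigma_b^2 = a + 2x\epsilon + \epsilon^2 .
\]
Expanding numerator and denominator of $\rho$, the $O(\epsilon)$ terms cancel because
\[
\sqrt{a(a+2x\epsilon+\epsilon^2)} = a + x\epsilon + \frac{\epsilon^2}{2} - \frac{x^2\epsilon^2}{2a} + O(\epsilon^3).
\]
This gives
\[
\rho = 1 - \frac{\sigma_b^2\,\epsilon^2}{2a^2} + O(\epsilon^3).
\]
The coefficient is the step to double-check: $\tfrac12 - \tfrac{x^2}{2a} = \tfrac{\sigma_b^2}{2a}$, and dividing by $a$ gives $\sigma_b^2/(2a^2)$.

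\textbf{Step 2: expand $\arccos$ near $1$.} Use $\arccos(1-u) = \sqrt{2u} + O(u^{3/2})$, which is the Taylor-type approximation referred to as Eq.~\eqref{eq:arccos_approx}. With $u = \sigma_b^2\epsilon^2/(2a^2)$ this yields
\[
\arccos\rho \approx \frac{\sigma_b|\epsilon|}{a},
\]
so that
\[
\frac{\pi-\arccos\rho}{\pi} \approx 1 - \frac{\sigma_b|\epsilon|}{\pi a}.
\]

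\textbf{Step 3: multiply out and collect.} Multiplying by $a+x\epsilon$ gives
\[
(a + x\epsilon)\Bigl(1 - \frac{\sigma_b|\epsilon|}{\pi a}\Bigr) = a + x\epsilon - \frac{\sigma_b|\epsilon|}{\pi} + O(\epsilon^2).
\]
Adding the exact term $\sigma_b|\epsilon|/(2\pi)$ then produces
\[
K(x,x+\epsilon) \approx x^2 + x\epsilon + \sigma_b^2 - \frac{\sigma_b|\epsilon|}{2\pi},
\]
which is the claimed form.

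\textbf{Main obstacle.} The delicate point is the non-analyticity of $\arccos$ at $1$. The square-root expansion turns the $O(\epsilon^2)$ deficit of $\rho$ into a term linear in $|\epsilon|$. We must therefore confirm two things. First, the $O(\epsilon^3)$ remainder in $\rho$ contributes only $O(\epsilon^2)$ after taking the square root. Second, neglected products such as $x\epsilon\cdot\sigma_b|\epsilon|/(\pi a)$ are genuinely $O(\epsilon^2)$. Together these justify the ``$\approx$'' as equality up to $O(\epsilon^2)$.
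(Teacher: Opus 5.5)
Your proposal is correct and matches the paper's proof step for step. Like the paper, you expand $\rho = 1 - \sigma_b^2\epsilon^2/(2\Sigma_{11}^2) + \mathcal{O}(\epsilon^3)$ with $\Sigma_{11} = x^2+\sigma_b^2$, use $\arccos(1-\delta)\approx\sqrt{2\delta}$ to get $\arccos\rho\approx\sigma_b|\epsilon|/\Sigma_{11}$, and substitute into the closed form of Corollary~\ref{cor:relu_ntk_analytic}, keeping terms through first order. Your remainder bookkeeping is a bit more careful than the paper's: it tacitly needs $\sigma_b>0$, and for $\sigma_b=0$ one has $\rho\equiv 1$ so the formula is exact. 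The exact identity $\sqrt{1-\rho^2}=\sigma_b|\epsilon|/\sqrt{\Sigma_{11}\Sigma_{22}}$, used implicitly in that corollary's proof, would let you bypass the square-root delicacy entirely.
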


\begin{proof}
Again we denote $\Sigma_{11} = x^2 + \sigma_b^2$. Then
\begin{align}
  \rho &= \frac{xy + \sigma_b^2}{\sqrt{\left(x^2 + \sigma_b^2\right)\left(y^2 + \sigma_b^2\right)}}\\
  &= \frac{x^2 + x\epsilon + \sigma_b^2}{\sqrt{\left(x^2 + \sigma_b^2\right)\left((x+\epsilon)^2 + \sigma_b^2\right)}}\\
  &= \frac{\Sigma_{11} + x\epsilon}{\sqrt{\Sigma_{11} \left(\Sigma_{11} + 2x\epsilon + \epsilon^2\right)}}\\
  &=\left(1 + \frac{x\epsilon}{\Sigma_{11}}\right)\left(1 + \frac{2x\epsilon + \epsilon^2}{\Sigma_{11}}\right)^{-\frac{1}{2}}\\
  &= \left(1 + \frac{x\epsilon}{\Sigma_{11}}\right)\left(1 - \frac{2x\epsilon + \epsilon^2}{2\Sigma_{11}} + \frac{3}{8}\frac{\epsilon^2(2x+\epsilon)^2}{\Sigma_{11}^2} + \mathcal O(\epsilon^3)\right)\ \text{by Taylor expansion,}\\
  &= \left(1 + \frac{x\epsilon}{\Sigma_{11}}\right)(1 - \frac{x\epsilon}{\Sigma_{11}} - \frac{\epsilon^2}{2\Sigma_{11}} + \frac{3\epsilon^2x^2}{2\Sigma_{11}^2}+ \mathcal O(\epsilon^3))\\
  &= 1 - \frac{\epsilon^2}{2\Sigma_{11}^2}(\Sigma_{11} - x^2) + \mathcal O(\epsilon^3)\\
  &= 1 - \frac{\epsilon^2}{2\Sigma_{11}^2}\sigma_b^2 + \mathcal O(\epsilon^3).
\end{align}
Using the smaller angle approximation $\arccos(1-\delta) \approx \sqrt{2\delta}$ for sufficiently small $\delta$, we have that
\begin{align}\label{eq:arccos_approx}
  \arccos(\rho) &= \arccos(1 - \frac{\epsilon^2}{2\Sigma_{11}^2}\sigma_b^2 + \mathcal O(\epsilon^3))\approx \frac{|\epsilon|}{\Sigma_{11}}\sigma_b.
\end{align}

Plugging the above approximation and $\Sigma_{12} = (x+\epsilon)x + \sigma_b^2$ into the NTK formula in Corollary \ref{cor:relu_ntk_analytic}, we have that

\begin{align}
  K(x, x+\epsilon) &\approx (x^2 + x\epsilon + \sigma_b^2)\frac{\pi - \frac{|\epsilon|}{\Sigma_{11}}\sigma_b}{\pi} + \frac{\sigma_b|\epsilon|}{2\pi},\\
  &= (x^2 + x\epsilon + \sigma_b^2) - \frac{\sigma_b|\epsilon|}{2\pi} + \mathcal O(\epsilon^2).
\end{align}
\end{proof}
Corollary~\ref{cor:relu_ntk_locality} suggests that the linearized NTK approximation grows with input magnitude $x$ when $\sigma_b$ is small and $x$ dominates. This trend is clearer when the inner bias scale is set to $0$, in which case the NTK reduces to $K(x,y)=xy$ if $xy>0$ and $0$ otherwise, and the linearized approximation is exactly $K(x,x+\epsilon)= x^2 + x\epsilon$. When the input is time, this implies that the NTK assigns higher correlation to temporally distant data points, which can induce spurious correlations and violate causality. Theorem~\ref{thm:moe_decay} shows that by centering each expert at its subdomain center and making its support compact, the NTK becomes banded and its kernel-regression weights exhibit translation-based exponential decay with distance between the evaluation point and training points. In short, domain decomposition mitigates the network's tendency to disproportionately emphasize later temporal data by localizing experts to specific subdomains.

\subsection{Proof of Theorem \ref{thm:moe_decay}}\label{pf:1d_bound}

\begin{proof}
Let $K_{\text{MoE}}(X, X) \in \mathbb R^{N\times N}$ be the composite NTK matrix of the MoE architecture evaluated on the training dataset $X$. By the assumption of a positive minimum eigenvalue $\lambda_0 > 0$, we have that $K_{\text{MoE}}(X, X)$ is positive definite and therefore invertible, since it is a Gram matrix of the Jacobian at the datapoints with respect to the parameters. Let $x_j, x_k$ be two training points in $X$. Then 
\begin{align}
  K_{\text{MoE}}(X, X)_{j,k} &= K_{\text{MoE}}(x_j, x_k),\\
  &= \sum_{m=1}^M \phi_m(x_j)\phi_m(x_k) K_m(x_j, x_k).
\end{align}
Given that $\phi_m$s are supported on evenly spaced compact intervals of uniform width $w$, the off-block entries of $K_{\text{MoE}}(X, X)$ are zeroed. That is, if $|x_j - x_k| > w$, then $K_{\text{MoE}}(X, X)_{j,k} = 0$. The NTK matrix $K_{\text{MoE}}$ is therefore $\frac{w}{\Delta x}$-banded.

By assumption, each $K_i$ is continuous on the compact support of
$\phi_i$, hence uniformly bounded by some $L_i > 0$. The translational
construction of the subdomains (each $K_i$ is the same kernel translated
to centroid $c_i$) gives a common bound $L = \max_i L_i$, independent of
the discretization. Since at most two routers are active at any point,
$|K_{\textnormal{MoE}}(x, x')| \le 2L$ uniformly on the domain.

Demko-Moss-Smith theorem~\cite{demko1984decay} states that for such a banded matrix that is positive definite, bounded, and boundedly invertible, the entries of its inverse also decay exponentially away from the diagonal. More formally, there exists constants $C_0> 0$ and $\alpha > 0$ such that for any $j, k \in \{1, \dots, N\}$,
\begin{equation}
  |[K_{\text{MoE}}(X, X) + \lambda I]^{-1}_{j,k}| \le C_0 e^{-\alpha|j-k|}.
\end{equation}
Similarly, given an evaluation point $x$, the partial kernel $K_{\text{MoE}}(x, X) \in \mathbb R^{1\times N}$ is also banded with the same bandwidth $w$, since the router $\phi_m$ is supported on compact intervals of width $w$. Therefore, if $|x-x_j| > w$, then $K_{\text{MoE}}(x, x_j) = 0$. Let $I(x) = \{x_j\in X| |x_j - x| \le w\}$. Then for the any evaluation point $x$, $|I(x)| \leq \frac{2w}{\Delta x} + 1$. The equivalent kernel regression weight for a training point $x_j$
\begin{align}
  |H_j(x, X)| &= \left|(K_{\text{MoE}}(x, X) [K_{\text{MoE}}(X, X) + \lambda I]^{-1})_j\right|,\\
  &\leq \sum_{k\in I(x)} |K_{\text{MoE}}(x, x_k)|\cdot |([K_{\text{MoE}}(X, X) + \lambda I]^{-1})_{k,j}|,\\
  &\leq \sum_{k\in I(x)} L C_0 e^{-\alpha|x_k-x_j|}, \quad \text{where $L$ is the uniform bound of $K_i$ on the subdomain},\\
  &\leq \sum_{k\in I(x)} L C_0 e^{-\alpha(|x-x_j| - |x-x_k|)},\\
  &\leq |I(x)|L C_0 e^{-\alpha(|x-x_j| - w)},\\
  &\leq C e^{-\alpha|x-x_j|},\quad \text{where $C = L C_0 (\frac{2w}{\Delta x} + 1) e^{\alpha w}$}.
\end{align}
\end{proof}
\subsection{2D effective neighborhood width}\label{appendix:2d_interval_def}
Similar to the 1D effective width (\S\ref{def:1d_interval_width}), we define the temporal width for a 2D space-time model in which only the temporal domain is partitioned. Let the input be $x=[t,s]$, where $t$ is time and $s$ is space, and let $H([s,t],X)$ denote the NTK-regression weight matrix with shape $N_t\times N_s$. We define the temporal $\alpha$-effective neighborhood width as
\begin{equation}
  B_\alpha([s,t],X) = \min_{i,j} \left\{t_j - t_i:\sum_{k=i}^{j} \frac{\|H([s,t],X)_{k,\cdot}\|_2^2}{\|H([s,t],X)\|_2^2} \ge 1-\alpha,\ j>i\right\}.
\end{equation}

\subsection{NTK diagnosis}\label{appendix:ntk_diagnosis}
\begin{figure}[t]
  \centering
  \includegraphics[width=0.9\textwidth]{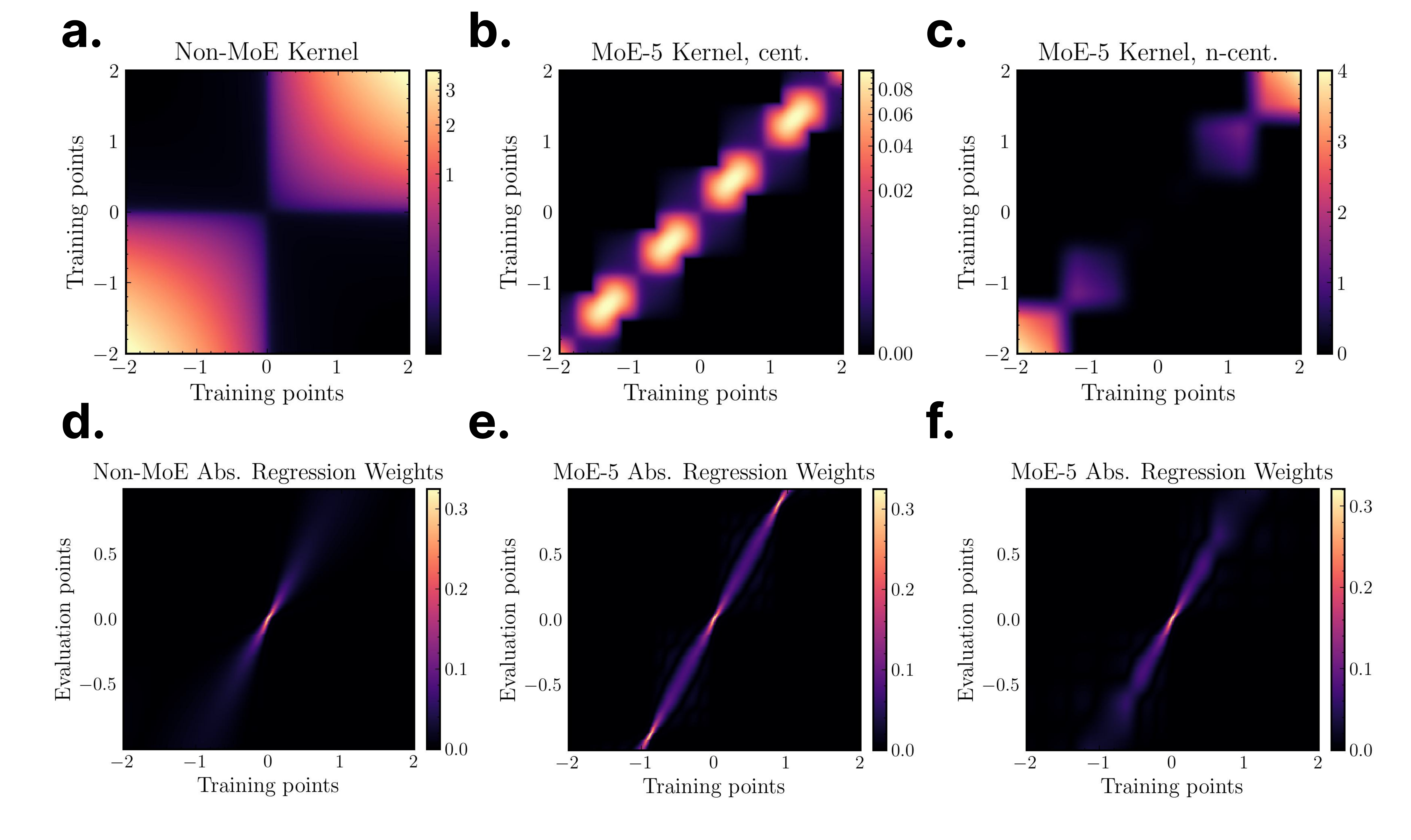}
  \caption{NTK analysis for domain decomposition MoE. (a-c) NTKs of a one-hidden-layer ReLU network, centered MoE, and non-centered MoE. (d-f) NTK regression weights of the same architectures. In all cases, weight scale is $1$, and bias scale is $0.05$.}
  \label{fig:ntk_diagnostics}
\end{figure}
To complement the analytical results in Corollary~\ref{cor:relu_ntk_analytic}, Corollary~\ref{cor:relu_ntk_locality}, and Theorem~\ref{thm:moe_decay}, we visualize the NTK and the corresponding kernel-regression weights for a non-MoE model as well as MoE models with varying numbers of experts, both centered and non-centered. Figure~\ref{fig:ntk_diagnostics} shows that while the non-centered MoE suppresses off-diagonal correlations in the Jacobian, the unboundedness of the diagonal and near-diagonal elements still produces inhomogeneous kernel-regression weights across the domain. In contrast, the subdomain-centric centering successfully introduces translational invariance into the kernel-regression weights. Figure~\ref{fig:appendix_ntk_matrix_comparison} further demonstrates that increasing the number of experts progressively uniformizes the learning behavior.
\begin{figure*}[t]
  \centering
  {\small MoE NTK, centered \par}
  \includegraphics[width=0.19\linewidth]{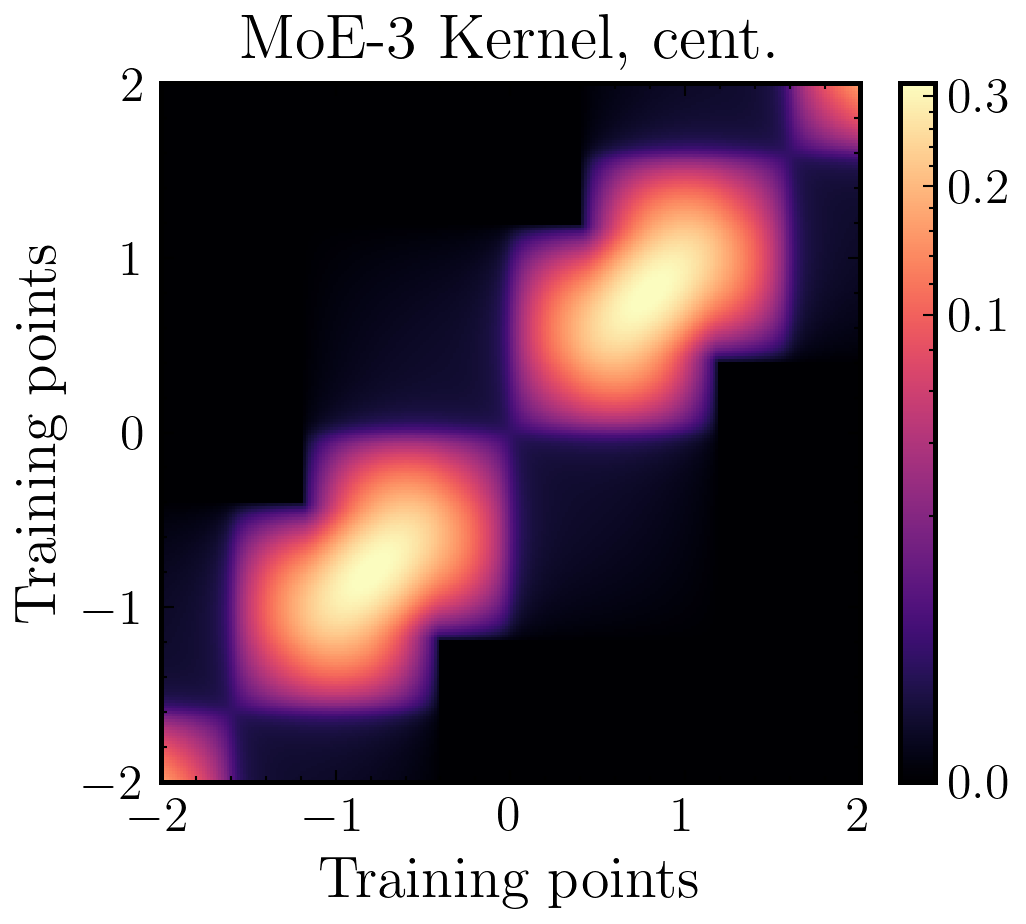}
  \includegraphics[width=0.19\linewidth]{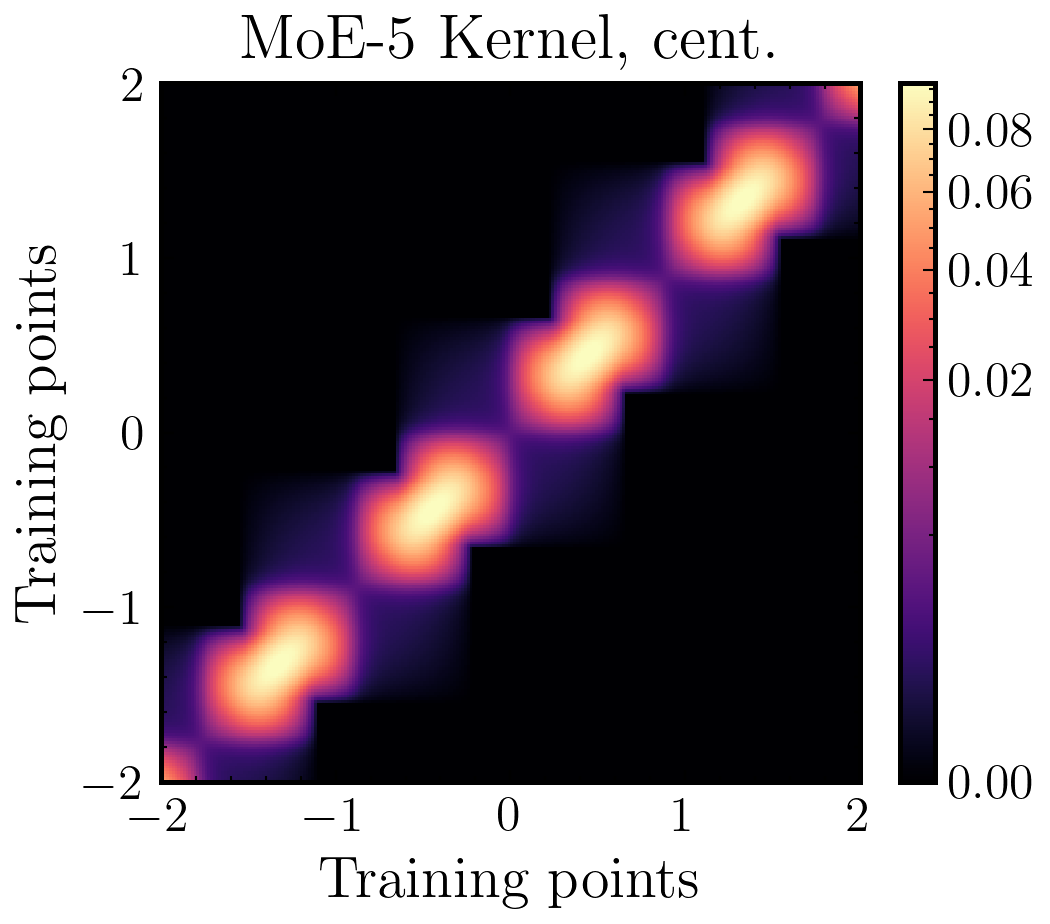}
  \includegraphics[width=0.19\linewidth]{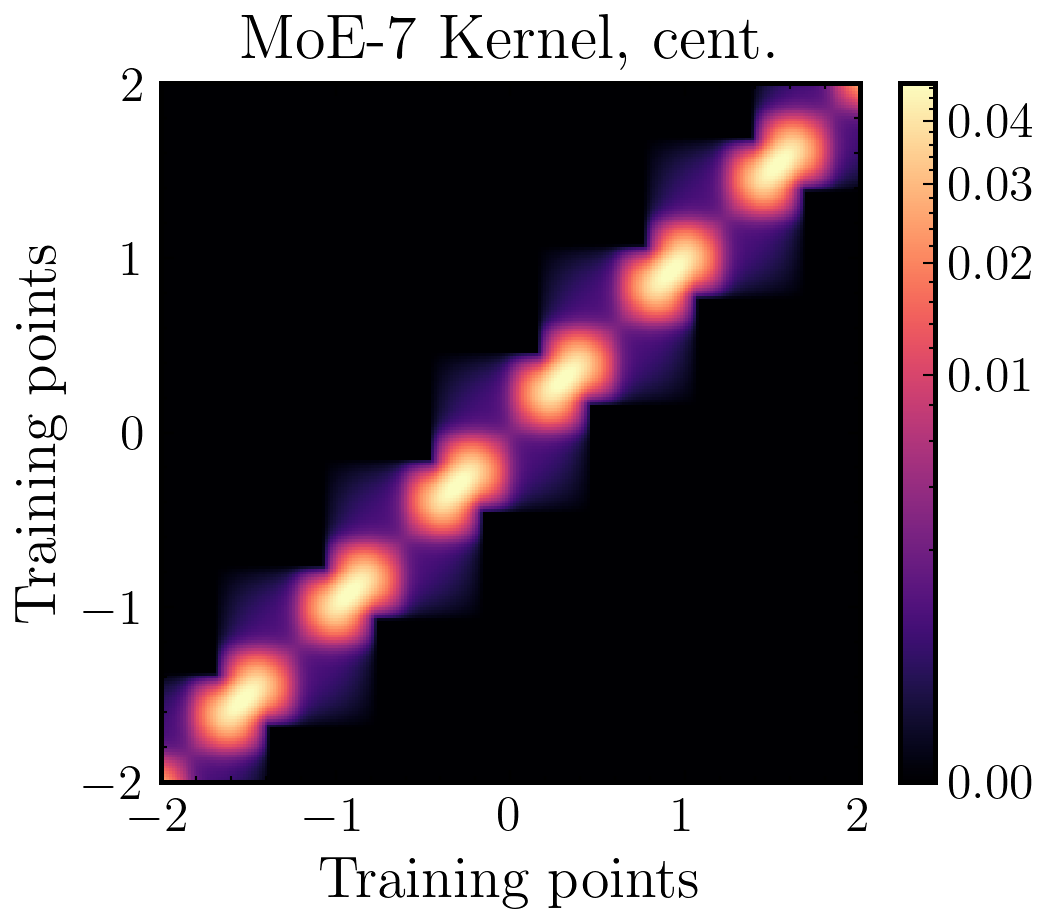}
  \includegraphics[width=0.19\linewidth]{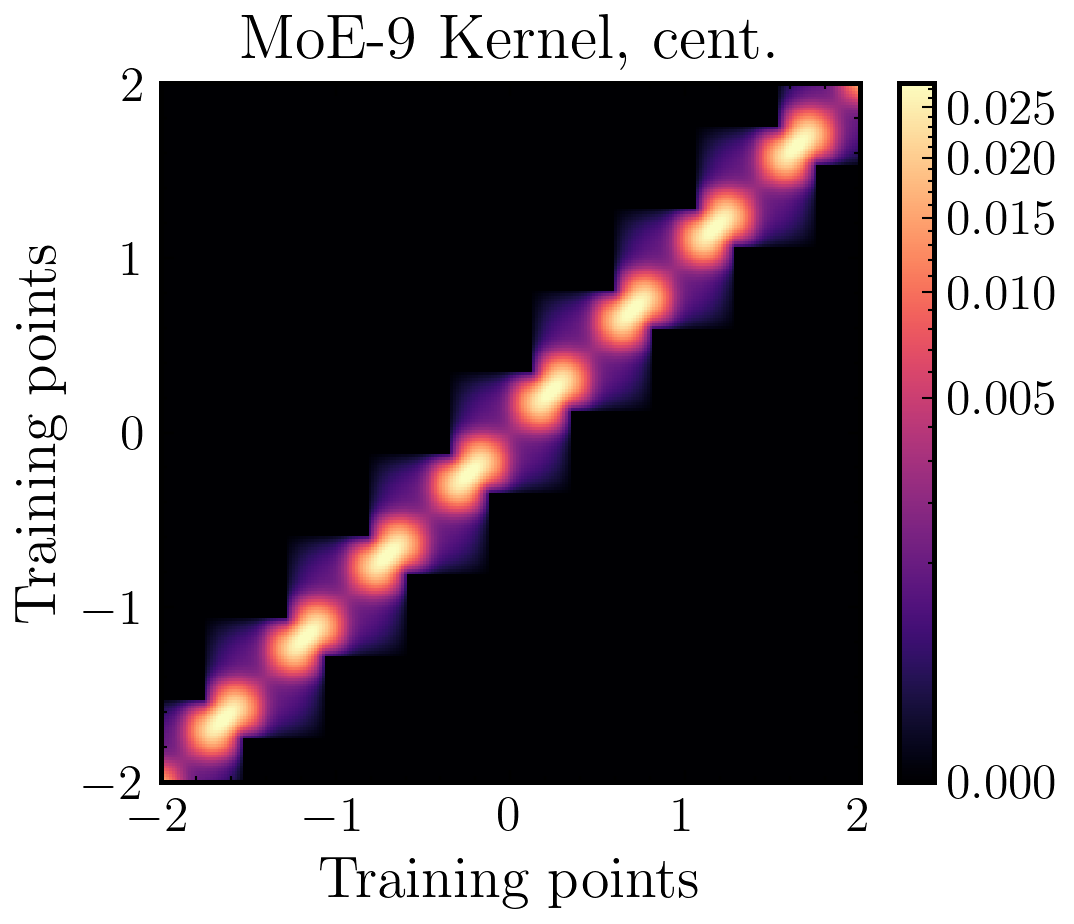}
  \includegraphics[width=0.19\linewidth]{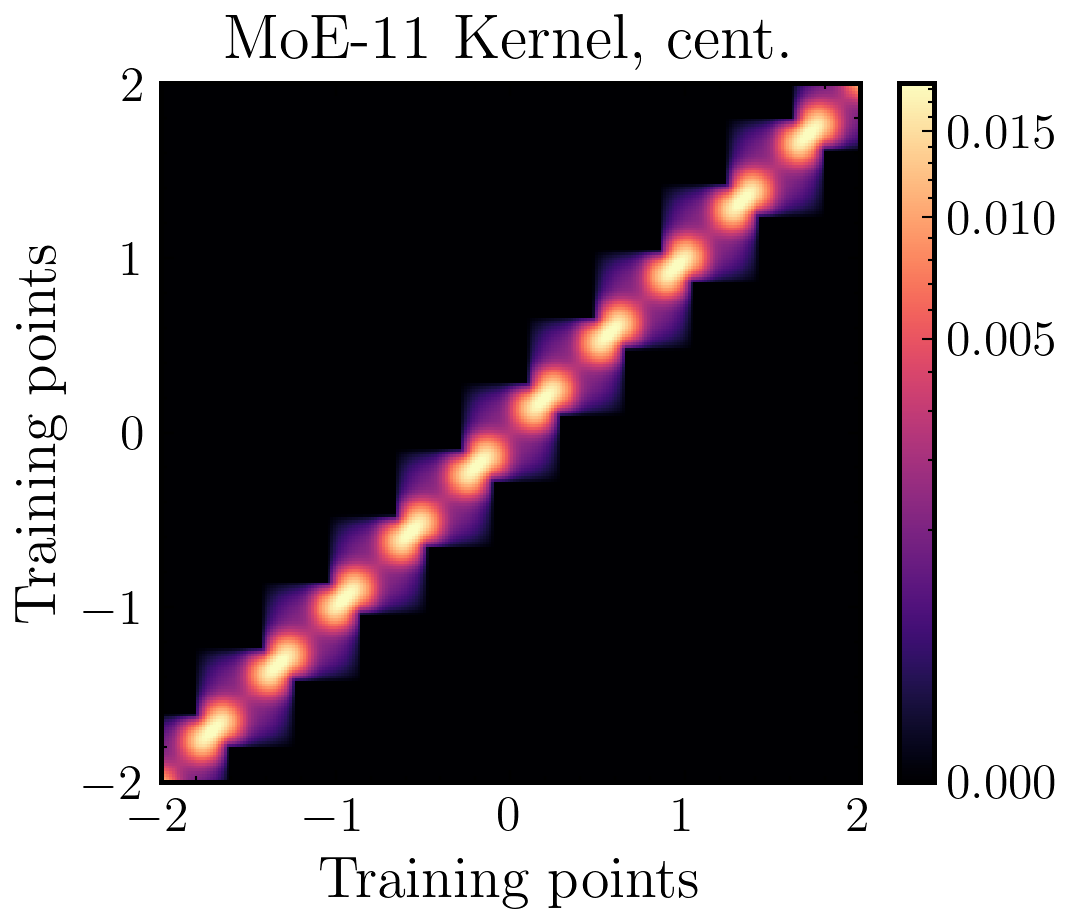}

  {\small MoE NTK kernel-regression absolute weights, centered \par}
  \includegraphics[width=0.19\linewidth]{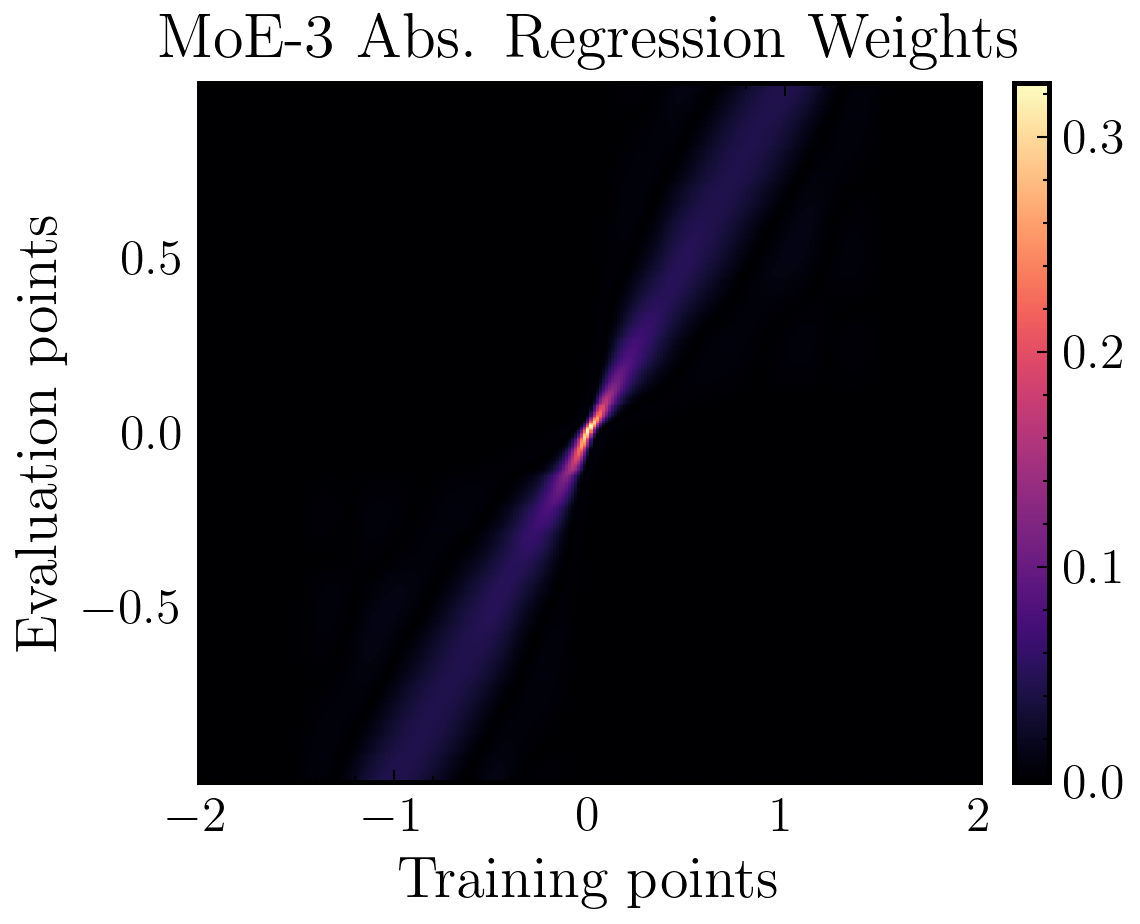}
  \includegraphics[width=0.19\linewidth]{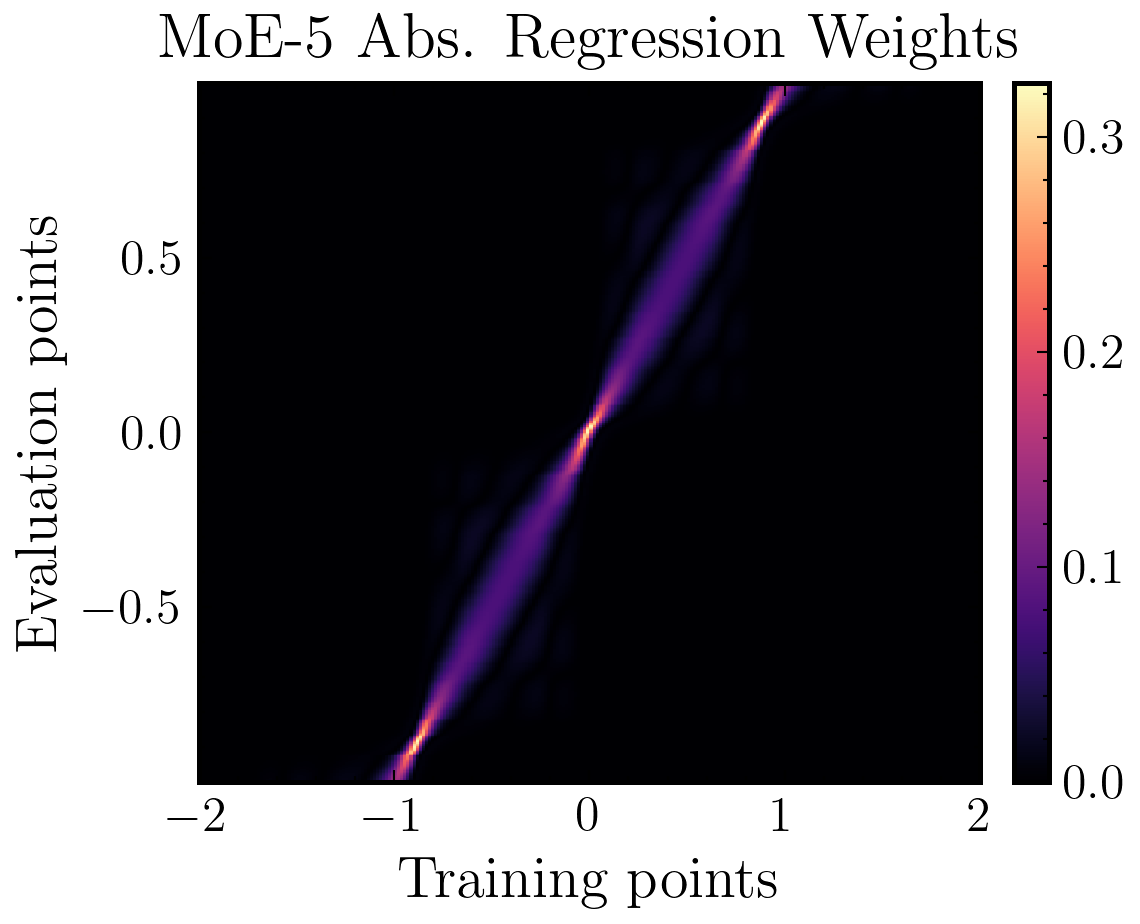}
  \includegraphics[width=0.19\linewidth]{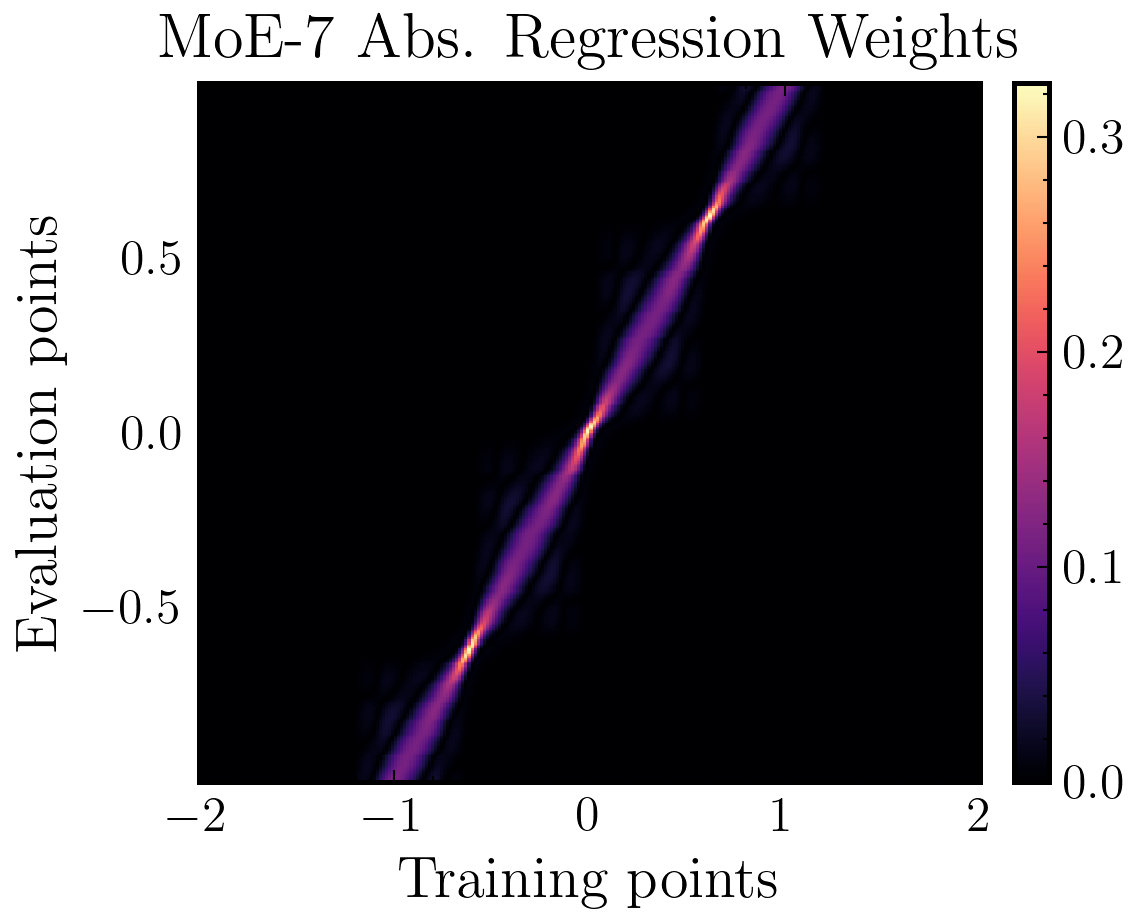}
  \includegraphics[width=0.19\linewidth]{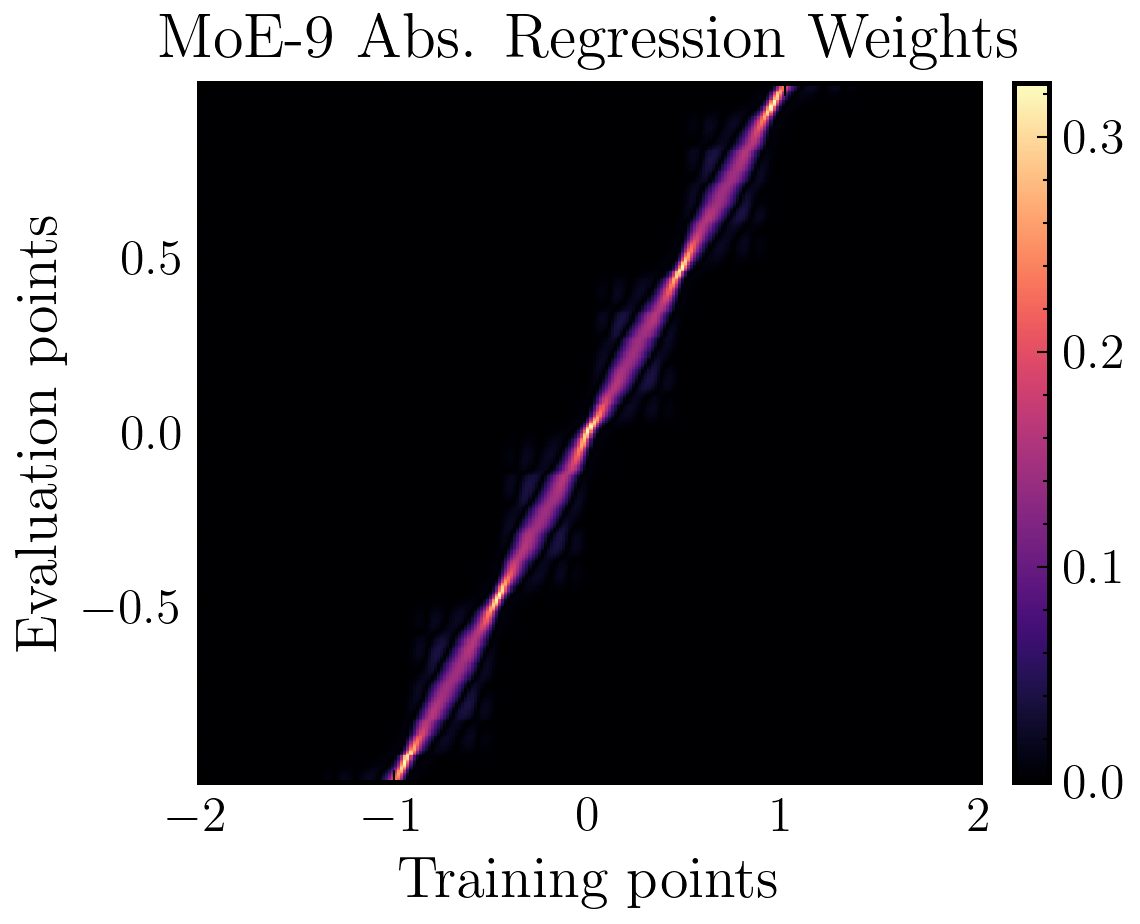}
  \includegraphics[width=0.19\linewidth]{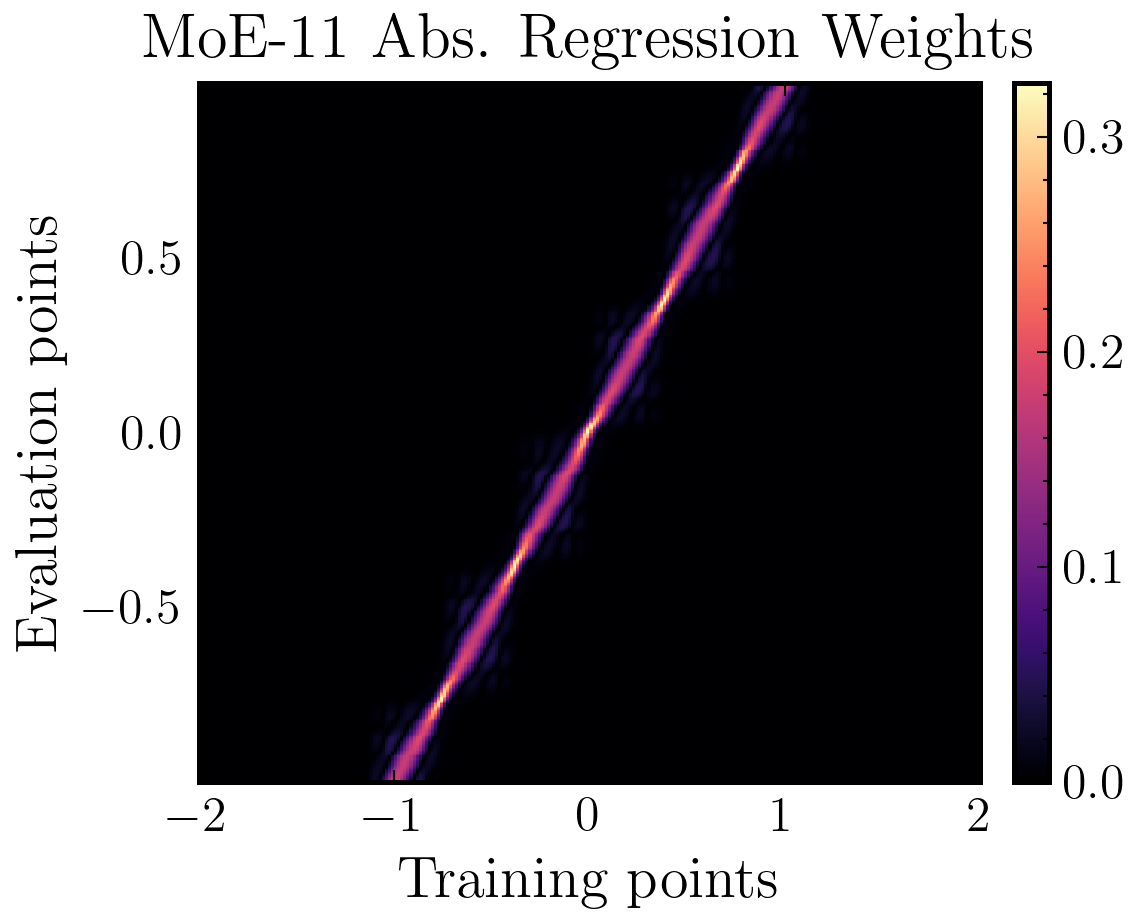}

  {\small MoE NTK, non-centered \par}
  \includegraphics[width=0.19\linewidth]{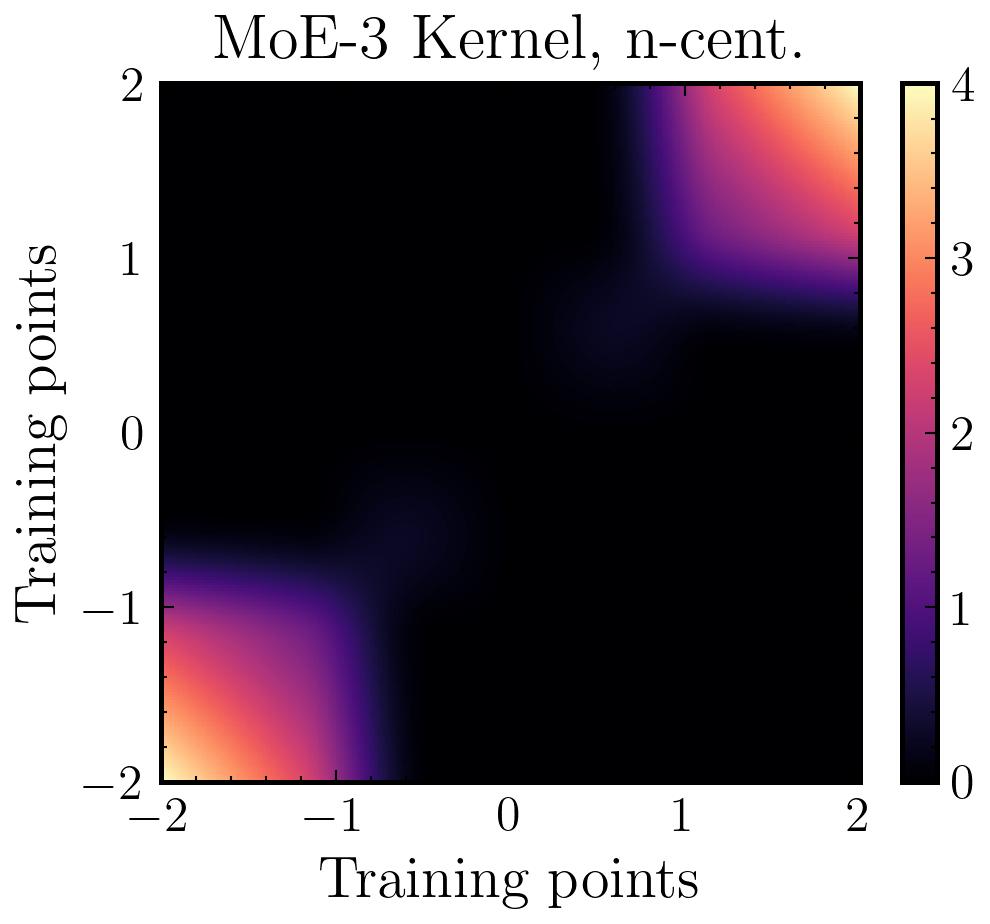}
  \includegraphics[width=0.19\linewidth]{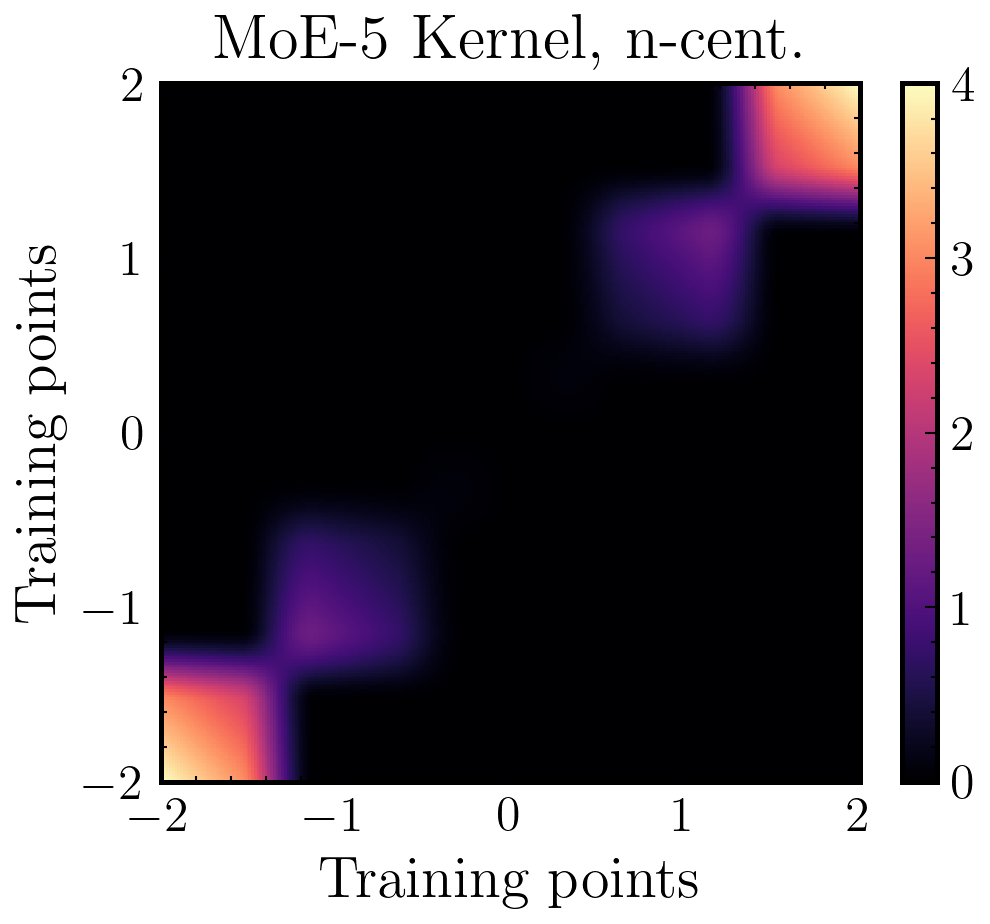}
  \includegraphics[width=0.19\linewidth]{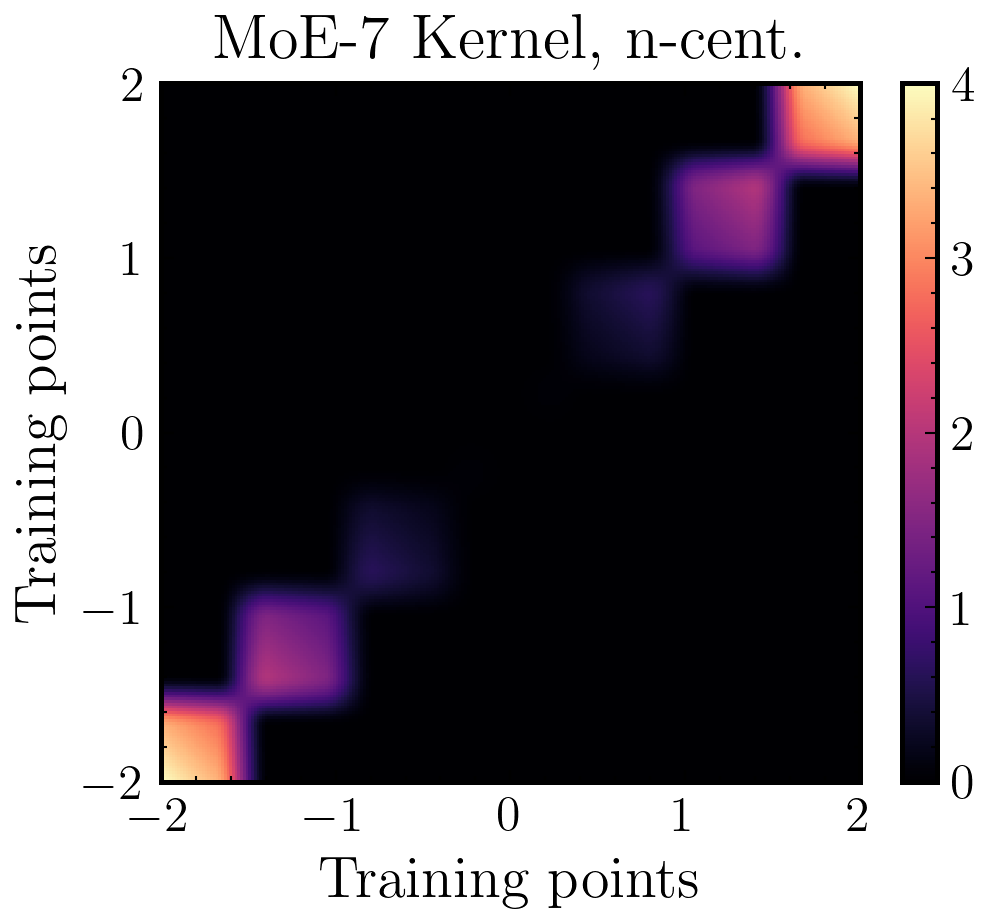}
  \includegraphics[width=0.19\linewidth]{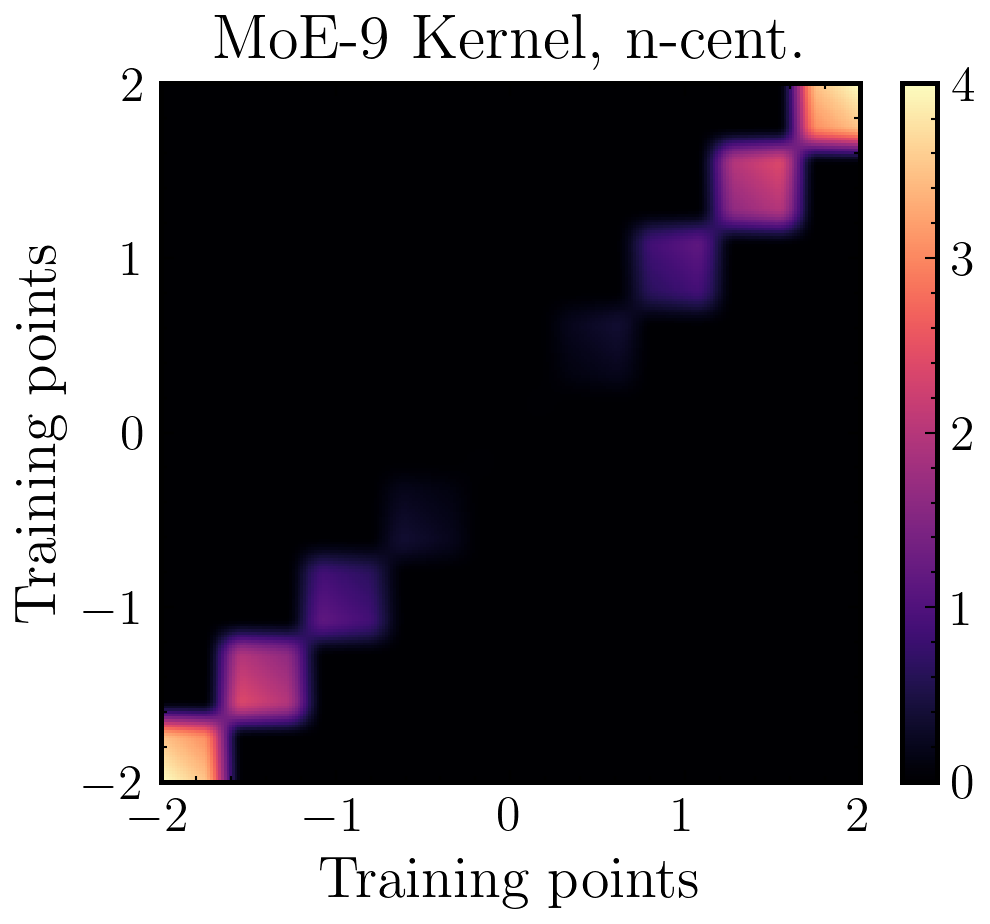}
  \includegraphics[width=0.19\linewidth]{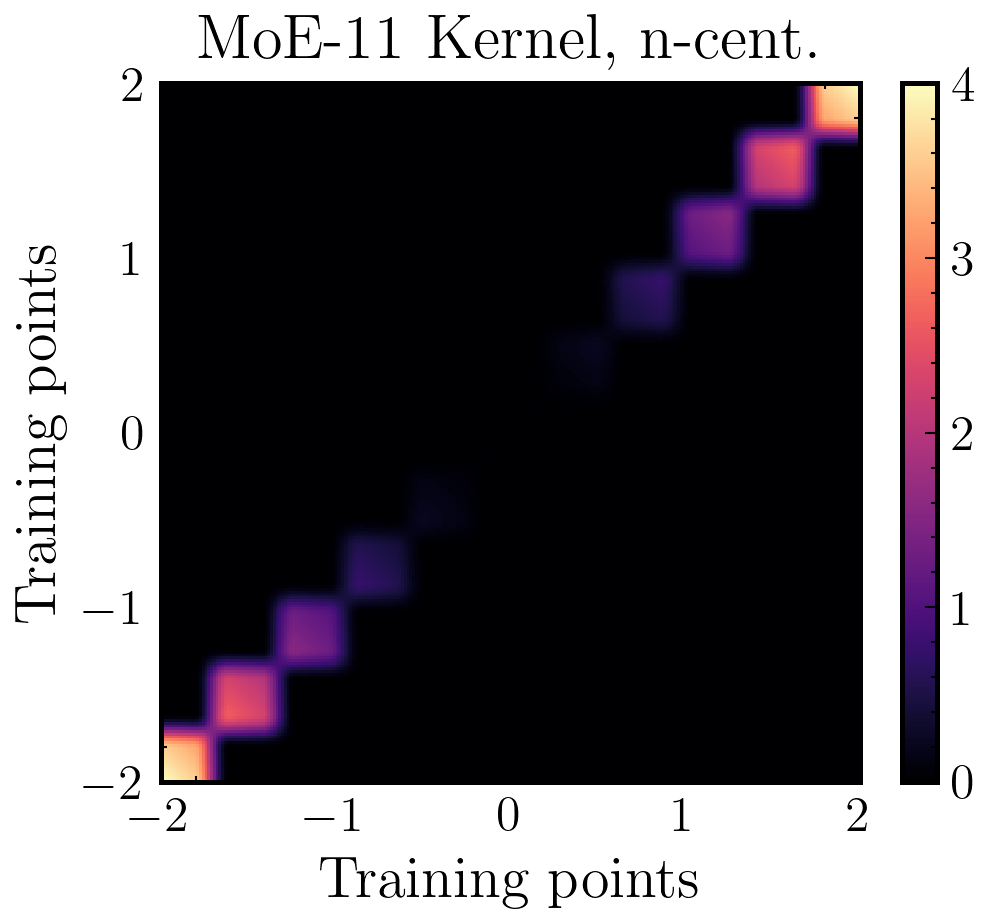}

  {\small MoE NTK kernel-regression absolute weights, non-centered \par}
  \includegraphics[width=0.19\linewidth]{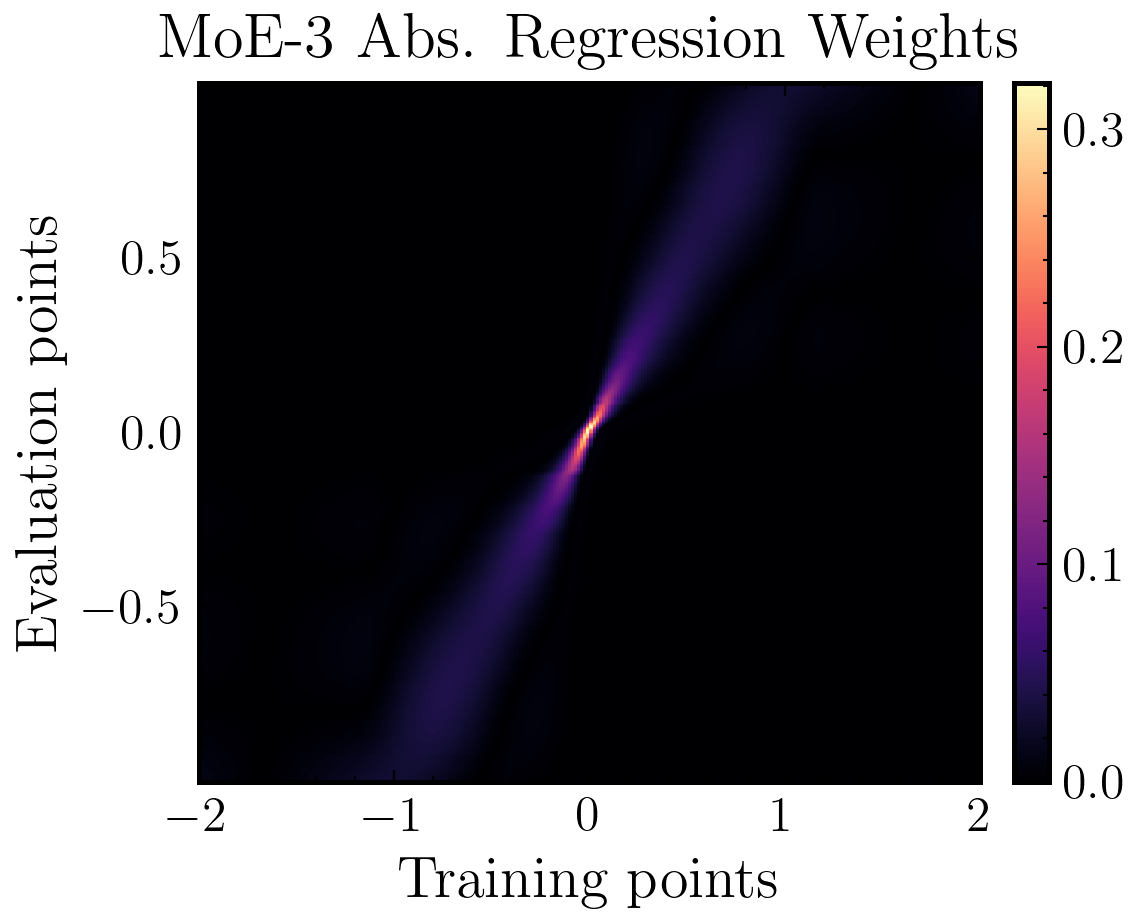}
  \includegraphics[width=0.19\linewidth]{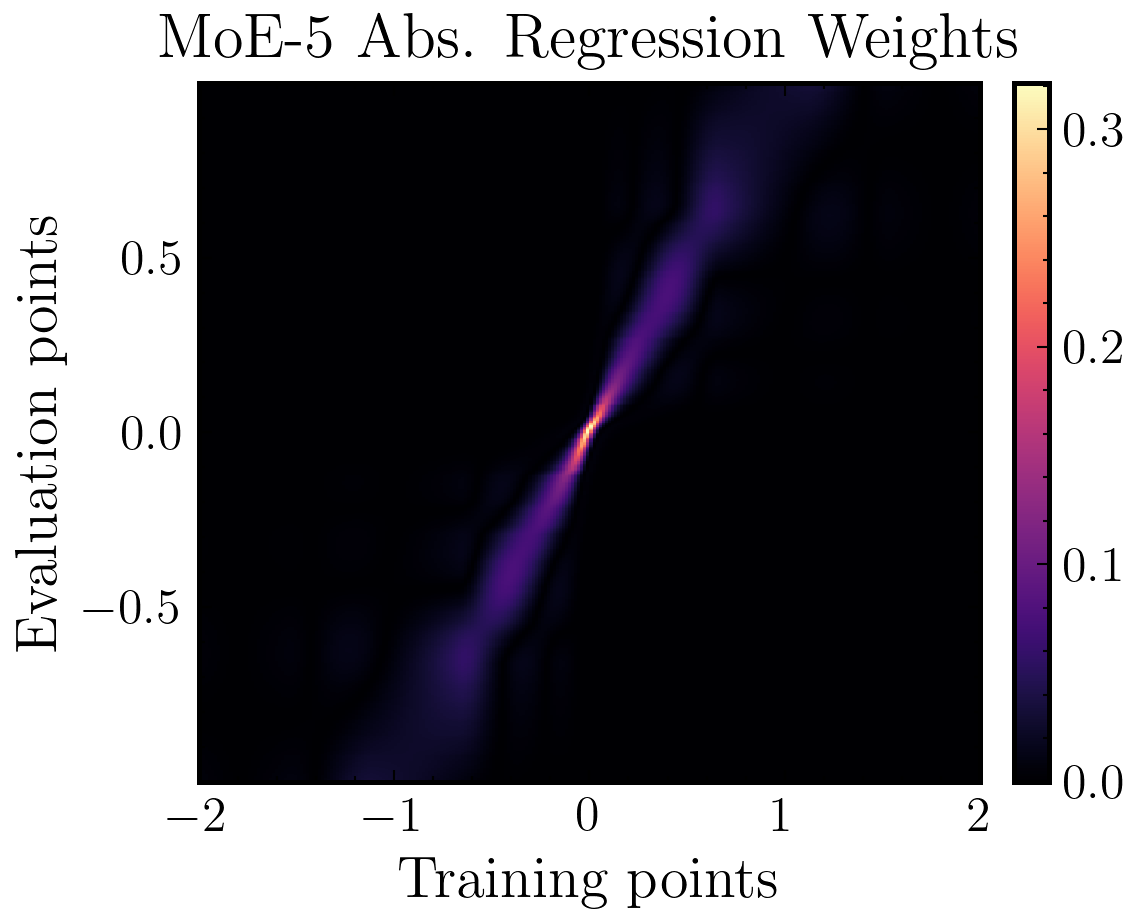}
  \includegraphics[width=0.19\linewidth]{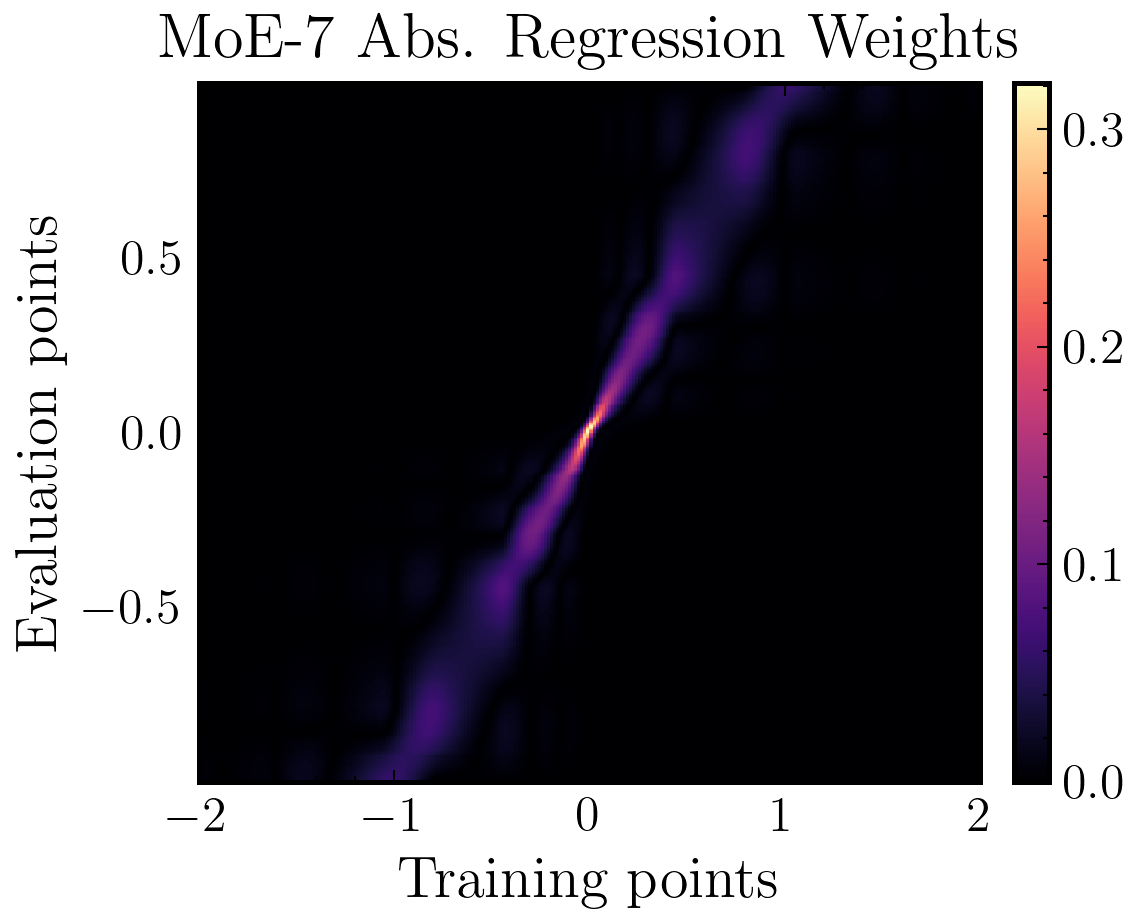}
  \includegraphics[width=0.19\linewidth]{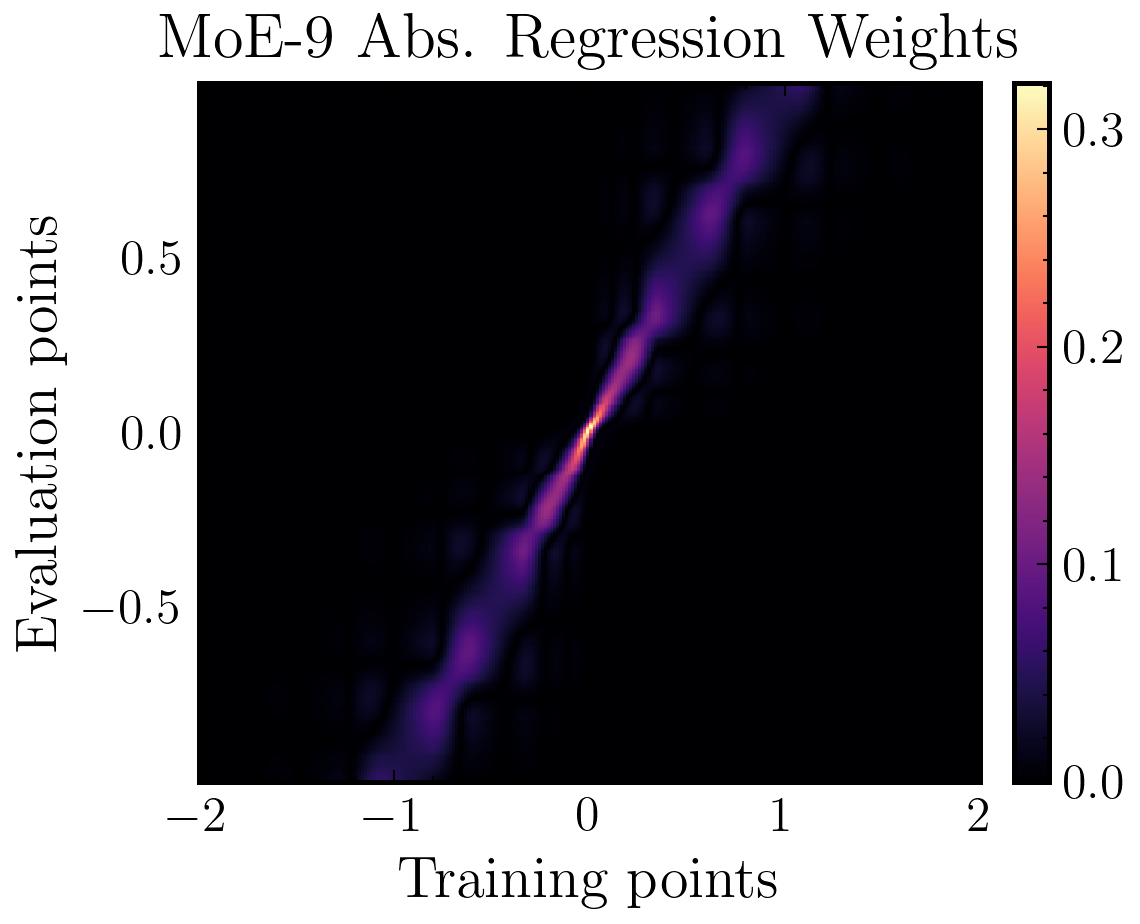}
  \includegraphics[width=0.19\linewidth]{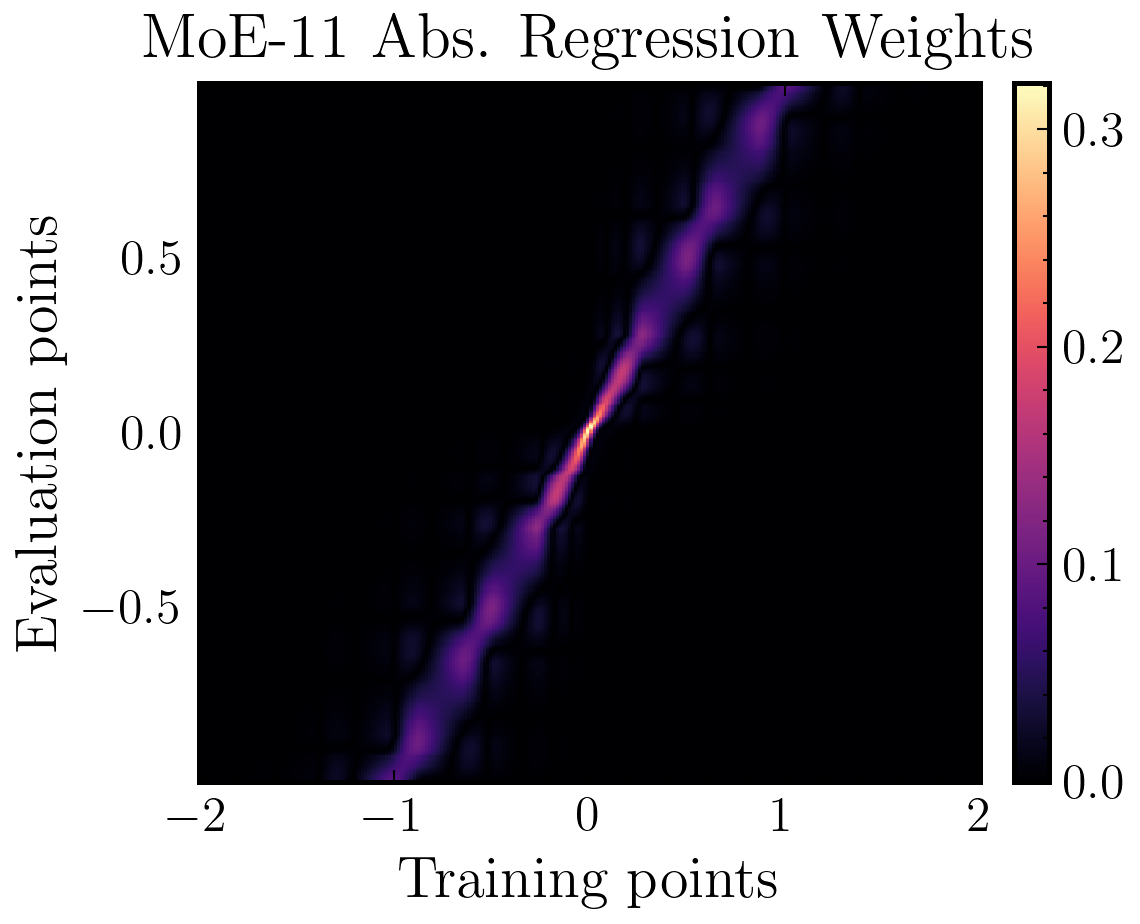}

  \caption{NTK and absolute kernel regression weights for MoE models with various numbers of experts. Rows are (1) centered NTK, (2) centered NTK kernel-regression absolute weights $|H(x,X)|$, (3) non-centered NTK, and (4) non-centered NTK kernel-regression absolute weights $|H(x,X)|$. Centering produces more localized and more regular structures across expert counts.}
  \label{fig:appendix_ntk_matrix_comparison}
\end{figure*}

\clearpage
\section{Architecture Details}\label{appendix:architecture_details}
\subsection{Domain-aware Routers}\label{appendix:da_router}
\begin{figure}[H]
  \centering
  \begin{subfigure}[t]{0.44\linewidth}
    \centering
    \includegraphics[width=\linewidth]{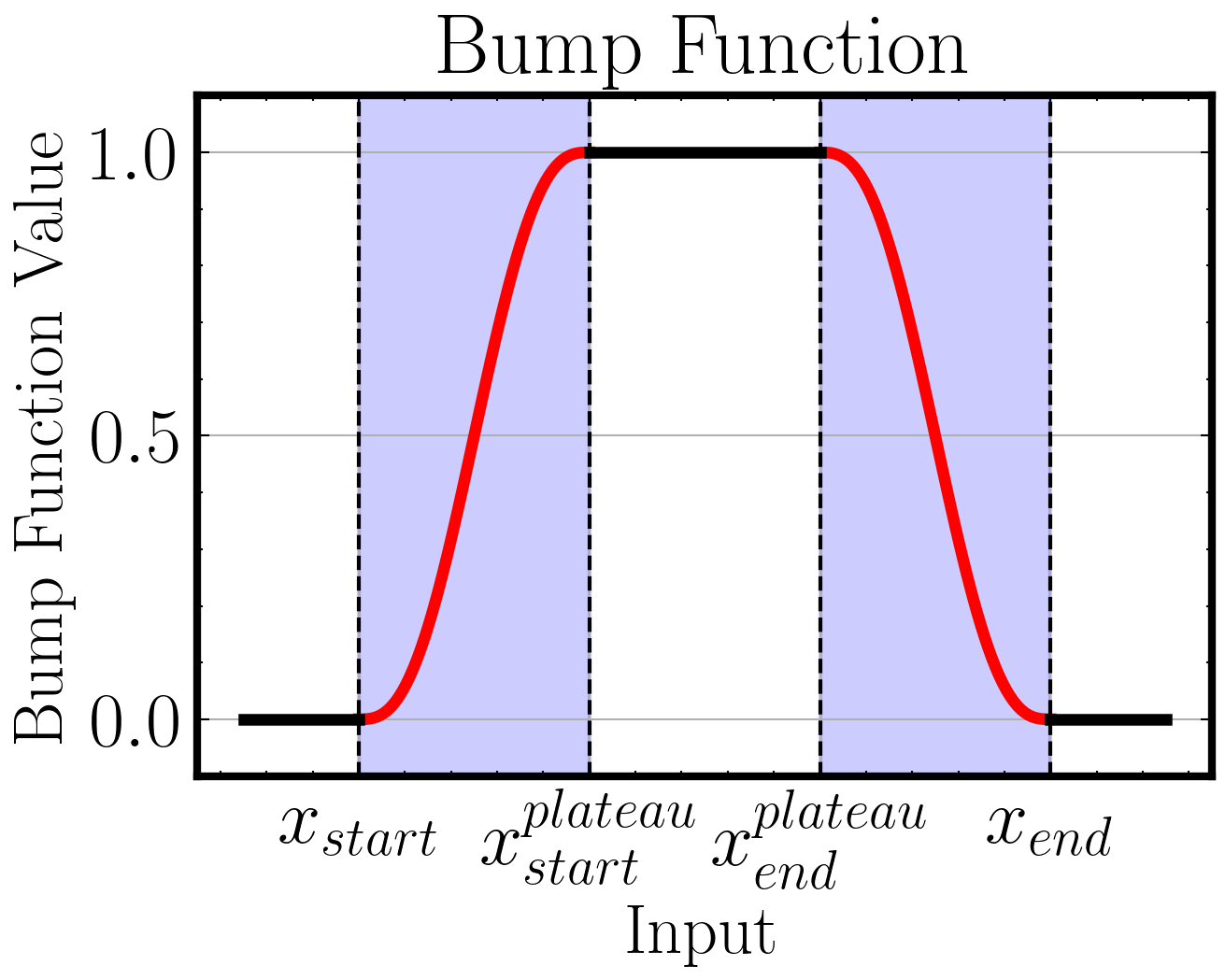}
    \caption{Single bump with two bridges and one plateau.}
  \end{subfigure}
  \begin{subfigure}[t]{0.46\linewidth}
    \centering
    \includegraphics[width=\linewidth]{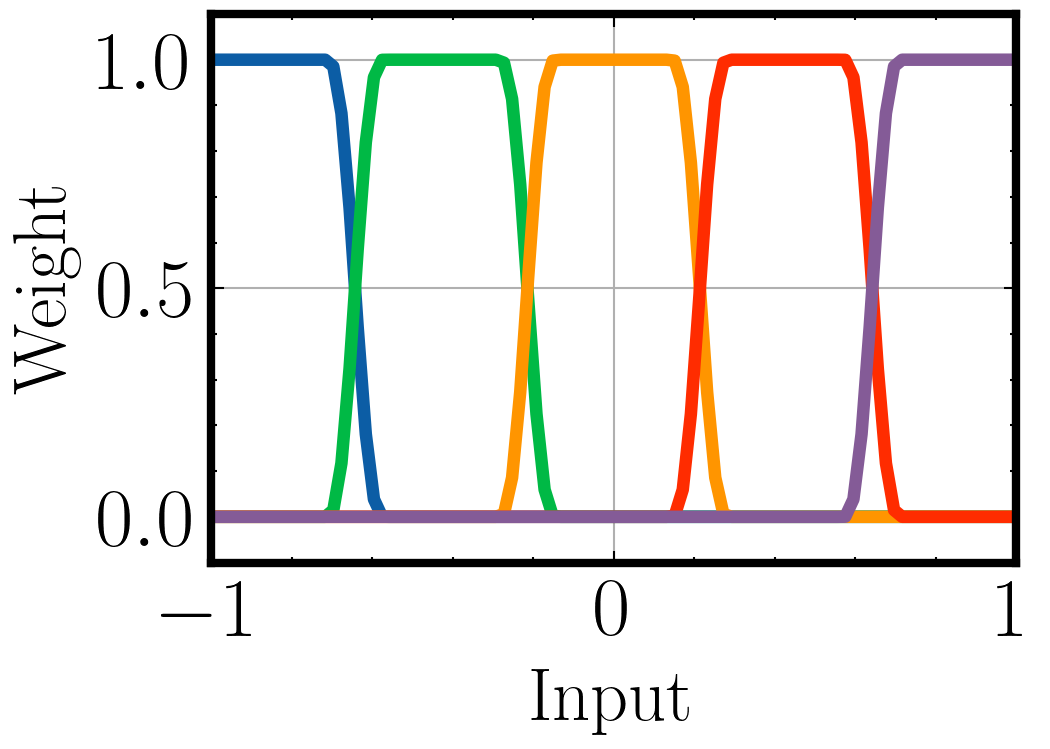}
    \caption{Overlapping bumps across the domain.}
  \end{subfigure}
  \caption{Bump-router construction used for soft domain decomposition. Overlap between neighboring bumps enables smooth blending of experts through partition-of-unity bump weights.}
  \label{fig:appendix_bump_router_softdd}
\end{figure}
Figure~\ref{fig:appendix_bump_router_softdd} shows how the bump function is designed and tiled to softly partition the domain into overlapping subdomains. In this work, we use two types of bump-bridge functions. The first is a $C^1$ bridge, whose first derivative vanishes at both ends.
\begin{equation}
  b(x) = 3 x^2 - 2 x^3, x\in[0,1],
\end{equation} 
The second is a $C^2$ bridge, whose first and second derivatives vanish at both ends. It is used for benchmarks whose differential operator is second order in time.
\begin{equation}
  b(x) = 6 x^5 - 15 x^4 + 10 x^3, x\in[0,1].
\end{equation}

\subsection{Encoder}
We use periodic spatial encoding for both models. That is, given spatial coordinate $x$ and temporal coordinate $t$, the periodic encoding is defined as 
\begin{align}\label{eq:periodic_encoding}
  B(x, t) = [\sigma_x\sin(2\pi x), \sigma_x\cos(2\pi x),\cdots,\sigma_x\sin(2\pi K x), \sigma_x\cos(2\pi K x), \sigma_t t].
\end{align}
And we choose $K=2$ for Advection-Diffusion, Damped-Wave, and Layered-Wave benchmarks, and $K=1$ for standard JAXPI benchmarks. For the standard JAXPI benchmarks the spatial domain is $[-1, 1]$, so the embedding uses $\sin(\pi K x)$ and $\cos(\pi K x)$ instead.
The spatially periodic embedding is then followed by a Gaussian random projection and Fourier embedding. That is,
\begin{align}
  \gamma(x, t) = [\sin(B(x, t)W), \cos(B(x, t)W)].
\end{align}
$\sigma_x$ and $\sigma_t$ are the spatial and temporal random Fourier-feature scales. We set both to $4$, except for the KdV benchmark where both are $8$. The matrix $W$ is learnable during training. A trainable linear layer then maps $\gamma(x, t)$ to the backbone width.

For Burgers equation with Dirichlet boundary condition instead of periodic boundary condition, we remove the periodic embedding in Equation~\eqref{eq:periodic_encoding} and directly apply the random Fourier features as
\begin{align}
  \gamma(x, t) = [\sin(\sigma_x xW_x + \sigma_t tW_t), \cos(\sigma_x xW_x + \sigma_t tW_t)].
\end{align}
\clearpage
\section{Experimental Details}\label{appendix:experiment_details}

We use JAX ecosystem\cite{jax2018github}, Flax\cite{flax2020github} in Python3.11 to implement the experiments in this paper.

All experiments use an unofficial JAX implementation~\cite{Jones2026haydn} of the SOAP optimizer~\cite{vyas2025soapimprovingstabilizingshampoo}. Table~\ref{tab:optimizer_training_schedule} summarizes the training hyperparameters shared across all experiments.

\begin{table}[H]
\centering
\caption{Training hyperparameters shared across all experiments.}
\label{tab:optimizer_training_schedule}
\renewcommand{\arraystretch}{1.15}
\begin{tabular}{ll}
\toprule
Component & Value \\
\midrule
Init LR & $10^{-6}$ \\
Peak LR & $10^{-3}$ \\
Min LR & $10^{-5}$ \\
Warmup steps & $1000$ \\
Decay factor / interval & $0.90$ every $2000$ steps \\
Weight decay & $10^{-5}$ \\
SOAP preconditioner update & Every $2$ steps \\
Residual collocation points & $4096$ \\
Initial-condition points & $1024$ \\
Initial-velocity points (wave benchmarks) & $1024$ \\
Boundary points (Burgers) & $1024$ \\
Activation function & swish \\
Polyak averaging rate~\cite{polyak1992acceleration} & $0.01$ \\
\bottomrule
\end{tabular}
\end{table}

We set the width of the networks to be $128$, and each backbone has $4$ residual blocks, where, for the Latent-MoE, two of them are MoE blocks with chosen number of experts. The width of the corresponding blocks in the non-MoE architecture is widened to keep the total number of parameters comparable, up to some small difference due to the additional bias terms in the MoE blocks. 

In addition to the widened ResNet, we also compare against FB-PINNs and PirateNet. As discussed in the main manuscript, FB-PINNs directly follow the NTK intuition behind MoE localization, whereas PirateNet is considered a strong state-of-the-art architecture across multiple problems.

For all of the benchmarks without analytical solutions, the ground truth is generated by a RK4 or ETDRK4 solver\cite{kassam2005fourth}. We verified the solver against the chebfun package\cite{Driscoll2014}, which generates the ground truth for the standard JAXPI benchmarks, and the two solutions agree to at least single precision.

\paragraph{Damped-Wave and Advection-Diffusion} We configure FB-PINNs with width $100$ across all layers. For the damped-wave and advection-diffusion benchmarks (including their time-variable variants), PirateNet uses four Pirate blocks, each with width $192$. FB-PINNs and Latent-MoE use $6$ temporal experts. The total parameter count for both ResNet and Latent-MoE is around $480{,}000$, while the baselines FB-PINNs and PirateNet are around $550{,}000$; this gap further strengthens the observed performance advantage. Each experiment is trained for $250000$ steps to balance training cost and convergence.

\paragraph{Layered-Wave} To align the actual domain decomposition, we use the 4 experts variant of the FB-PINNs, Latent-MoE, and widened ResNet, as well as the narrower PirateNet with width $160$ to match the parameter counts. Latent-MoE and ResNet have approximately $347{,}000$ parameters, FB-PINNs has approximately $365{,}000$, and PirateNet has approximately $387{,}000$. We follow the Damped-Wave and Advection-Diffusion for train related hyperparameter choice. Additionally, we use wrapping bump function routers to enforce the periodicity on the FB-PINNs and the Latent-MoE models. The global baseline models still use the periodic encoding. See Figure~\ref{fig:lw_router_setup} for the wrapping bump-router weights and expert centering.

\paragraph{Standard JAXPI benchmarks} For standard JAXPI benchmarks, we reduce the number of experts for FB-PINNs and Latent-MoE to $4$, and narrow PirateNet to width $160$ because these benchmarks are simpler. We also reduce training to $150,000$ steps based on loss and error convergence trend. Latent-MoE and ResNet have approximately $347{,}000$ parameters, FB-PINNs has approximately $365{,}000$, and PirateNet has approximately $387{,}000$.

\subsection{Huber loss function}
To prevent diminishing or exploding gradient, we employ the Huber loss that is linear on both small and large error magnitude.
\begin{align}
L(x;\epsilon,E) =
\begin{cases}
2\epsilon |x|, & |x| < \epsilon, \\
|x|^2 + \epsilon^2, & \epsilon \le |x| < E, \\
2 E |x| - E^2 + \epsilon^2, & |x| \ge E.
\end{cases}
\end{align}
We set $\epsilon = 0.01$ and $E = 1000$ in all experiments.

\subsection{Causality Enforcement}
We modify the original causality-enforcement method proposed by {\it Wang et al.}\ \cite{wang2022respecting}, where the causality coefficient $\epsilon$ is fixed or manually scheduled throughout training. The original causality weights for losses $\mathcal L_t$ are formulated as
\begin{equation}\label{eq:unnorm_causal}
  w_t = \exp\left[-\epsilon\sum_{s=0}^{t-1} \mathcal L_s\right],
\end{equation}
where $\epsilon$ has to be carefully tuned to balance the mitigation of error propagation and the vanishing gradient issue. In our experiments, we use relative causality weights, where the weights are calculated with respect to the first time step loss $\mathcal L_0$. That is,
\begin{equation}
  w'_t = \exp\left[-\frac{\epsilon}{T}\sum_{s=0}^{t-1} \mathcal L'_s\right], \quad \mathcal L'_s = \frac{\mathcal L_s}{\mathcal L_0},
\end{equation}
where $T$ is the number of temporal chunks. Furthermore, to avoid vanishing causality weights when the early time losses are not properly optimized, when the last time step weights $w'_{T-1}$ is smaller than a threshold $\tau$, we use the following alternative formulation for the causality weights:
\begin{align}\label{eq:dyn_causal}
  \epsilon' = -\frac{T\log(\tau)}{\sum_{s=0}^{T-2} \mathcal L'_s},
\end{align}
that is, all temporal-chunk losses still receive subtle but meaningful gradients even when early-time losses are not yet optimized, and the causality weights are automatically adjusted to avoid vanishing gradients. In our experiments, we set $\tau=0.001$. The ablation of the relative causal coefficient $\epsilon$ is reported in Appendix~\ref{appendix:ablation}. We use $8$ causal chunks for all experiments, and collocation points are sampled uniformly at random from the spatio-temporal domain for each chunk to compute $\mathcal L_t$.

For standard JAXPI benchmarks (\S\ref{subsec:exp_standard}), we keep the unnormalized causal weighting in Equation~\eqref{eq:unnorm_causal}. In our implementation the cumulative sum in Equation~\eqref{eq:unnorm_causal} is also divided by $T$. For the other benchmarks, we use the dynamic causal weighting in Equation~\eqref{eq:dyn_causal}.
\subsection{Loss balancing}
We adopt the loss balancing method used in \cite{wang2023expert} to balance the PDE residual loss with the initial and boundary condition losses. Let $G\in\mathbb R^{l\times p}$ be the gradient tensor of the losses, where $l$ is the number of loss terms and $p$ is the number of parameters. The weight for each loss term is calculated as 
\begin{align}
  w_i = \frac{\mu_g}{\|G_i\|_2 + \epsilon \mu_g}, \mu_g = \frac{1}{l}\sum_{i=1}^{l} \|G_i\|_2,
\end{align}
where the small added $\epsilon$ term is used to avoid the dominance of the convergent loss term with minimal gradient magnitude. In our experiments, we set $\epsilon=0.0001$. The weights are then normalized to sum to one and smoothed by an exponential moving average with rate $0.01$, updated every $1000$ steps.

\subsection{Gradient Conflict Index}\label{app:gradient_conflict_index}
In the main text, we defined the gradient conflict index on the gradients of the PDE residual loss evaluated on the temporal residual chunks as 
\begin{align}
  G(J) = 1 - \frac{\left|\left|\sum_i J_{i,\cdot} \right|\right|}{\sum_i \left|\left|J_{i,\cdot}\right|\right|}.
\end{align}
$G$ serves as a proxy for Jacobian correlation, which is central to NTK theory. When the number of points sampled for the $i$-th chunk $c_i$ approaches infinity, the Jacobian $J_{i,\cdot}$ converges to the expectation
\begin{align}
  J_{i,\cdot}\to\mathbb{E}_{x\in c_i}[r(x)\nabla_\theta f(x)],
\end{align}
where $r(x)$ is the PDE residual at point $x$. Then
\begin{align}
  \langle J_{i,\cdot}, J_{j,\cdot} \rangle &\to \mathbb{E}_{x\in c_i, y\in c_j}[\langle r(x)\nabla_\theta f(x), r(y)\nabla_\theta f(y) \rangle],\\
  &=\mathbb{E}_{x\in c_i, y\in c_j}[r(x)K(x,y)r(y)],\quad \text{by the linearity of the expectation}.
\end{align}
Then the numerator of the gradient-conflict index can be asymptotically viewed as
\begin{align}
  \left\|\sum_i J_{i,\cdot}\right\| &\to \sqrt{\sum_{i,j} \mathbb{E}_{x\in c_i, y\in c_j}[r(x)K(x,y)r(y)]},
\end{align}
and the denominator can be asymptotically viewed as
\begin{align}
  \sum_i \|J_{i,\cdot}\| &\to \sum_i \sqrt{\mathbb{E}_{x\in c_i, y\in c_i}[r(x)K(x,y)r(y)]}.
\end{align}
Therefore, the gradient-conflict index can be viewed as a function of the NTK $K$
\begin{align}
  G(K) \to 1 - \frac{\sqrt{\sum_{i,j} \mathbb{E}_{x\in c_i, y\in c_j}[r(x)K(x,y)r(y)]}}{\sum_i \sqrt{\mathbb{E}_{x\in c_i, y\in c_i}[r(x)K(x,y)r(y)]}},
\end{align}
when the network width and the number of data points for each chunk both approach infinity.\\
Although these asymptotic conditions are difficult to attain exactly, the gradient-conflict index is still a practical proxy to quantify the negative correlation and tracks real training dynamics and loss evolution.

\subsection{Advection-Diffusion}

Figure \ref{fig:adv_coefficients_appendix} shows the time-variable coefficient profiles used in the advection-diffusion benchmark. The advection speed is $c(t) = 6 + 1.5\sin(2\pi t) + 3\sin(6\pi t + \pi/5) + 0.4\cos(10\pi t) + 0.2t$, while the diffusion coefficient $\kappa(t)$ is a piecewise function that has different constant values in three stages, with smooth transitions between stages. The transitions are $\tanh$ ramps of the form $\tfrac{1}{2}(1 + \tanh((t - t_0)/w))$ with widths $w = 0.14$ and $w = 0.06$, for the 2 transitions.

\begin{figure}[t]
  \centering
  \begin{subfigure}[t]{0.4\linewidth}
    \centering
    \includegraphics[width=\linewidth]{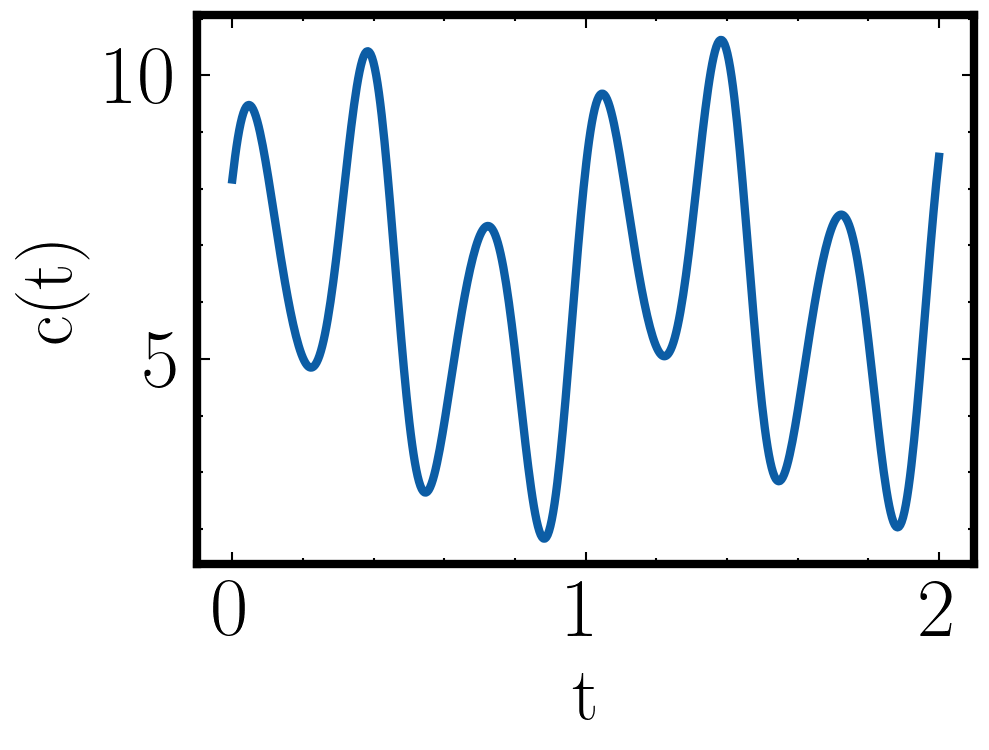}
    \caption{Time-variable advection speed $c(t)$.}
  \end{subfigure}
  \begin{subfigure}[t]{0.4\linewidth}
    \centering
    \includegraphics[width=\linewidth]{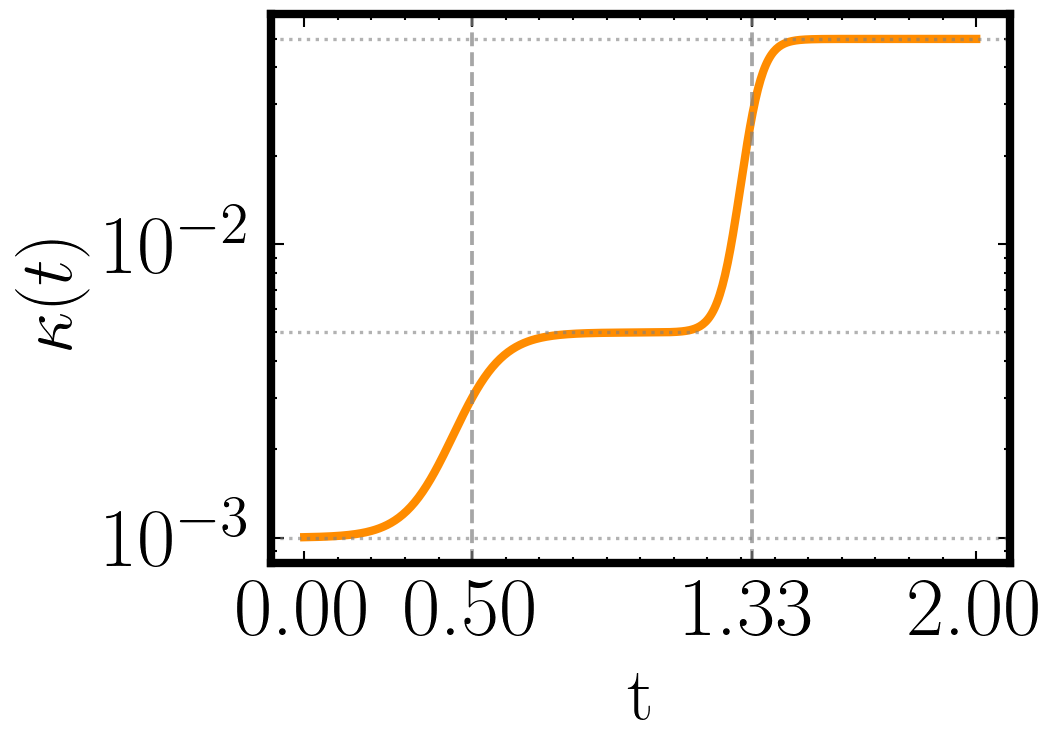}
    \caption{Time-variable diffusion coefficient $\kappa(t)$.}
  \end{subfigure}
  \caption{Coefficient profiles used in the advection-diffusion benchmark. The diffusion plot uses log scale, and dashed lines indicate stage boundaries at $t=0.5$ and $t=4/3$.}
  \label{fig:adv_coefficients_appendix}
\end{figure}

\subsection{Damped Wave Equation with forcing}

 Figure~\ref{fig:damped_wave_force_injection_appendix} shows the force-injection profile used in the damped-wave benchmark, where forcing is injected in the middle stage of the time domain, following the damping stage, and the system evolves freely in the first stage before the damping and the last stage after the force injection. The forcing term ramps up linearly over the first third of the force-injection stage and then stays constant until that stage ends.

\begin{figure}[t]
  \centering
  \includegraphics[width=0.50\linewidth]{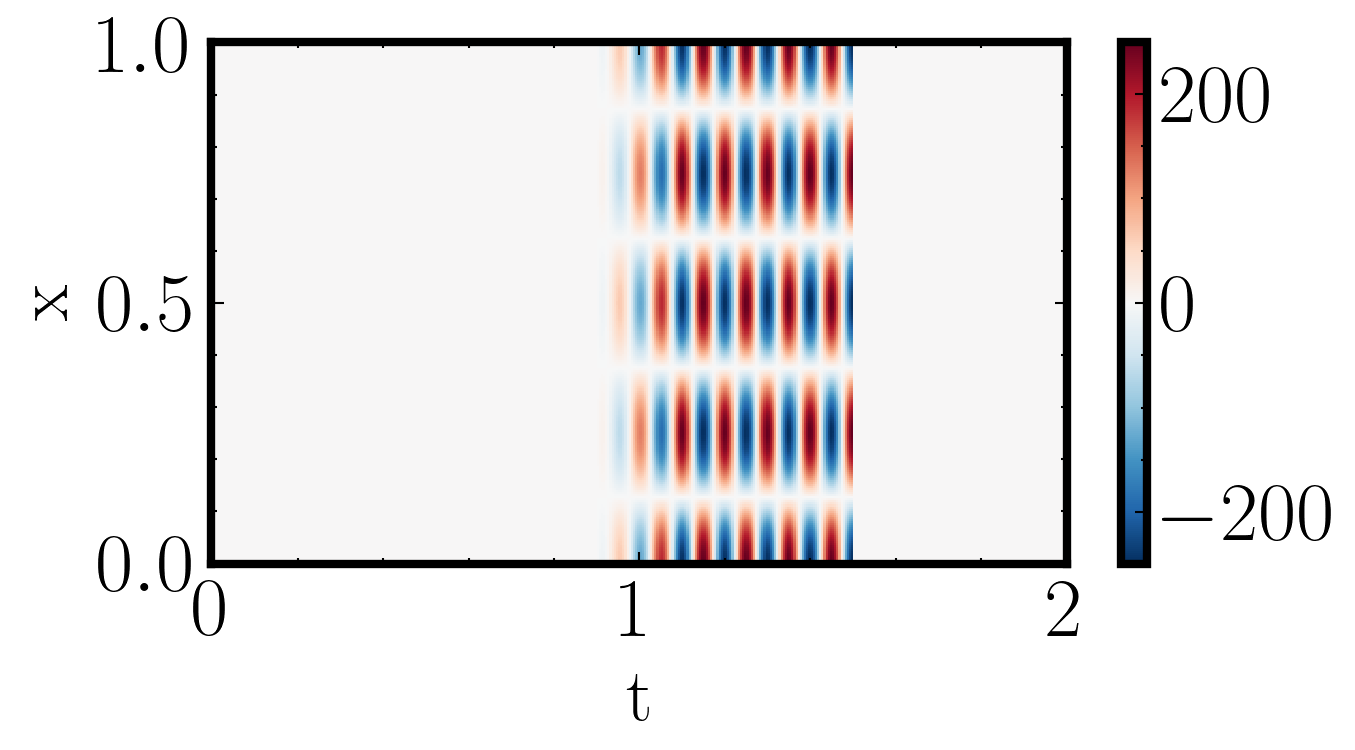}
  \caption{Force-injection profile used in the damped-wave benchmark. Dashed lines mark the boundaries between stages.}
  \label{fig:damped_wave_force_injection_appendix}
\end{figure}

\subsection{Layered Wave Equation}\label{appendix:lw_details}

The layered-wave benchmark solves the pure second-order wave equation
\begin{equation}
  u_{tt} = c(x)^2\, u_{xx}, \qquad x \in [0, 1),\; t \in [0, 1],
\end{equation}
with periodic spatial boundary conditions. The wave speed is a periodic, piecewise-smooth function defined by four spatial layers with speeds $c_1 = 2.0$, $c_2 = 4.0$, $c_3 = 0.5$, $c_4 = 3.0$. The initial displacement $u(x,0)$ is a sine wave whose local spatial frequency varies with the layer structure: in each layer, the frequency is set proportionally to the spatial Fourier modes $\{6, 4, 5, 4\}$, producing a $C^2$-smooth initial condition by construction. The initial velocity is zero, $u_t(x,0) = 0$. Figure~\ref{fig:lw_benchmark_setup} shows the wave speed profile and initial condition.
\paragraph{Initial condition} We use a cosine ramp to connect different local frequencies of adjacent layers. Given an interval $[a,b]$ where the frequency ramps up from $A_l$ to $A_r$, we construct the effective frequency
\begin{align}
    A_{\text{eff}}(x) = \frac{A_r + A_l}{2} - \frac{A_r - A_l}{2}\cos\left(\frac{\pi(x-a)}{b-a}\right)
\end{align}
Let $\varphi(x) = \int_0^x A_{\text{eff}}(s) ds$, $\varphi_L = \varphi(1), N=\operatorname{round}(\varphi_L)$, the initial condition is
\begin{align}
    u(x,0) = \sin\left(\frac{2\pi N}{\varphi_L}\varphi(x)\right),
\end{align}
where the scaling $N$ ensures the periodicity of the initial condition. For this benchmark we decompose the \emph{spatial} axis into four expert subdomains. The bump-router boundaries are tiled uniformly on $[0,1)$ with the same plateau ratio (the plateau length divided by the bridge length) used to define the physical layers; the spatial shift $\delta$ in Table~\ref{tab:lw_ablation} rigidly offsets all router boundaries by $\delta$ to probe sensitivity to misalignment. Figure~\ref{fig:lw_router_setup} shows the corresponding bump-router weights and expert centering.

\begin{figure}[H]
  \centering
  \begin{subfigure}[b]{0.48\textwidth}
    \centering
    \includegraphics[width=\linewidth]{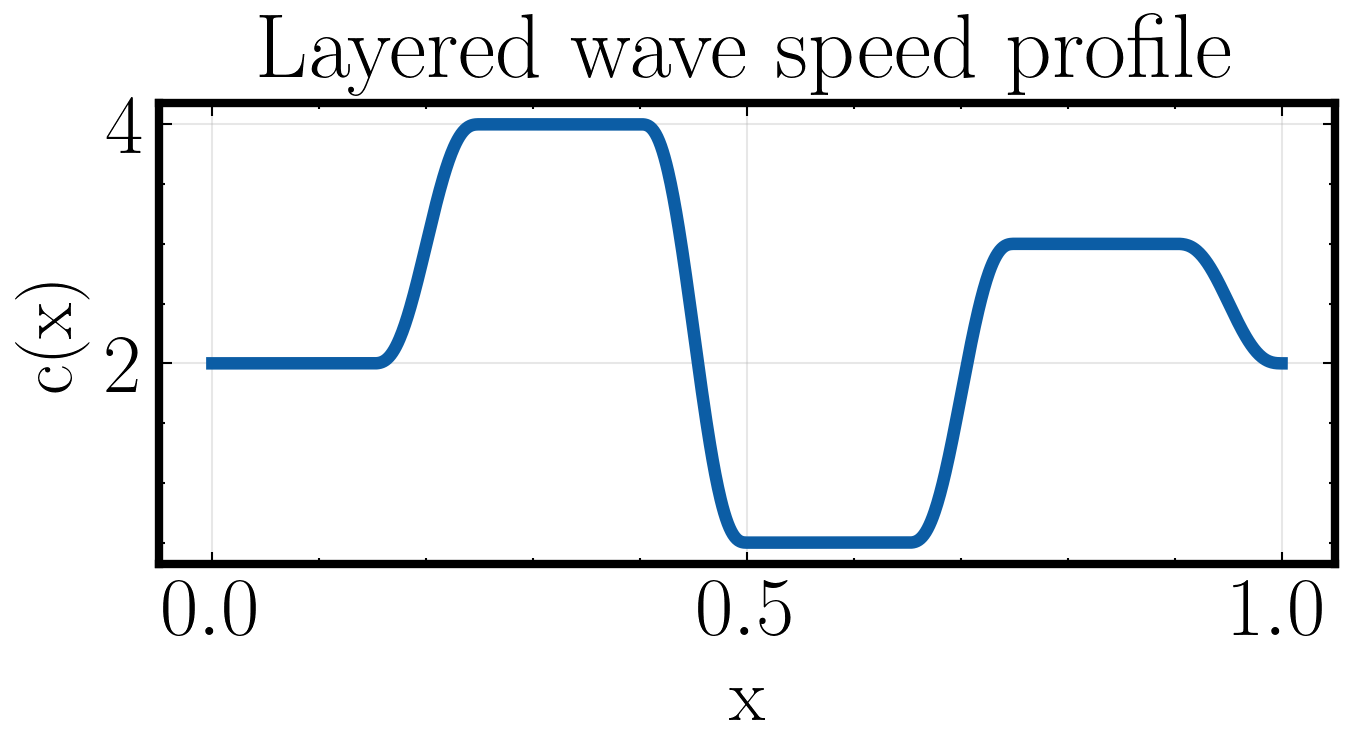}
    \caption{}
  \end{subfigure}
  \begin{subfigure}[b]{0.40\textwidth}
    \centering
    \includegraphics[width=\linewidth]{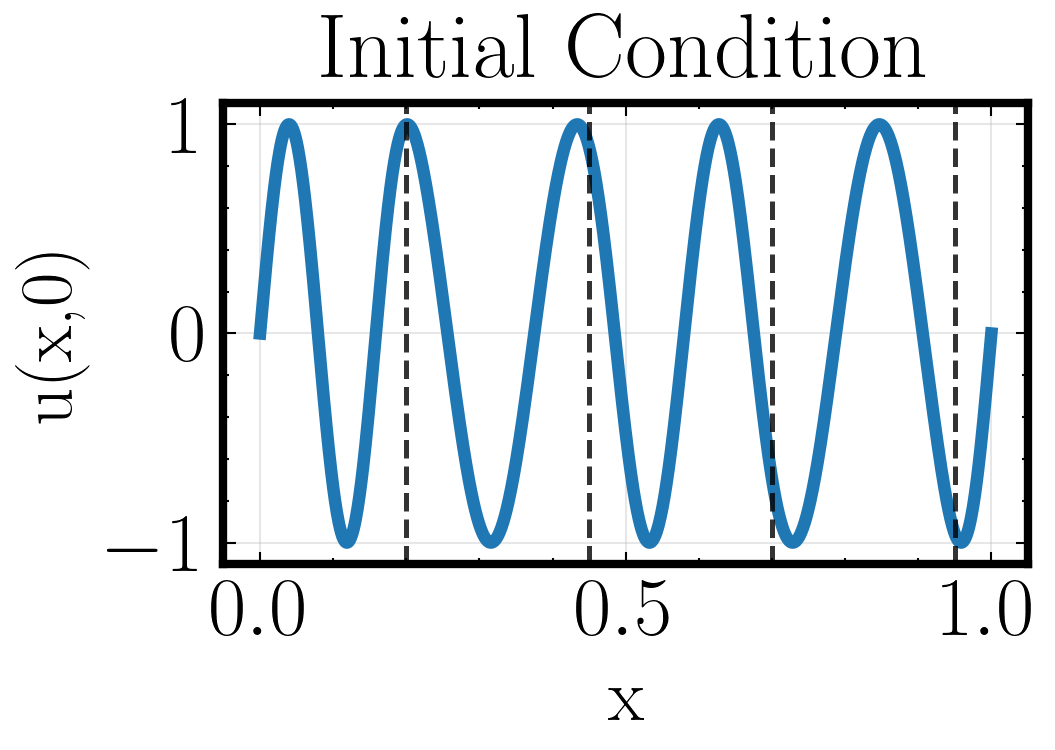}
    \caption{}
  \end{subfigure}
  \caption{Layered-wave benchmark: (a) wave speed and (b) initial condition.}
  \label{fig:lw_benchmark_setup}
\end{figure}

\begin{figure}[H]
  \centering
  \begin{subfigure}[b]{0.55\textwidth}
    \centering
    \includegraphics[width=\linewidth]{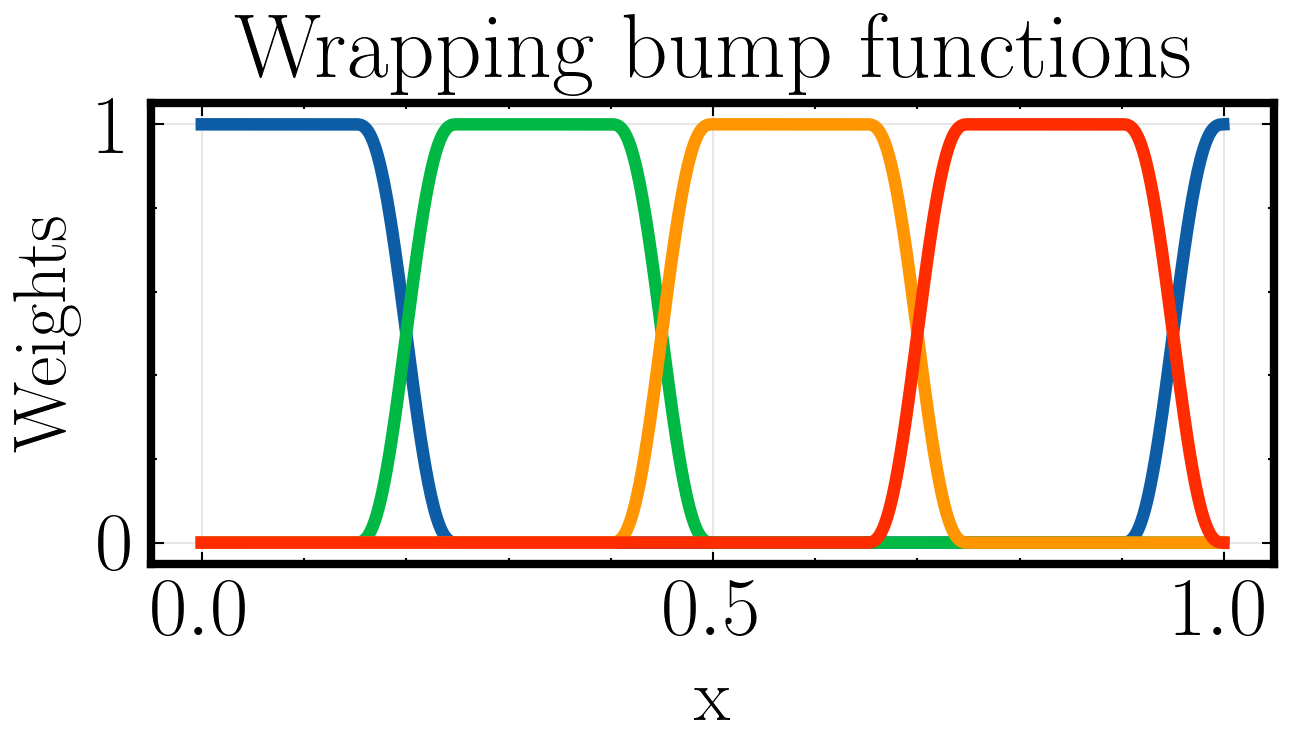}
    \caption{}
  \end{subfigure}
  \hfill
  \begin{subfigure}[b]{0.35\textwidth}
    \centering
    \includegraphics[width=\linewidth]{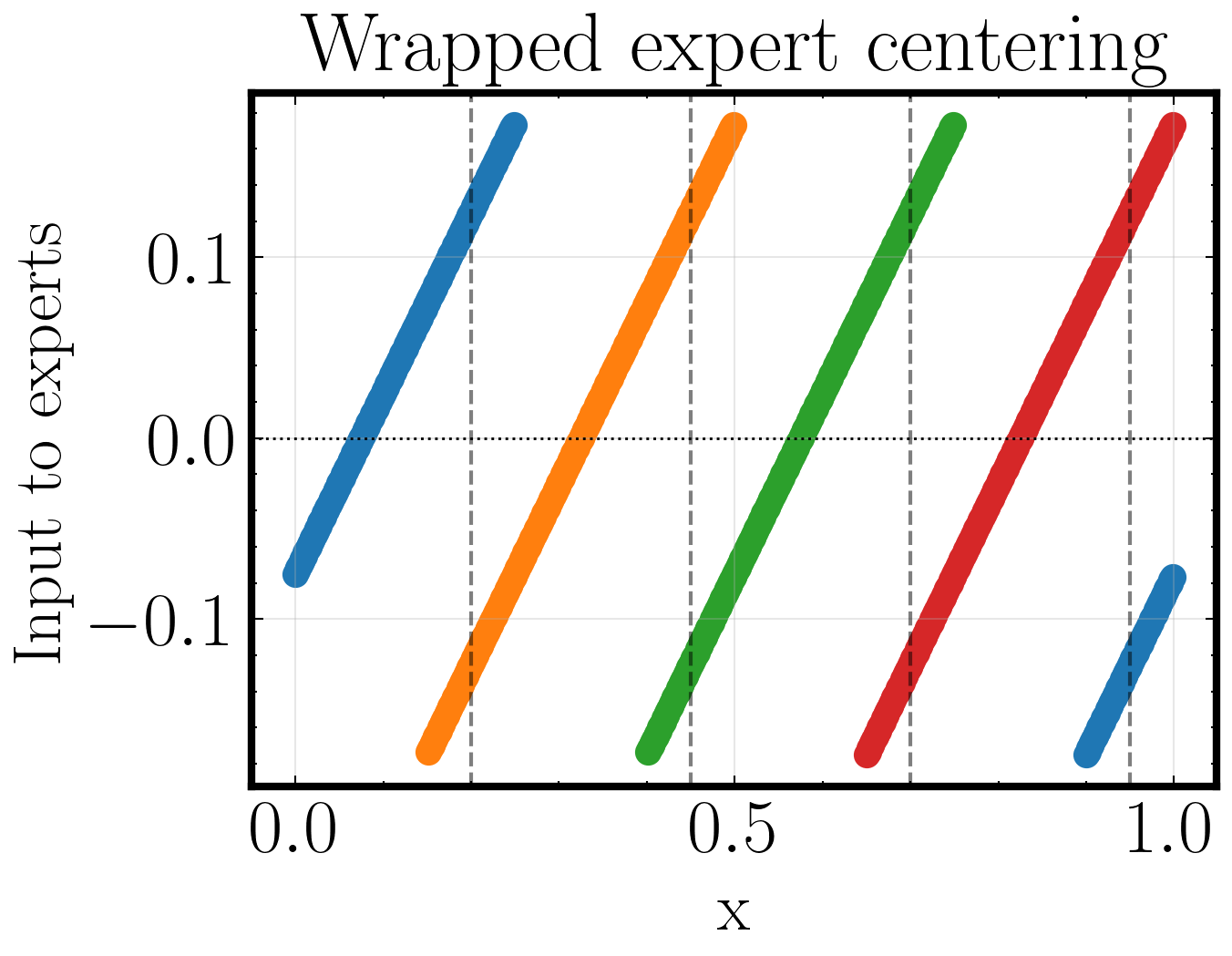}
    \vspace{1mm} 
    \caption{}
  \end{subfigure}
  \caption{{\small Wrapped domain decomposition to preserve periodicity for the Layered-Wave benchmark. (a) Wrapping bump functions: To enforce periodic boundary conditions, the first bump function (blue) wraps around the domain limits, smoothly connecting the left and right boundaries. (b) Wrapped expert centering: The local coordinate mapping for the experts. The spatial input for the first expert is similarly wrapped, ensuring continuous and consistent centering across the periodic boundary.}}
  \label{fig:lw_router_setup}
\end{figure}

\subsection{Baseline model visualizations}
We provide additional error visualizations for the FB-PINNs and PirateNet baselines in Figure \ref{fig:appendix_baseline_error_visualizations}, where the median prediction error across random seeds is visualized for each model and benchmark. The ground truth visualizations for all three benchmarks are shown in Figure \ref{fig:appendix_ground_truth_visualizations}.
\begin{figure}[H]
  \centering
  \begin{subfigure}[t]{0.34\textwidth}
    \centering
    \includegraphics[width=\linewidth]{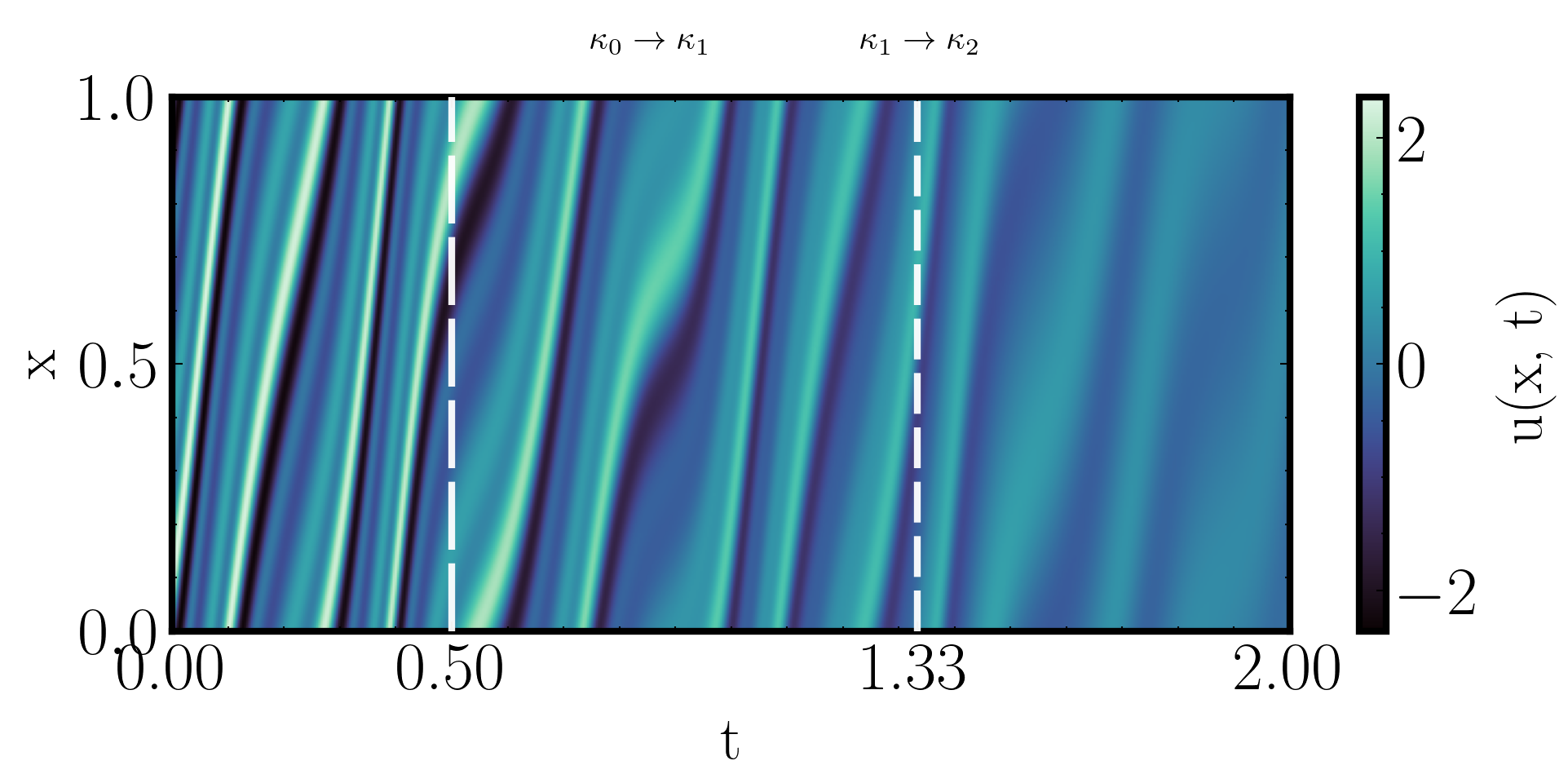}
    \caption{}
  \end{subfigure}
  \begin{subfigure}[t]{0.34\textwidth}
    \centering
    \includegraphics[width=\linewidth]{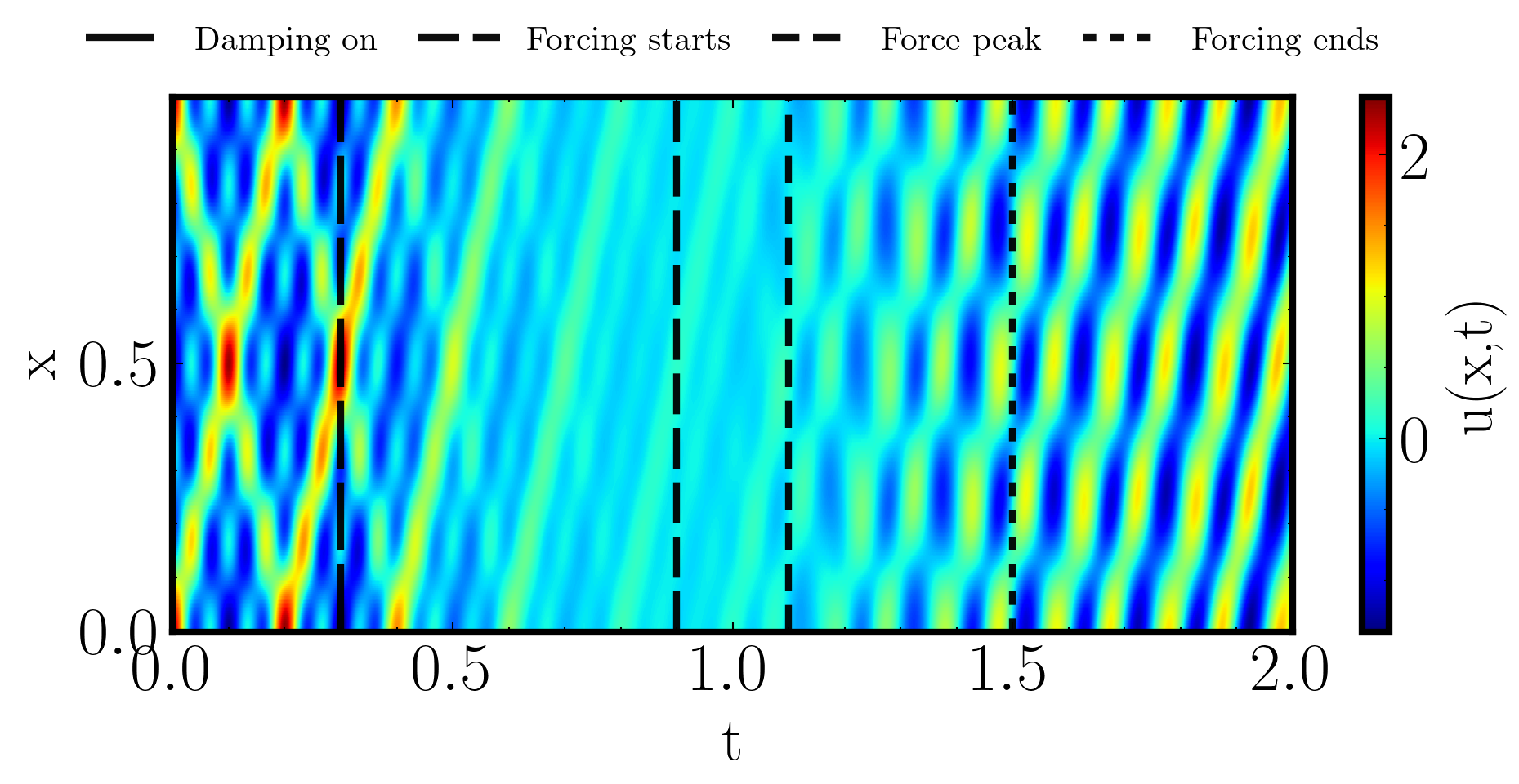}
    \caption{}
  \end{subfigure}
  \begin{subfigure}[t]{0.20\textwidth}
    \centering
    \includegraphics[width=\linewidth]{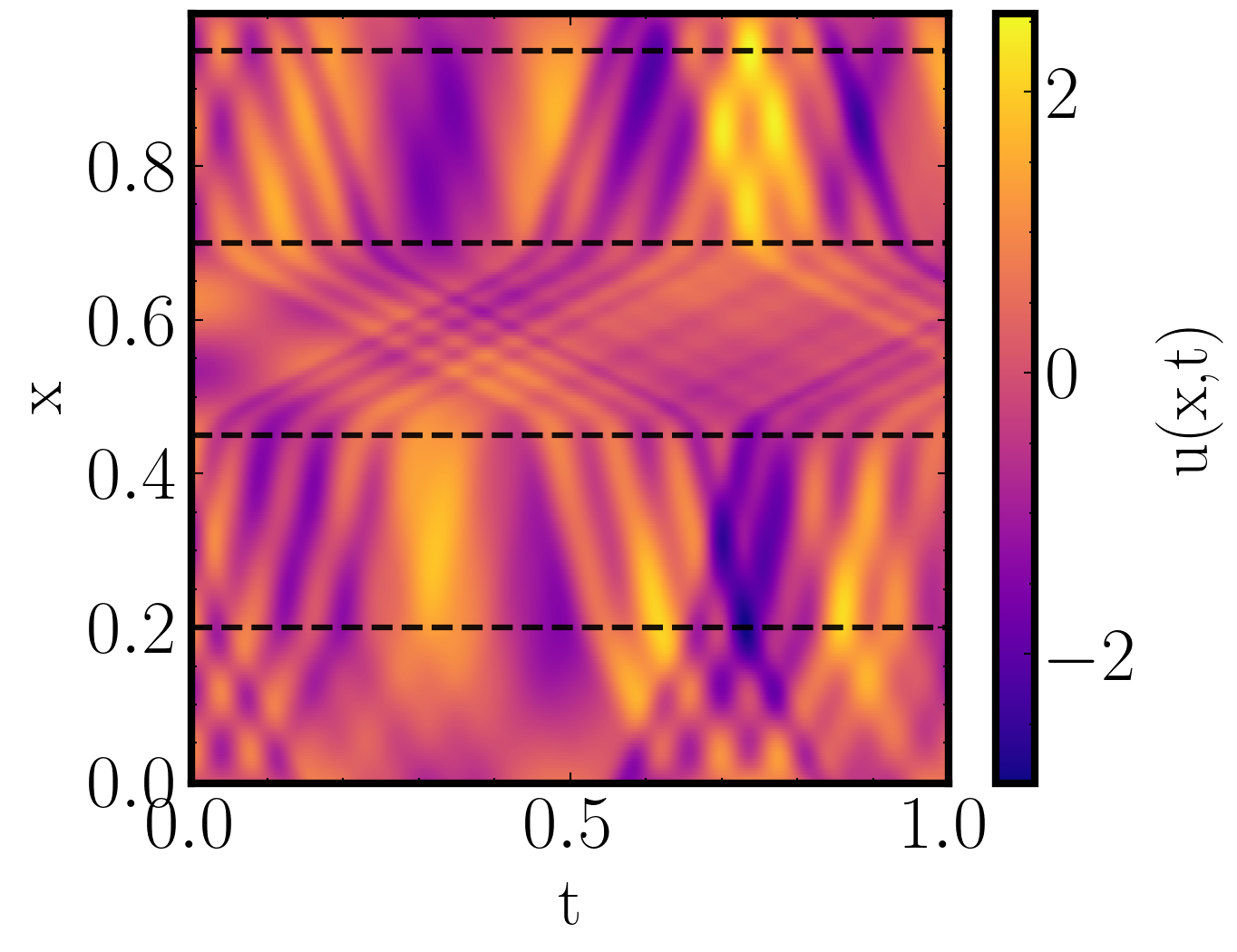}
    \caption{}
  \end{subfigure}
  
  \caption{Ground-truth visualizations for the (a) advection-diffusion (AD), (b) damped-wave (DW), and (c) layered-wave (LW) benchmarks.}
  \label{fig:appendix_ground_truth_visualizations}
\end{figure}
\begin{figure}[h]
  \centering
  \begin{subfigure}[t]{0.48\textwidth}
    \centering
    \includegraphics[width=\linewidth]{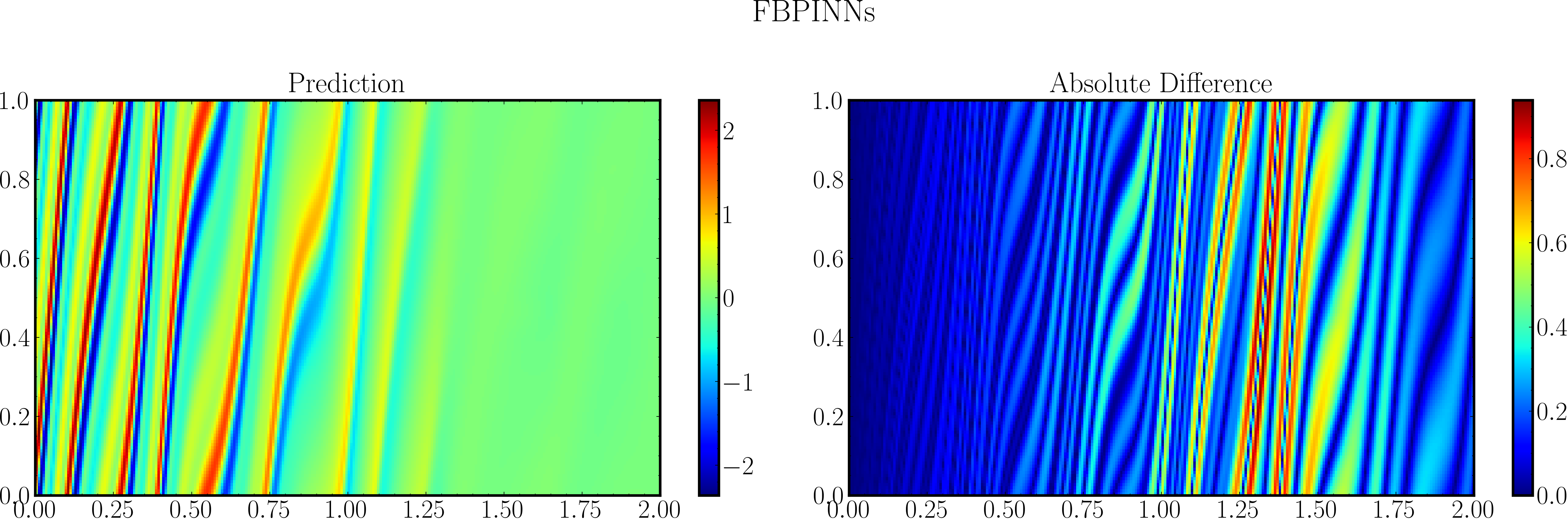}
    \caption{AD: FB-PINNs.}
  \end{subfigure}\hfill
  \begin{subfigure}[t]{0.48\textwidth}
    \centering
    \includegraphics[width=\linewidth]{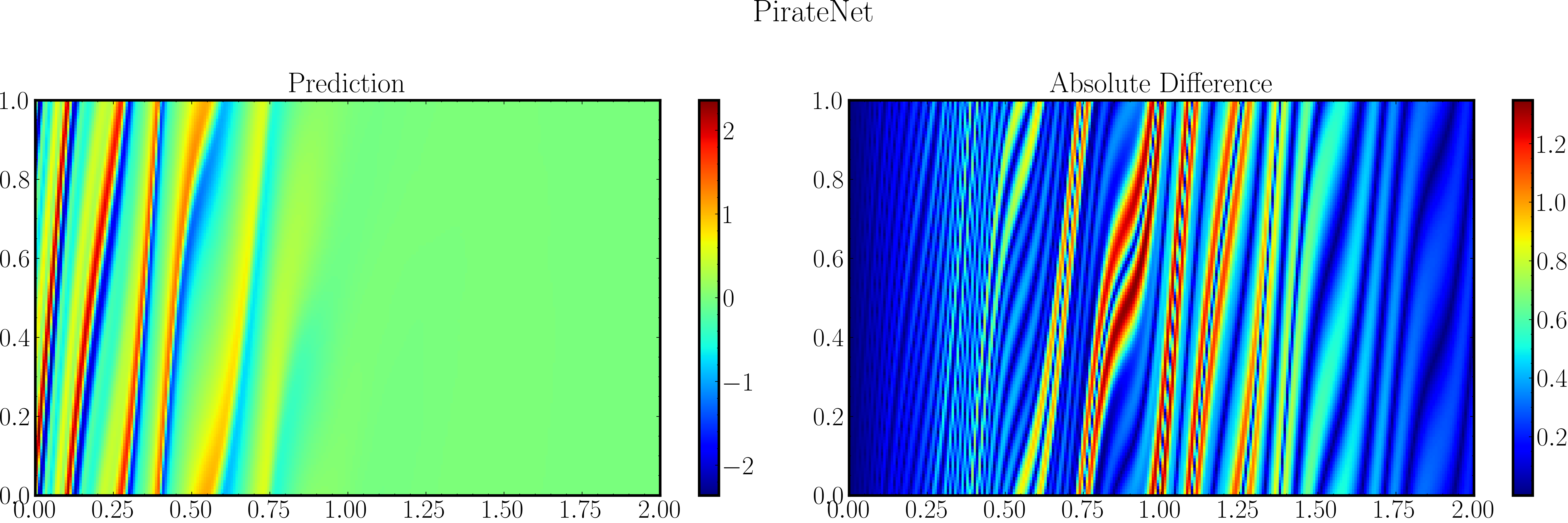}
    \caption{AD: PirateNet.}
  \end{subfigure}

  \vspace{0.6em}

  \begin{subfigure}[t]{0.48\textwidth}
    \centering
    \includegraphics[width=\linewidth]{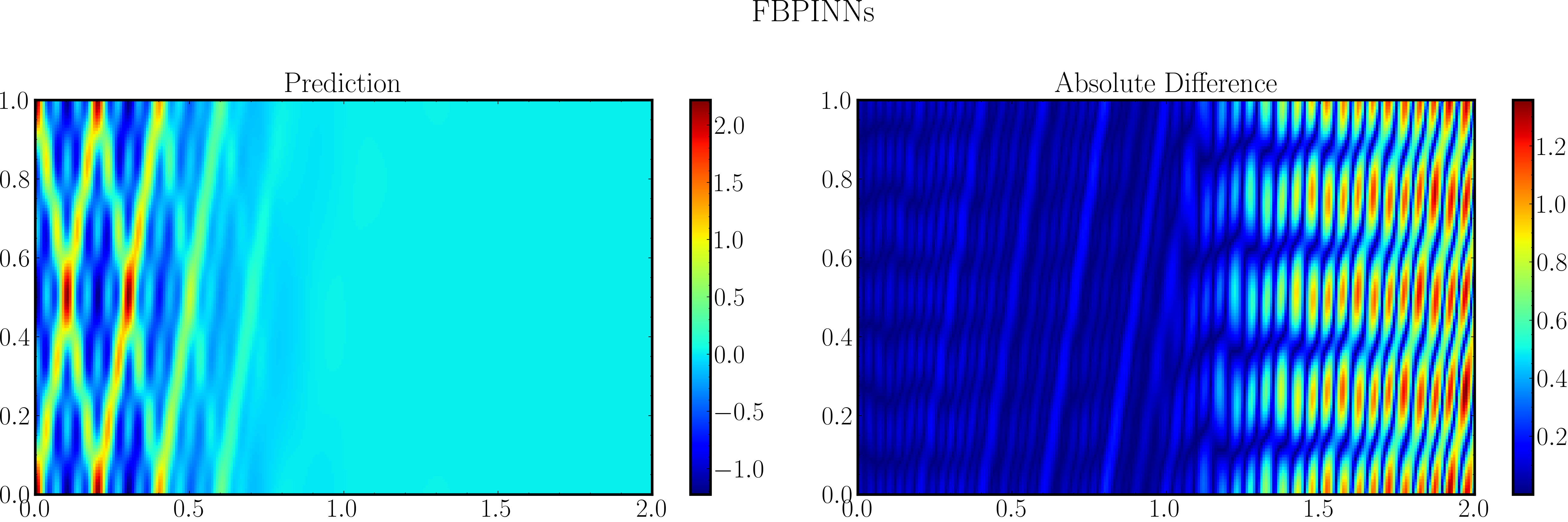}
    \caption{DW: FB-PINNs.}
  \end{subfigure}\hfill
  \begin{subfigure}[t]{0.48\textwidth}
    \centering
    \includegraphics[width=\linewidth]{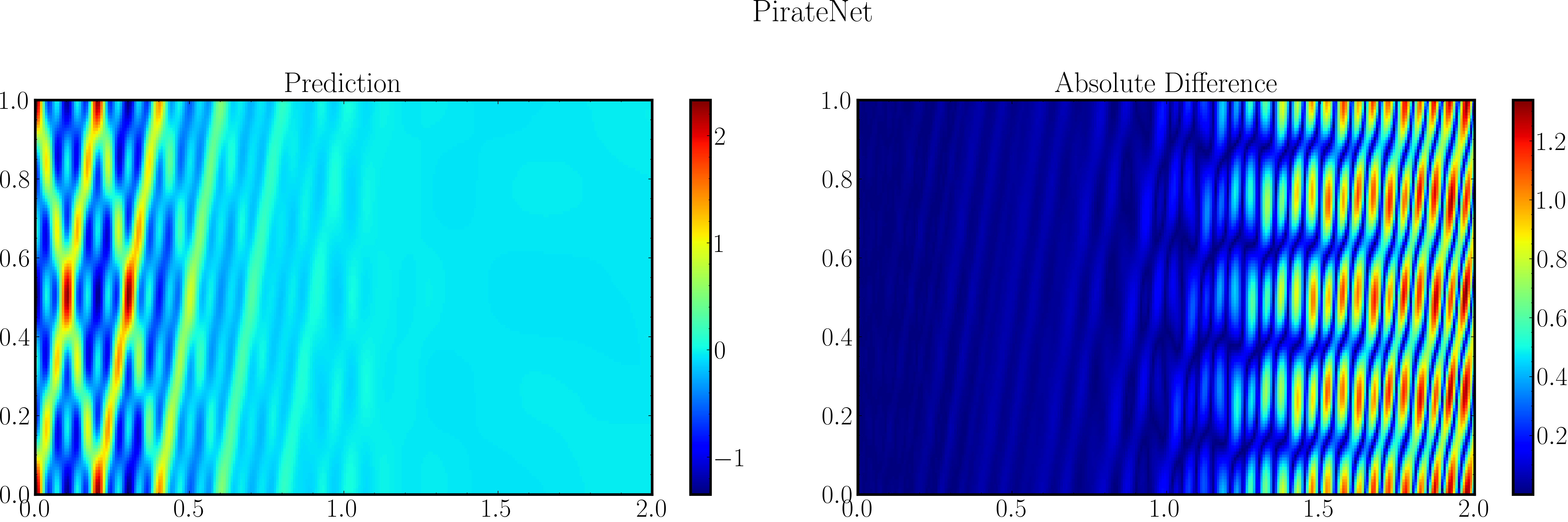}
    \caption{DW: PirateNet.}
  \end{subfigure}

  \vspace{0.6em}

  \begin{subfigure}[t]{0.48\textwidth}
    \centering
    \includegraphics[width=\linewidth]{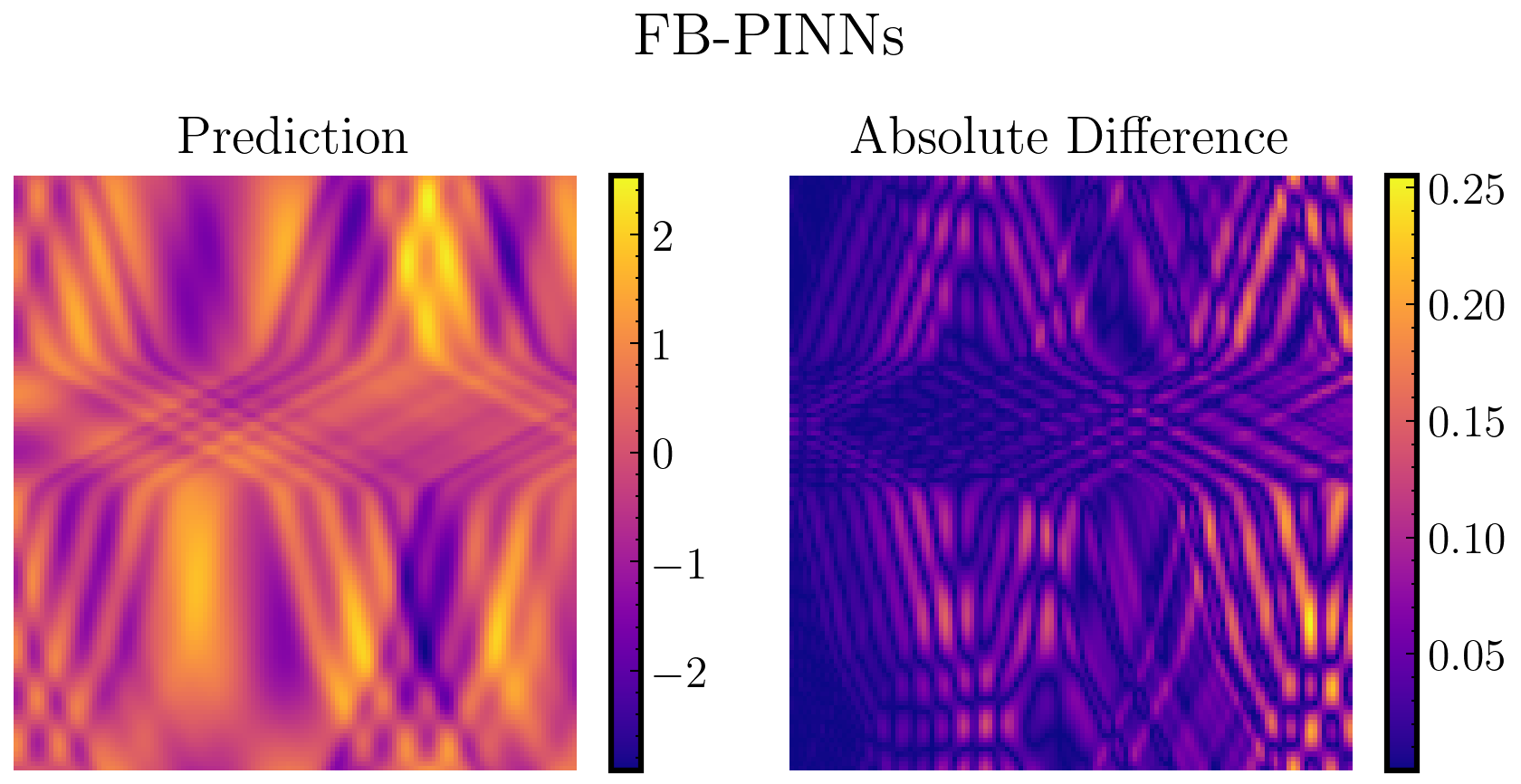}
    \caption{LW: FB-PINNs.}
  \end{subfigure}\hfill
  \begin{subfigure}[t]{0.48\textwidth}
    \centering
    \includegraphics[width=\linewidth]{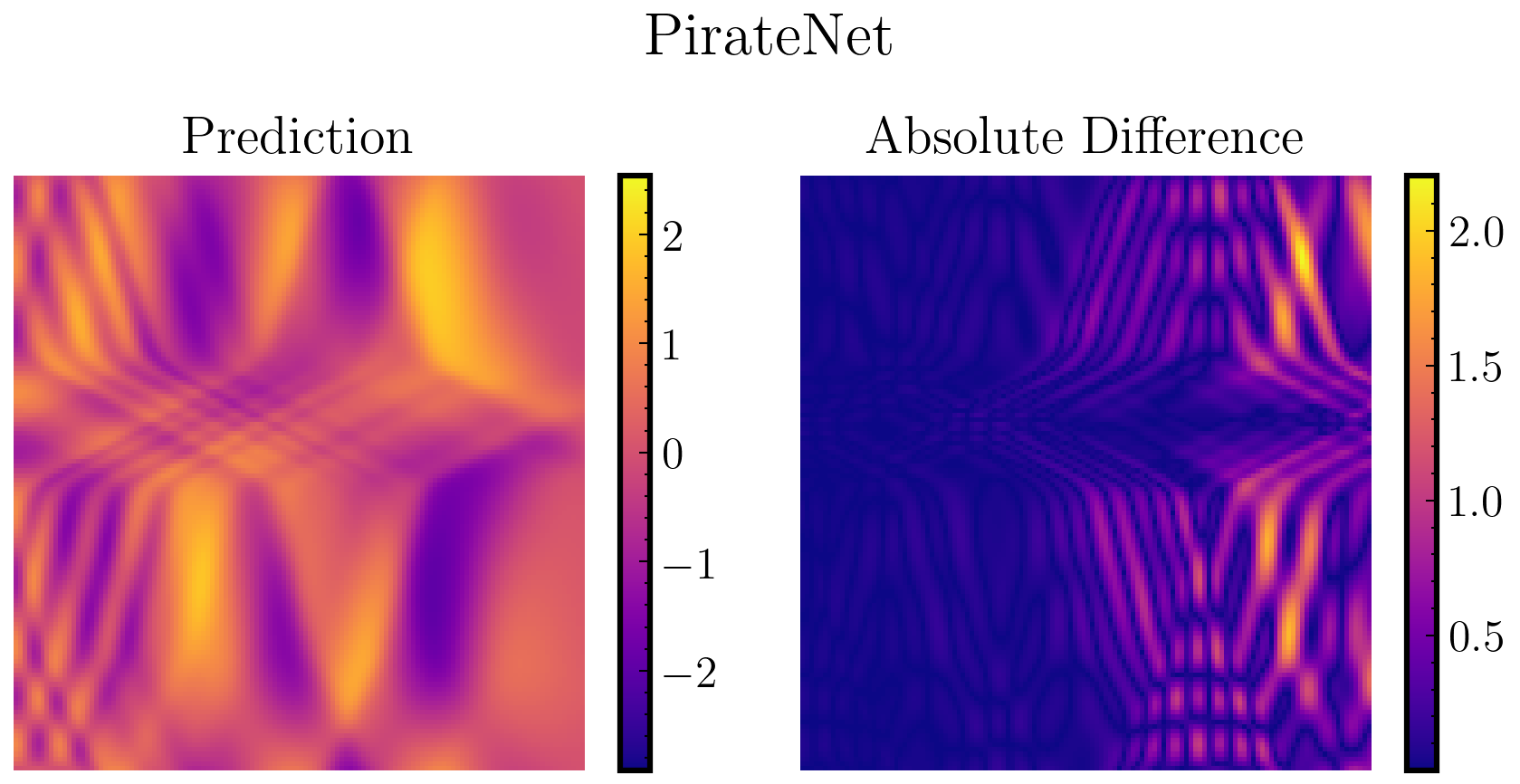}
    \caption{LW: PirateNet.}
  \end{subfigure}
  \caption{Median prediction-error visualizations for FB-PINNs and PirateNet arranged by benchmark: top row advection-diffusion (AD), middle row damped-wave (DW), bottom row layered-wave (LW).}
  \label{fig:appendix_baseline_error_visualizations}
\end{figure}
\subsection{Detailed model statistics}\label{appendix:benchmark_model_stats}
Table~\ref{tab:benchmark_model_stats} reports the relative $L_2$ error statistics for all models and benchmarks. The mean and standard deviation are calculated across 5 random seeds, and the median is also provided to mitigate the influence of outliers. All the visualizations are evaluated on the trained model with the median error within its class.
\begin{table}[htbp]
    \centering

    \renewcommand{\arraystretch}{1.22}
    \vspace{0pt}
      \centering
          \begin{tabular}{llcc}
            \hline
             & Model & Mean $\pm$ 2 std & Median \\
            \hline
            \multirow{4}{*}{AC} & FB-PINNs & 3.032e-05 $\pm$ 1.485e-05 & 2.735e-05 \\
            & L-MoE & 2.881e-05 $\pm$ 8.749e-06 & 2.739e-05 \\
            & PirateNet & 2.256e-05 $\pm$ 3.744e-06 & 2.197e-05 \\
            & ResNet & \textbf{2.145e-05 $\pm$ 1.536e-06} & \textbf{2.172e-05} \\
            \hline
            \multirow{4}{*}{KdV} & FB-PINNs & 8.610e-05 $\pm$ 5.639e-05 & 8.754e-05 \\
            & L-MoE & 1.315e-04 $\pm$ 5.004e-05 & 1.290e-04 \\
            & PirateNet & 1.032e-04 $\pm$ 6.231e-05 & 9.398e-05 \\
            & ResNet & \textbf{6.329e-05 $\pm$ 4.931e-05} & \textbf{4.957e-05} \\
            \hline
            \multirow{4}{*}{Burgers} & FB-PINNs & 4.130e-05 $\pm$ 1.698e-06 & 4.126e-05 \\
            & L-MoE & 4.117e-05 $\pm$ 3.196e-06 & 4.044e-05 \\
            & PirateNet & \textbf{4.026e-05 $\pm$ 6.032e-08} & 4.027e-05 \\
            & ResNet & 4.034e-05 $\pm$ 5.145e-07 & \textbf{4.023e-05} \\
            \hline
            \multirow{4}{*}{AD} & FB-PINNs & 3.448e-01 $\pm$ 1.723e-01 & 3.732e-01 \\
            & L-MoE & \textbf{1.423e-02 $\pm$ 2.562e-02} & \textbf{1.019e-02} \\
            & PirateNet & 6.030e-01 $\pm$ 7.287e-02 & 6.067e-01 \\
            & ResNet & 5.302e-01 $\pm$ 2.001e-01 & 5.532e-01 \\
            \hline
            \multirow{4}{*}{DW} & FB-PINNs & 7.431e-01 $\pm$ 1.084e-02 & 7.406e-01 \\
            & L-MoE & \textbf{1.058e-02 $\pm$ 9.643e-03} & \textbf{1.056e-02} \\
            & PirateNet & 7.394e-01 $\pm$ 6.675e-03 & 7.396e-01 \\
            & ResNet & 5.446e-01 $\pm$ 6.684e-01 & 7.345e-01 \\
            \hline
            \multirow{4}{*}{LW} & FB-PINNs & 6.932e-02 $\pm$ 1.454e-02 & 6.622e-02 \\
            & L-MoE & \textbf{4.264e-02 $\pm$ 2.084e-02} & \textbf{3.637e-02} \\
            & PirateNet & 5.067e-01 $\pm$ 2.275e-01 & 4.835e-01 \\
            & ResNet & 1.814e-01 $\pm$ 1.689e-01 & 1.471e-01 \\
            \hline
          \end{tabular}
      \vspace{6pt}
      \caption{Relative $L_2$ error statistics for Allen--Cahn (AC), KdV, Burgers, advection-diffusion (AD), damped wave (DW), and layered wave (LW). The mean and standard deviation are calculated across 5 random seeds. The median is also provided to mitigate the influence of outliers. The best performance for each benchmark is highlighted in bold.}
      \label{tab:benchmark_model_stats}
\end{table}

\clearpage
\section{Ablation study}\label{appendix:ablation}

We perform ablations on several key design choices in our method. The process is greedy. That is, we start from a manually chosen hyperparameter set, then ablate one hyperparameter at a time while keeping others fixed. The best choice from each round is carried to the next round. We run ablations on both advection-diffusion and damped-wave benchmarks. Because the ablation grid is extensive, we use a fixed random seed and do not run multiple seeds per setting to avoid conflating trends with initialization noise. The following tables report relative $L_2$ error for each design on Latent-MoE. AD denotes advection-diffusion and DW denotes damped wave.

\subsection{Encoder}\label{appendix:encoder_ablation}
Table~\ref{tab:ablation_encoder_latent_moe} shows encoder-design ablations. The regular encoder is a periodic embedding followed by random Fourier features. We find that the centered, weight-shared MoE encoder performs best on
both benchmarks. The centered-but-independent encoder (separate $g_i$
per subdomain) does substantially worse, as does the uncentered MoE
encoder. We conjecture that the encoder defines the base coordinate representation; while random Fourier features enrich it, independent random projections per expert can introduce representation misalignment across experts.
\paragraph{Regular} A regular encoder $g$ processes the input coordinate uniformly without per-subdomain adjustment.
\paragraph{Shared MoE, centered} A MoE encoder with only a set of parameter that is shifted to the centroid of each subdomain is described in the main manuscript (\S~\ref{subsec:encoder}).
\paragraph{MoE, centered} A truly MoE encoder with centering further uses different encoders $g_i$ for $i-$th subdomain, in addition to the per-subdomain shift. 
\begin{equation}
  z^{(0)} = \sum_{i=1}^{E} \phi(x^{\text{dd}})_i g_i(x^{\text{dd}} - c_i, \, x^{\text{ndd}}),
\end{equation}
where $x^{\text{dd}}$, $x^{\text{ndd}}$ denote the decomposed and
non-decomposed coordinates, and $g_i$ has the same encoder architecture but a distinctive set of parameters that dedicates to the $i-$th subdomain.
\paragraph{MoE, uncentered} Uncentered MoE encoder removes the per expert centering. That is, 
\begin{equation}
  z^{(0)} = \sum_{i=1}^{E} \phi(x^{\text{dd}})_i g_i(x^{\text{dd}}, \, x^{\text{ndd}}),
\end{equation}
\begin{table}[H]
  \centering
  \caption{Encoder ablation.}
  \label{tab:ablation_encoder_latent_moe}
  \begin{tabular}{lcc}
    \hline
    Encoder & AD & DW \\
    \hline
    Regular & 0.21985 & 0.69486 \\
    Shared MoE, centered & \textbf{0.00285} & \textbf{0.0049675} \\
    MoE, centered & 0.4816 & 0.73586 \\
    MoE, uncentered &0.28402 &0.73568\\
    \hline
  \end{tabular}
\end{table}

\subsection{Bump function plateau ratio}
Table \ref{tab:ablation_plateau_latent_moe} shows the ablation results for the bump function plateau ratio, which controls the extent of the overlap between neighboring experts. We find that the best plateau ratio is $1$, which means that the width of the plateau is the same as the width of the bridge. The plateau ratio here is the plateau length divided by the bridge length, so a ratio of $0$ gives pure bridges with maximal overlap and the partition becomes fully disjoint as the ratio goes to infinity.
\begin{table}[H]
  \centering
  \caption{Bump function plateau ratio ablation.}
  \label{tab:ablation_plateau_latent_moe}
  \begin{tabular}{ccc}
    \hline
    Plateau ratio & AD & DW \\
    \hline
    0 & 0.004108 & 0.005607 \\
    1 & \textbf{0.002391} & \textbf{0.004967} \\
    2 & 0.003808 & 0.018568 \\
    3 & 0.010355 & 0.052192 \\
    \hline
  \end{tabular}
\end{table}

\subsection{\texorpdfstring{Relative causality coefficient $\epsilon$}{Relative causality coefficient epsilon}}
Table \ref{tab:ablation_epsilon_latent_moe} shows the ablation results for the relative causality coefficient $\epsilon$. We find that the best $\epsilon$ is $4$ for the advection-diffusion benchmark and $2$ for the damped wave benchmark. However, we do note that the performance is not very sensitive to the choice of $\epsilon$ within a reasonable range.
\begin{table}[H]
  \centering
  \caption{Relative causality coefficient ablation.}
  \label{tab:ablation_epsilon_latent_moe}
  \begin{tabular}{lcc}
    \hline
    $\epsilon$ & AD & DW \\
    \hline
    0 & 0.002504 & 0.0048625 \\
    2 & 0.002807 & \textbf{0.004506} \\
    4 & \textbf{0.002482} & 0.0096405 \\
    8 & 0.003511 & 0.01152 \\
    \hline
  \end{tabular}
\end{table}

\section{Scaling experiments}\label{appendix:scaling}
In addition to hyperparameter ablations, we study the scalability of ResNet and Latent-MoE, whose parameter counts can be matched. Figure~\ref{fig:appendix_scaling_curves} shows that Latent-MoE benefits clearly from scaling the number of experts. By contrast, widening the corresponding feedforward layers in ResNet yields only marginal improvement. This suggests that Latent-MoE provides a more principled scaling path than naive widening.
\begin{figure}[H]
  \centering
  \begin{subfigure}[t]{0.42\linewidth}
    \centering
    \includegraphics[width=\linewidth]{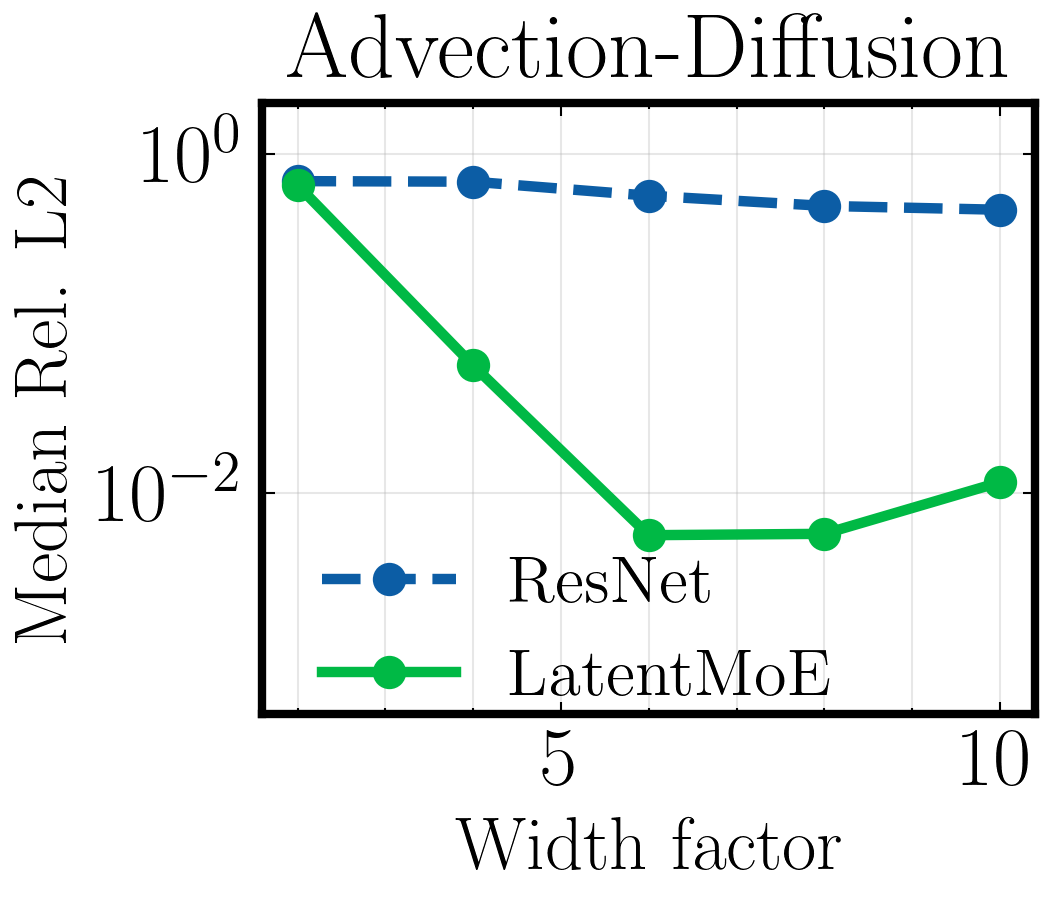}
    \caption{AD: Scaling.}
  \end{subfigure}
  \begin{subfigure}[t]{0.42\linewidth}
    \centering
    \includegraphics[width=\linewidth]{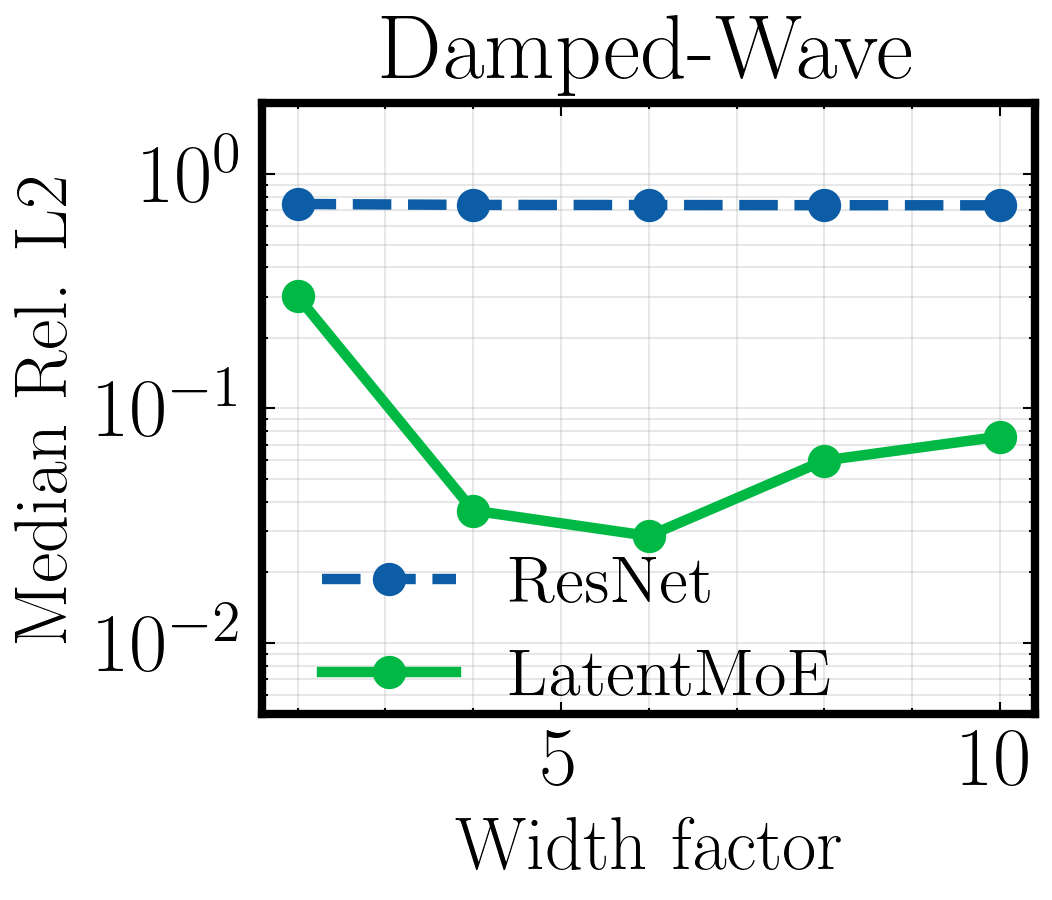}
    \caption{DW: Scaling.}
  \end{subfigure}
  \caption{Scaling diagnostics for advection-diffusion (AD) and damped wave (DW), reported with the same protocol used in Section~\ref{sec:experiments}. Each data point is the median among the 3 random seeds.}
  \label{fig:appendix_scaling_curves}
\end{figure}

\section{Variable stage transition experiments}\label{sec:variable_stage_transition}

In addition, Figure~\ref{fig:variable_transition_heatmaps_latest} shows windowed relative $L_2$-error heatmaps for both benchmarks across all four architectures (ResNet, PirateNet, FB-PINNs, and Latent-MoE), where each horizontal slice corresponds to one benchmark under different stage-transition times. That is,
\begin{align}
    \text{Error}(t) = \frac{||f_\text{pred}(\cdot, s) - f_\text{truth}(\cdot, s)||_2}{||f_\text{truth}(\cdot, s)||_2}, s\in[t-\Delta t/2, t+\Delta t / 2],
\end{align}
where $\Delta t$ is the window size ($\Delta t=1/16$). The heatmaps show that sharp stage transitions challenge all baselines; however, Latent-MoE remains markedly more robust, with errors that are more localized around transition boundaries. We also observe that subdomain boundaries do not necessarily induce elevated error, even when stage transitions are misaligned with the soft subdomain boundaries.
\begin{figure}[H]
  \centering
  \begin{subfigure}[t]{0.24\textwidth}
    \centering
    \includegraphics[width=\linewidth]{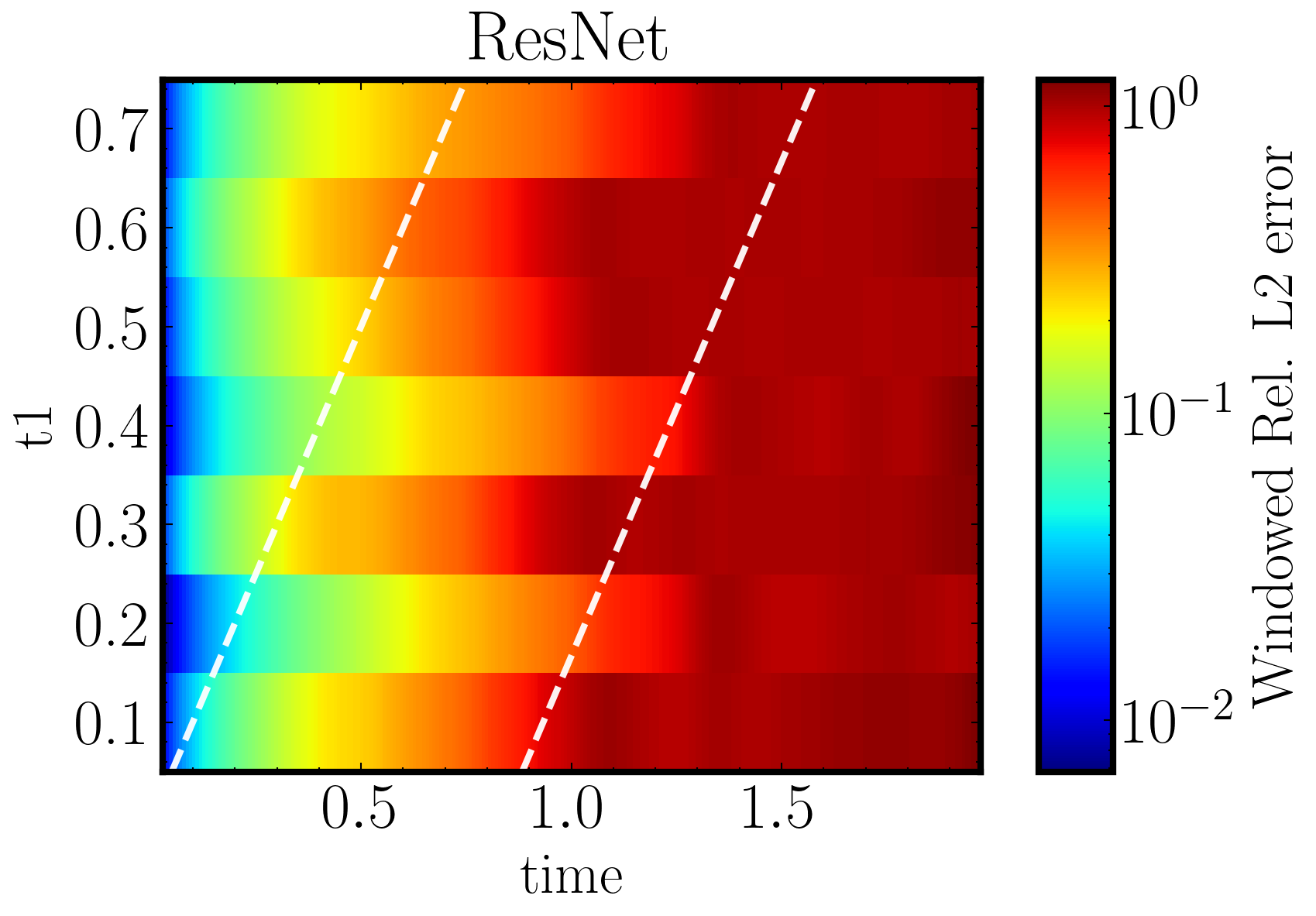}
    \caption{AD: ResNet.}
  \end{subfigure}
  \hfill
  \begin{subfigure}[t]{0.24\textwidth}
    \centering
    \includegraphics[width=\linewidth]{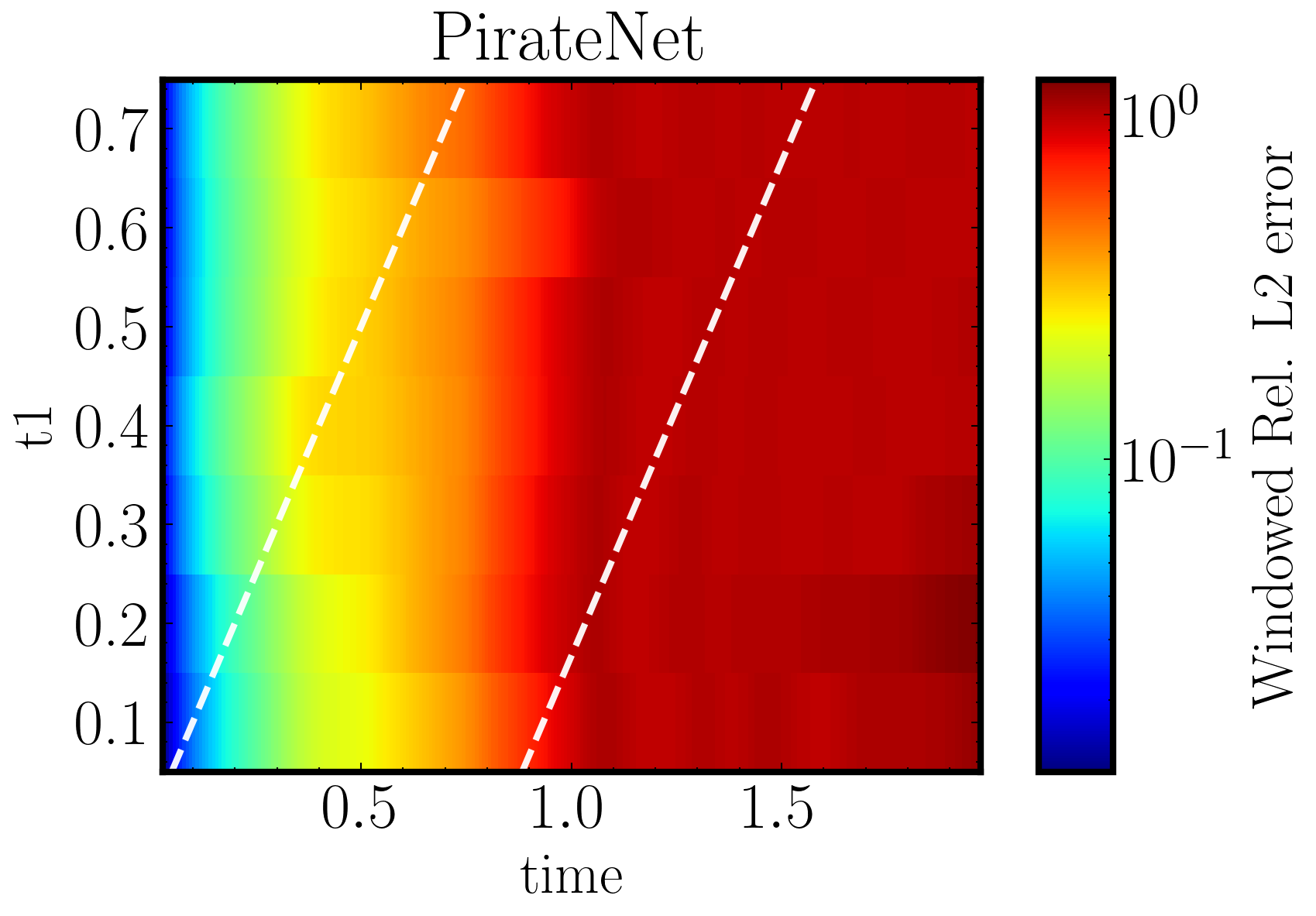}
    \caption{AD: PirateNet.}
  \end{subfigure}
  \hfill
  \begin{subfigure}[t]{0.24\textwidth}
    \centering
    \includegraphics[width=\linewidth]{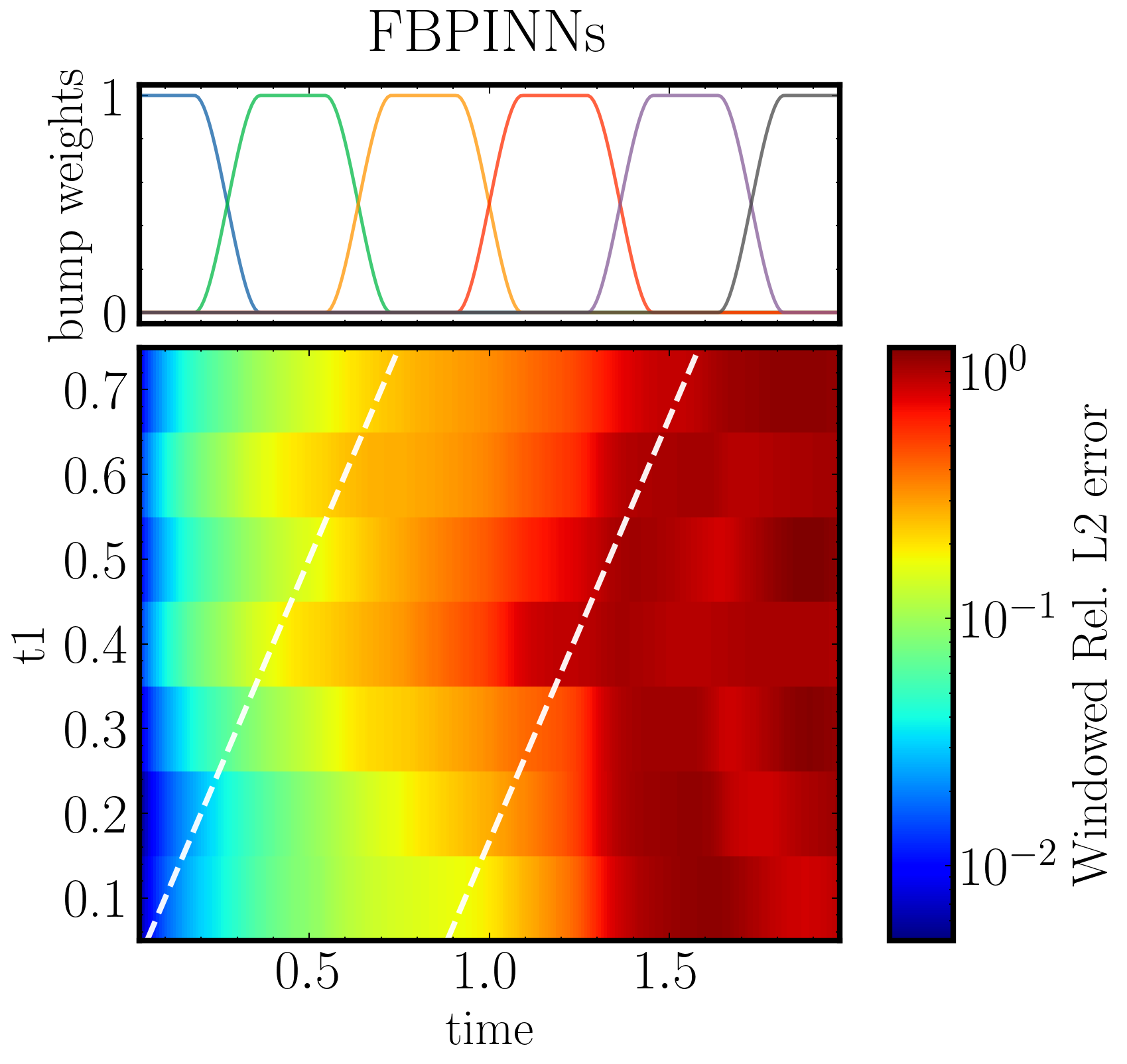}
    \caption{AD: FB-PINNs.}
  \end{subfigure}
  \hfill
  \begin{subfigure}[t]{0.24\textwidth}
    \centering
    \includegraphics[width=\linewidth]{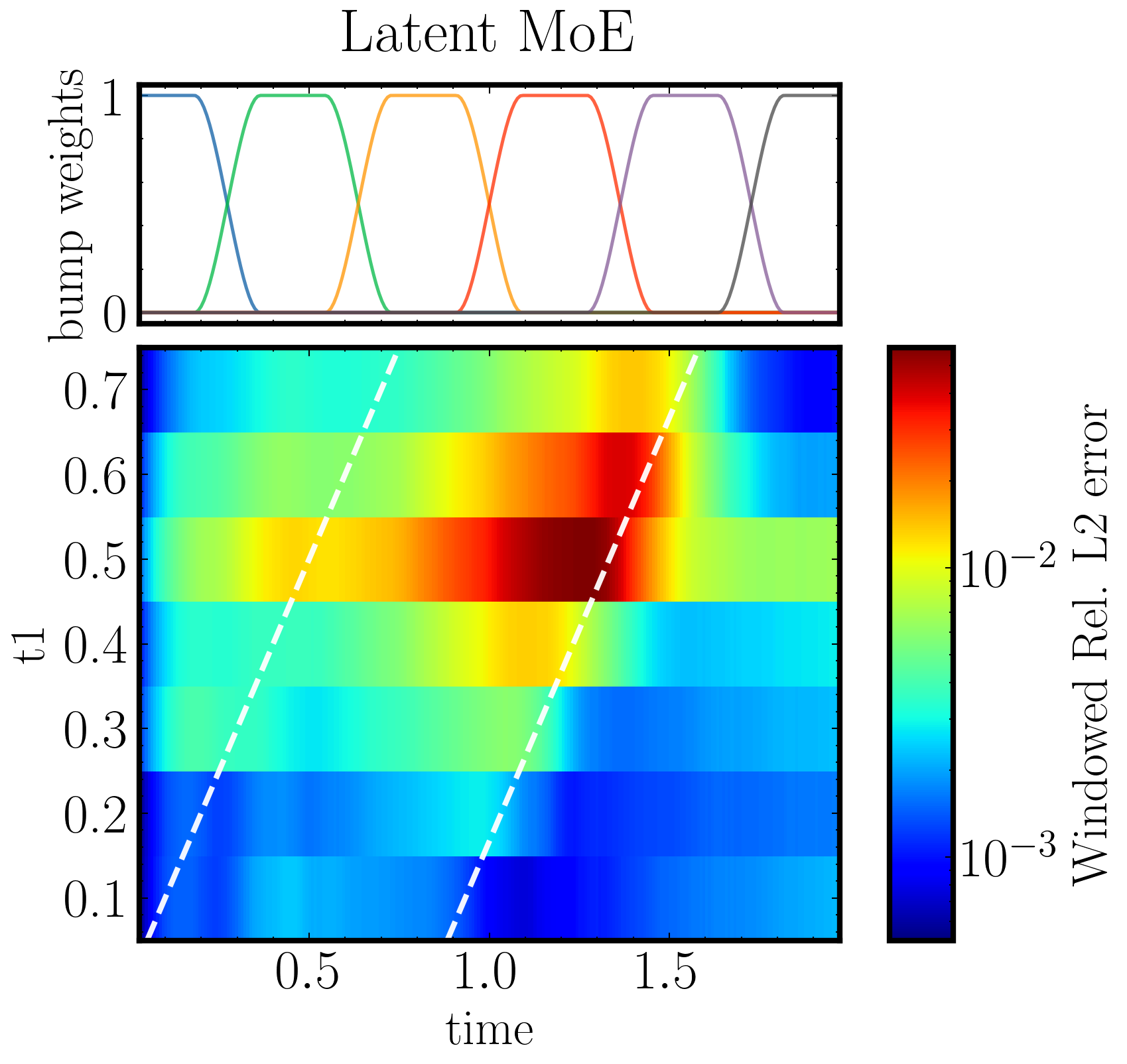}
    \caption{AD: Latent-MoE.}
  \end{subfigure}
    
  \vspace{0.5em}

  \begin{subfigure}[t]{0.24\textwidth}
    \centering
    \includegraphics[width=\linewidth]{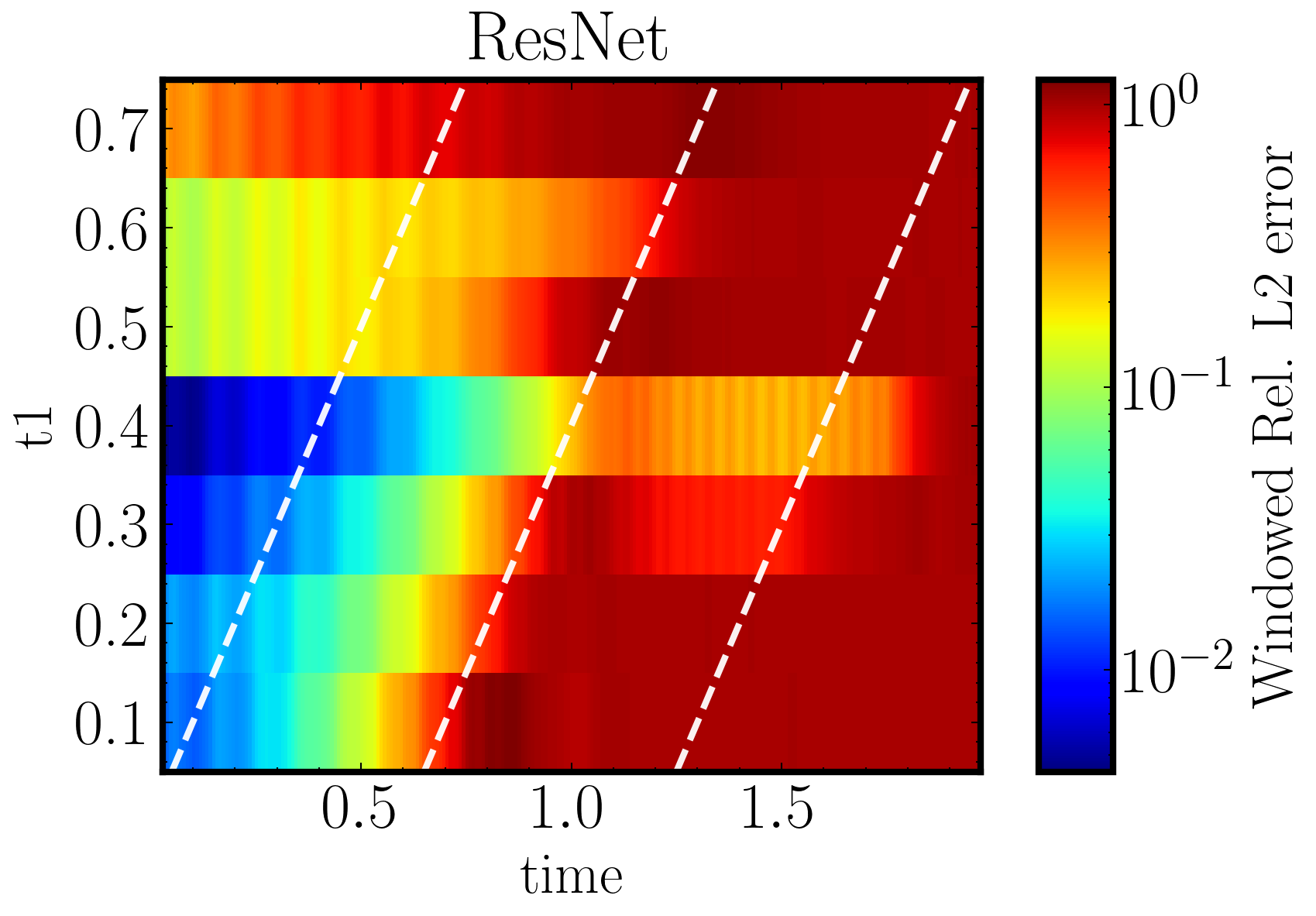}
    \caption{DW: ResNet.}
  \end{subfigure}
  \hfill
  \begin{subfigure}[t]{0.24\textwidth}
    \centering
    \includegraphics[width=\linewidth]{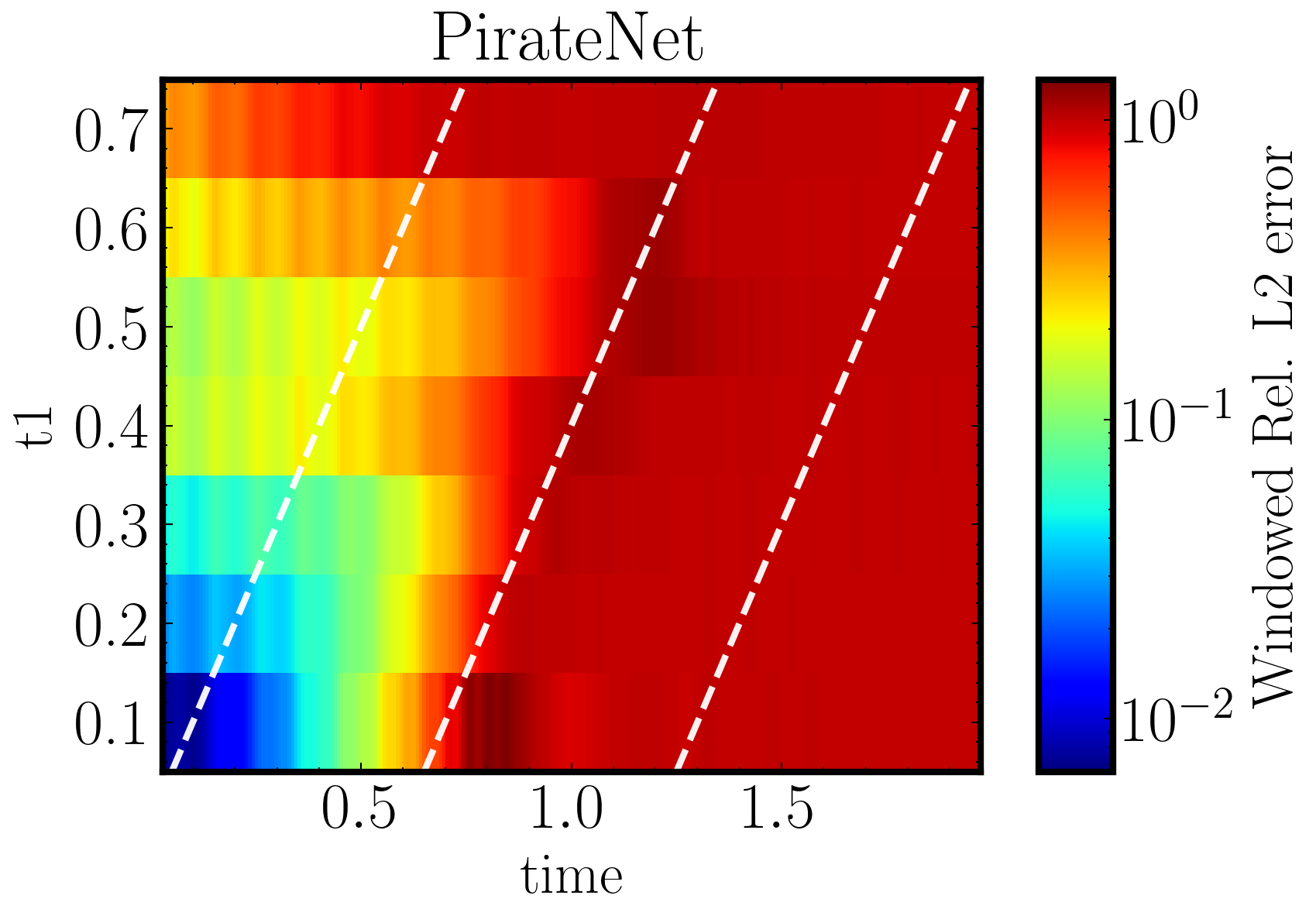}
    \caption{DW: PirateNet.}
  \end{subfigure}
  \hfill
  \begin{subfigure}[t]{0.24\textwidth}
    \centering
    \includegraphics[width=\linewidth]{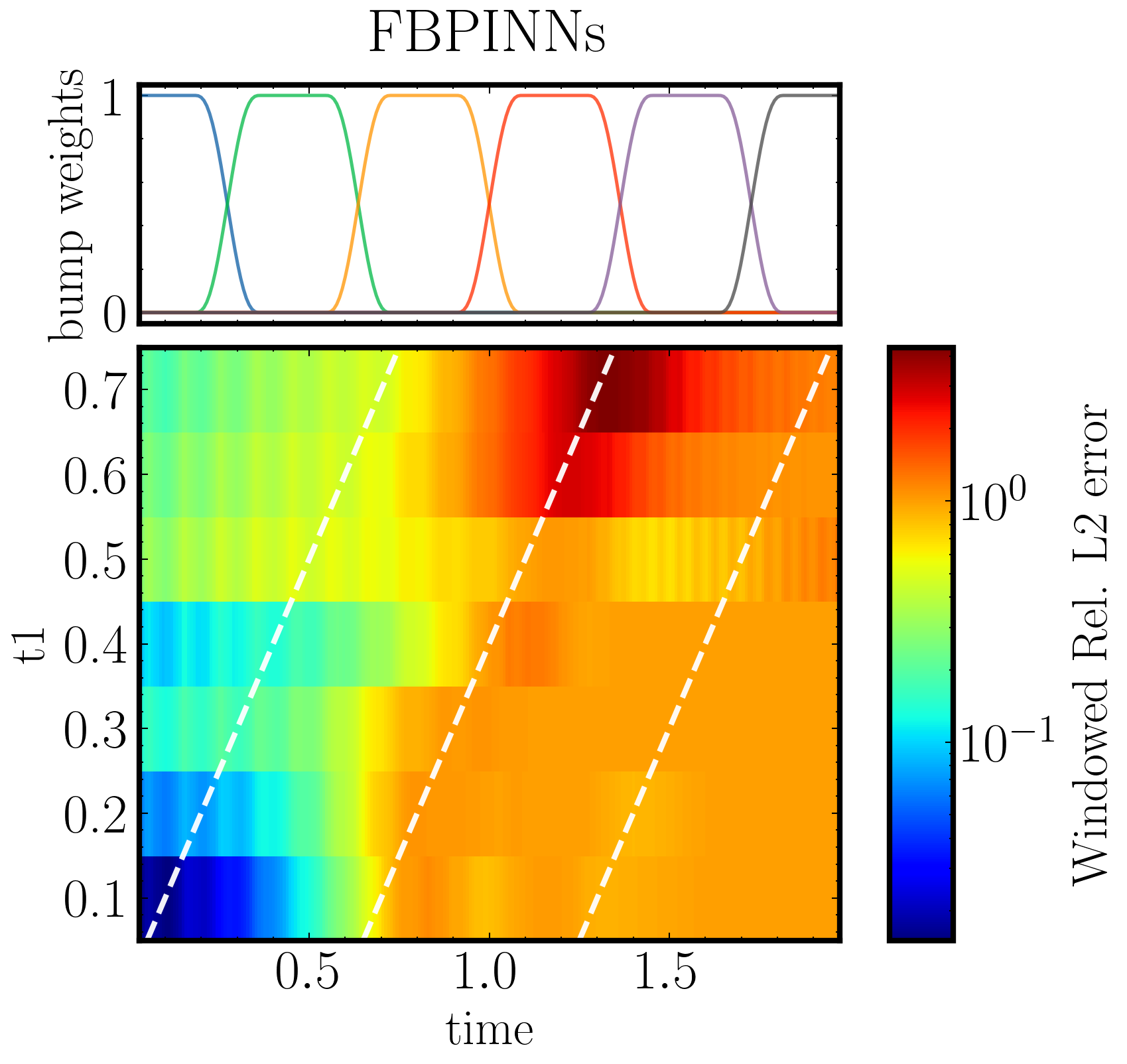}
    \caption{DW: FB-PINNs.}
  \end{subfigure}
  \hfill
  \begin{subfigure}[t]{0.24\textwidth}
    \centering
    \includegraphics[width=\linewidth]{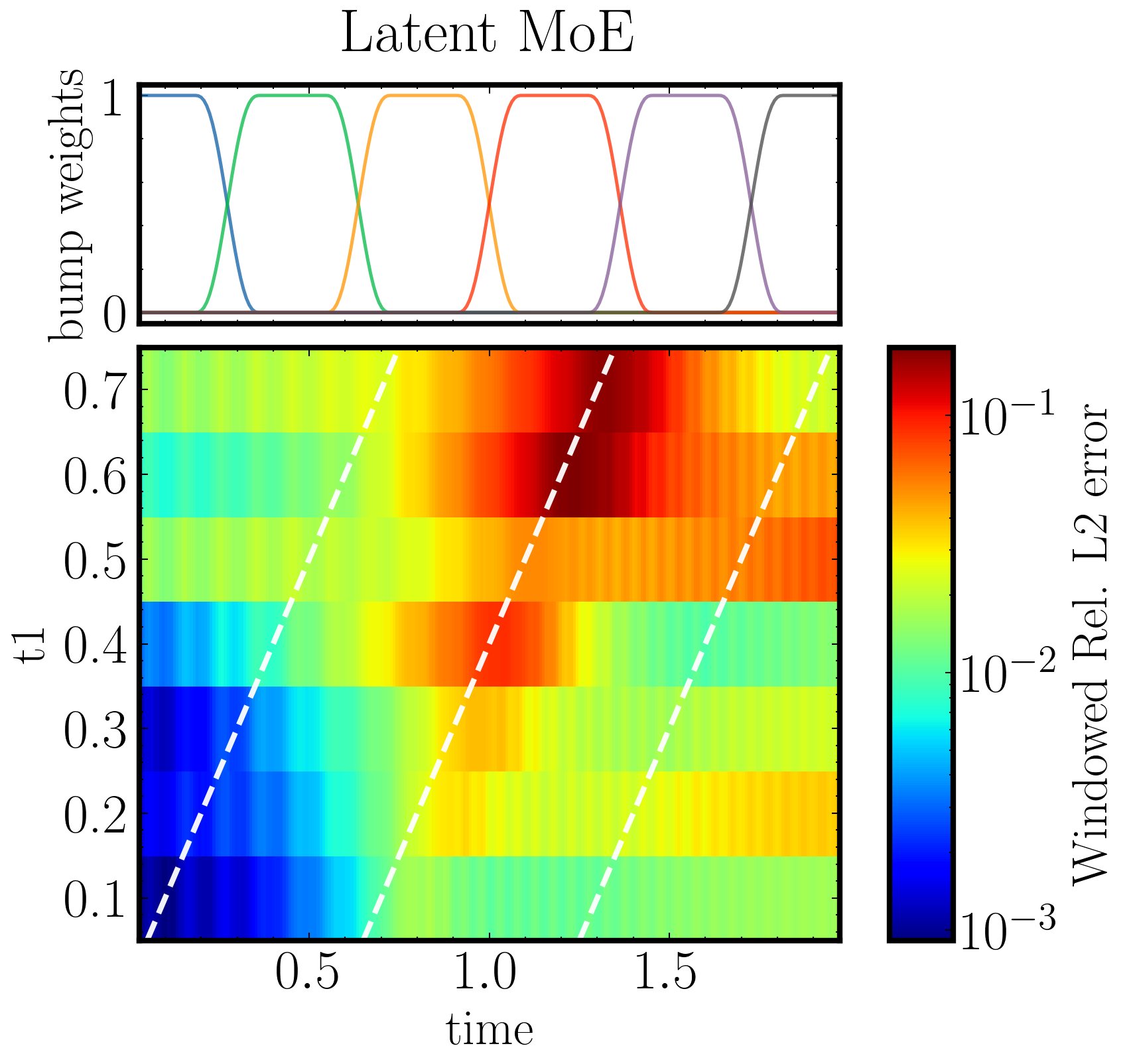}
    \caption{DW: Latent-MoE.}
  \end{subfigure}
  \caption{Windowed relative $L_2$ error for variable stage transitions. Top row: advection-diffusion (AD). Bottom row: damped wave (DW). In each row, we compare ResNet, PirateNet, FB-PINNs, and Latent-MoE under the same transition-shift settings.}
  \label{fig:variable_transition_heatmaps_latest}
\end{figure}

\begin{figure}[H]
  \centering
  \begin{subfigure}[t]{0.35\textwidth}
    \centering
    \includegraphics[width=\linewidth]{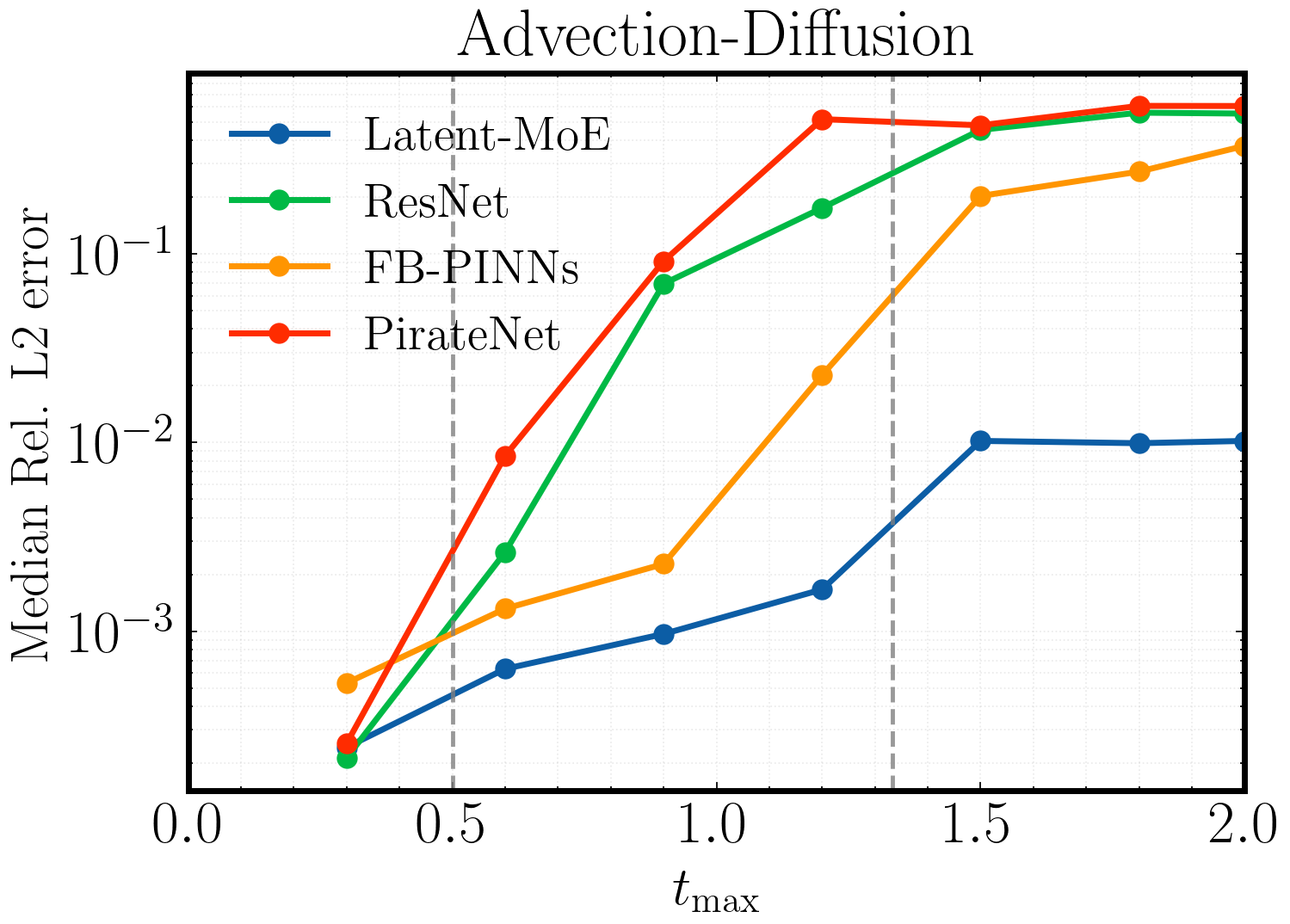}
    \caption{Advection-diffusion (AD).}
  \end{subfigure}
  \begin{subfigure}[t]{0.35\textwidth}
    \centering
    \includegraphics[width=\linewidth]{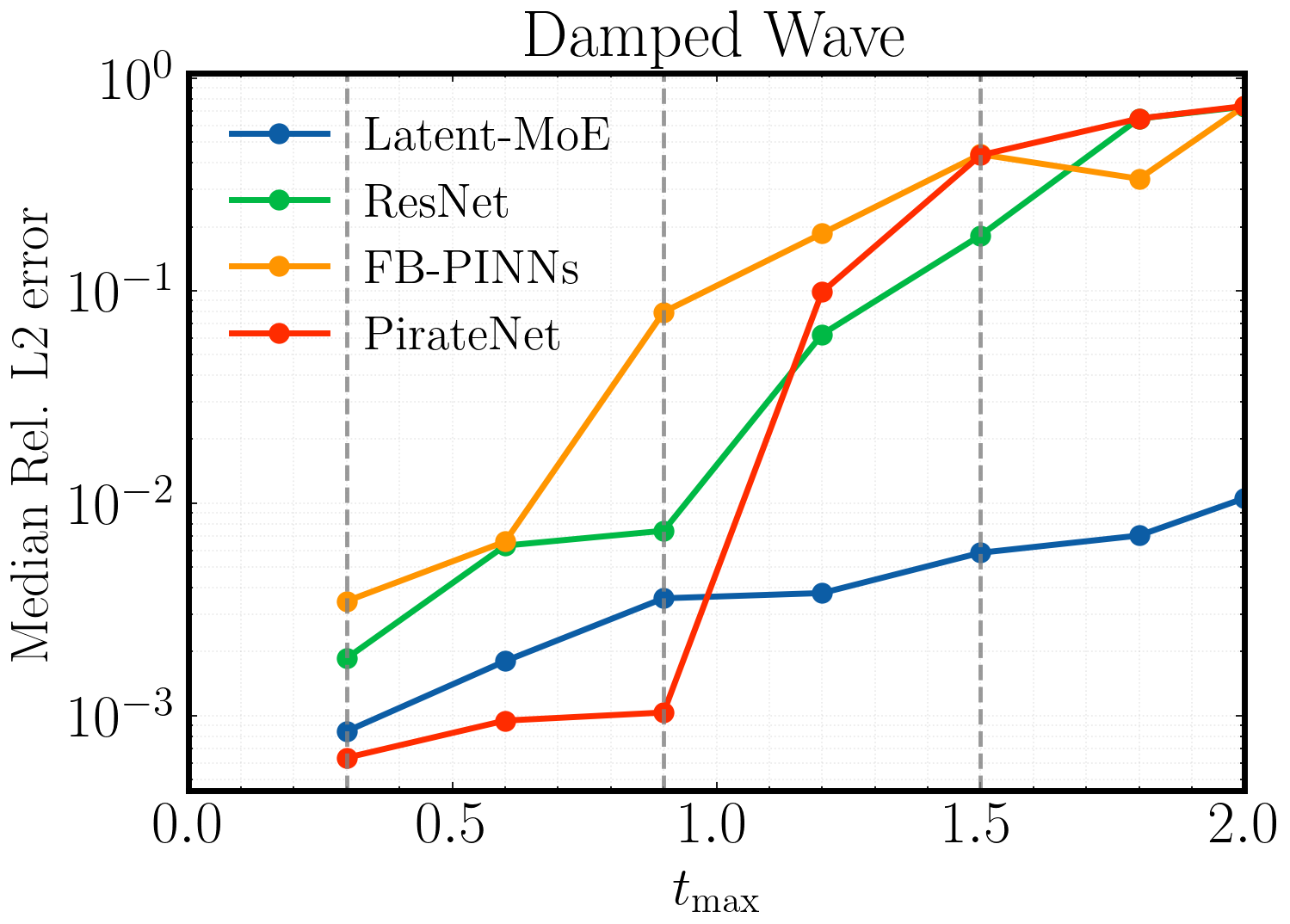}
    \caption{Damped wave (DW).}
  \end{subfigure}
  \caption{{\small Median relative $L_2$ error of the models trained for variable $t_\text{max}$ cutoffs for the advection-diffusion (left) and damped-wave (right) benchmarks. PirateNet and ResNet are competitive or even better at short horizons before the first stage transition, but their errors increase sharply as $t_{\max}$ extends into the multi-stage regime. Latent-MoE exhibits a markedly more consistent error across the full range of $t_{\max}$.}}
  \label{fig:variable_tmax}
\end{figure}

\section{Variable time horizon experiments}\label{sec:variable_tmax}

Figure~\ref{fig:variable_tmax} reports the median relative $L_2$ error as a function of the evaluation cutoff $t_{\max}$, sweeping from a short horizon—where the dynamics are dominated by pure wave propagation before the diffusion or damping regime activates—to the full benchmark duration.
At short time horizons, PirateNet and ResNet achieve competitive or even lower errors, benefiting from the relative simplicity and homogeneity of the early wave-propagation phase.
However, as $t_{\max}$ grows and the physics transitions into the multi-stage regime, both baselines exhibit a rapid and sustained rise in error, indicating that their single-network representations struggle to accommodate the change in dynamics.
Latent-MoE, by contrast, maintains a substantially more consistent error profile across the full range of $t_{\max}$, demonstrating that its latent domain decomposition effectively absorbs the increasing complexity introduced by later-stage dynamics.

\section{Sensitivity against random seeds}

Table~\ref{tab:benchmark_model_stats_main} and Figure~\ref{fig:boxplots_appendix} shows that sensitivity to initialization exists for all benchmarks, especially for ResNet. In this section, we visualize training instability through full training trajectories. Figure~\ref{fig:all_random_seed_trajectories} shows relative $L_2$ trajectories for all random seeds of each benchmark-architecture combination. Although Latent-MoE may appear to have larger variance on the log-scale $L_2$ axis, its convergence trend remains consistent. Larger relative $L_2$ values are often due to slower early convergence, and the trajectories still exhibit a steadily decreasing trend at the training cutoff.

\begin{figure}[H]
  \centering
  \begin{subfigure}[t]{0.3\linewidth}
    \centering
    \includegraphics[width=\linewidth]{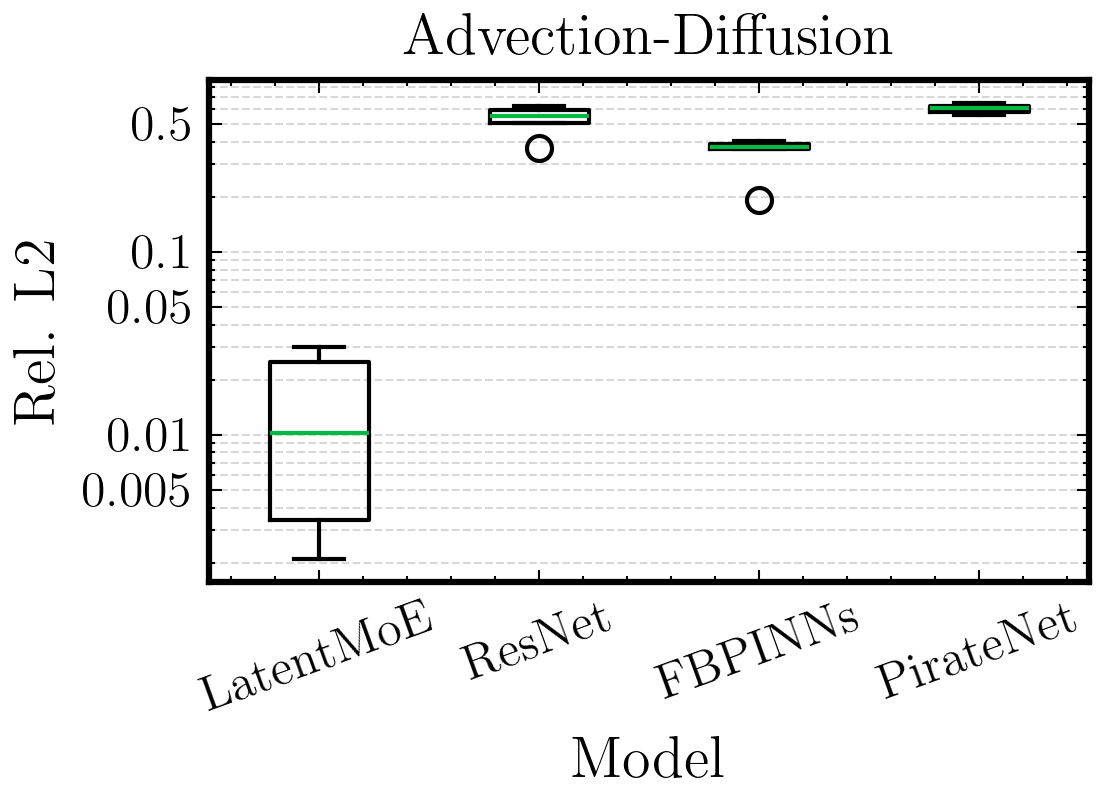}
    \caption{Advection-diffusion.}
  \end{subfigure}
  \begin{subfigure}[t]{0.3\linewidth}
    \centering
    \includegraphics[width=\linewidth]{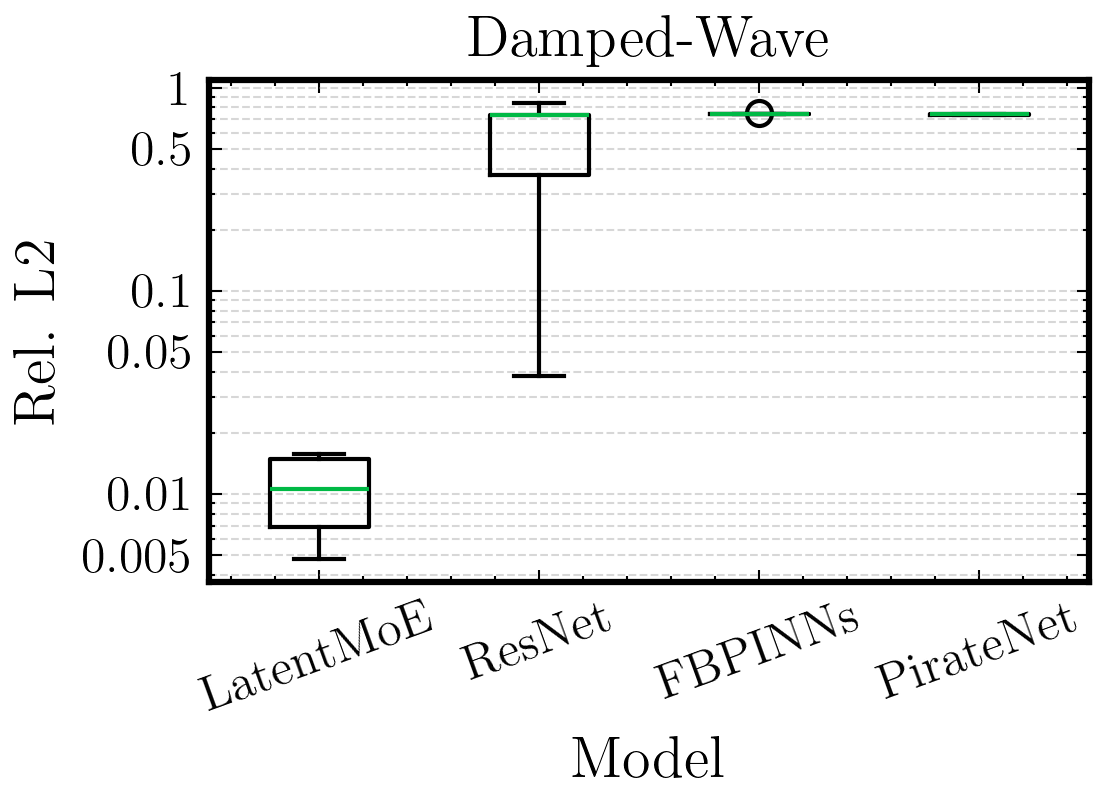}
    \caption{Damped wave.}
  \end{subfigure}
  \begin{subfigure}[t]{0.3\linewidth}
    \centering
    \includegraphics[width=\linewidth]{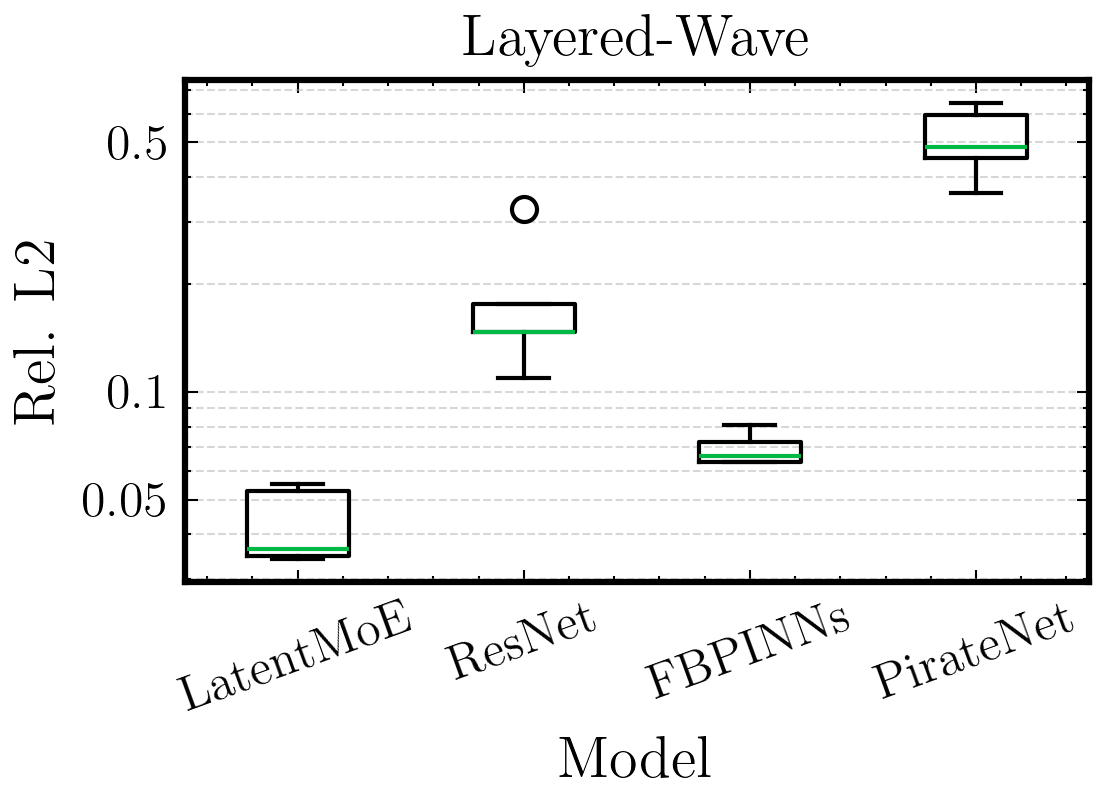}
    \caption{Layered wave.}
  \end{subfigure}
  \caption{Test-error boxplots over random seeds for the compared models on all benchmarks.}
  \label{fig:boxplots_appendix}
\end{figure}

\begin{figure}[H]
  \centering
  \begin{subfigure}[t]{0.24\textwidth}
    \centering
    \includegraphics[width=\linewidth]{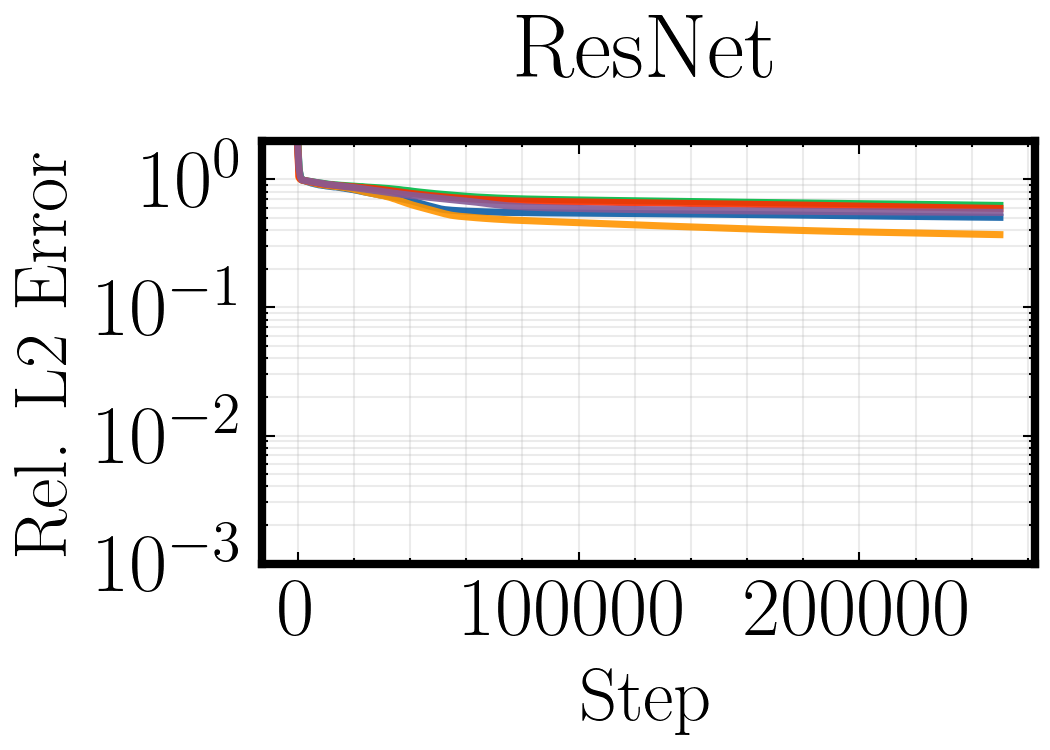}
    \caption{AD: ResNet.}
  \end{subfigure}
  \hfill
  \begin{subfigure}[t]{0.24\textwidth}
    \centering
    \includegraphics[width=\linewidth]{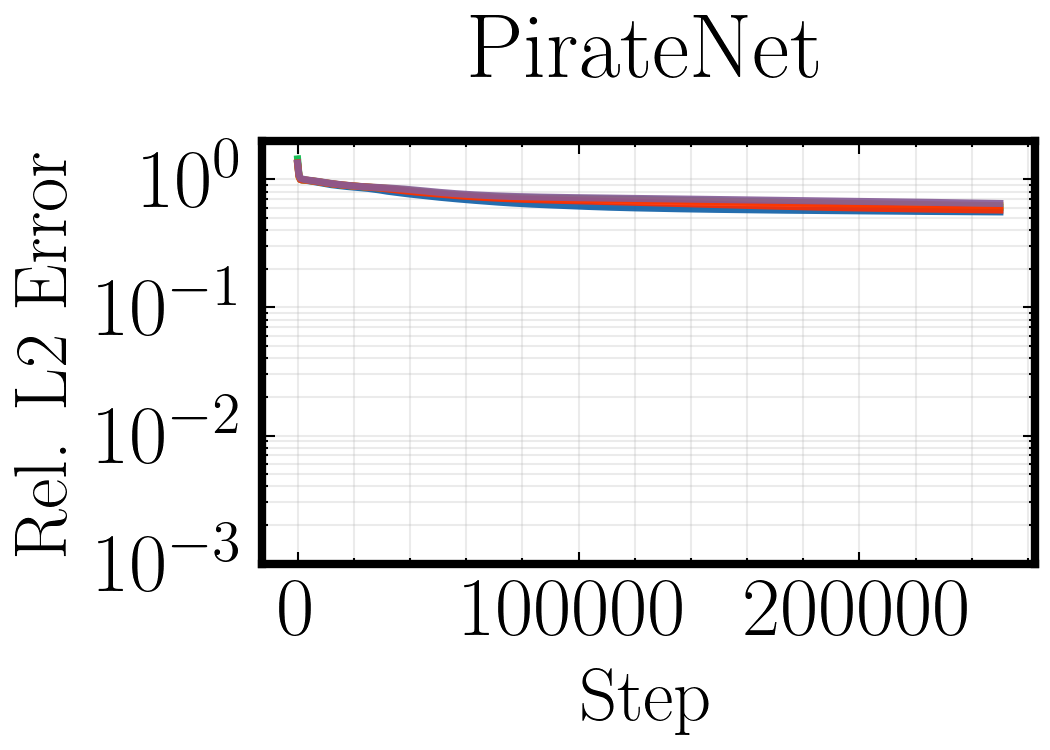}
    \caption{AD: PirateNet.}
  \end{subfigure}
  \hfill
  \begin{subfigure}[t]{0.24\textwidth}
    \centering
    \includegraphics[width=\linewidth]{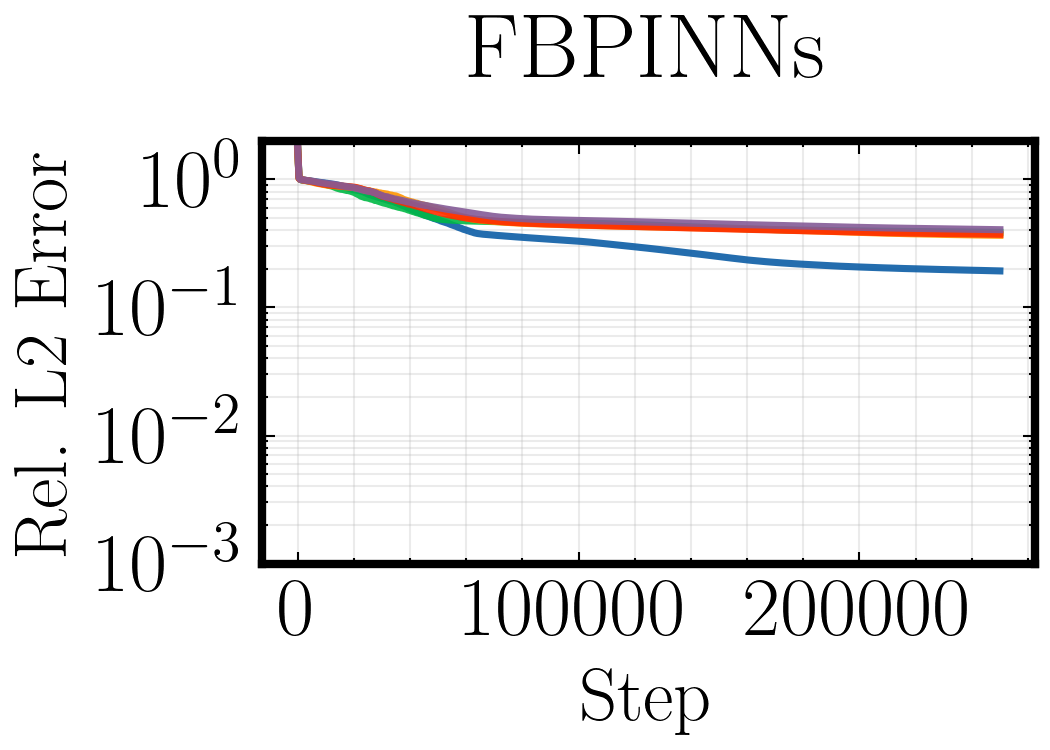}
    \caption{AD: FB-PINNs.}
  \end{subfigure}
  \hfill
  \begin{subfigure}[t]{0.24\textwidth}
    \centering
    \includegraphics[width=\linewidth]{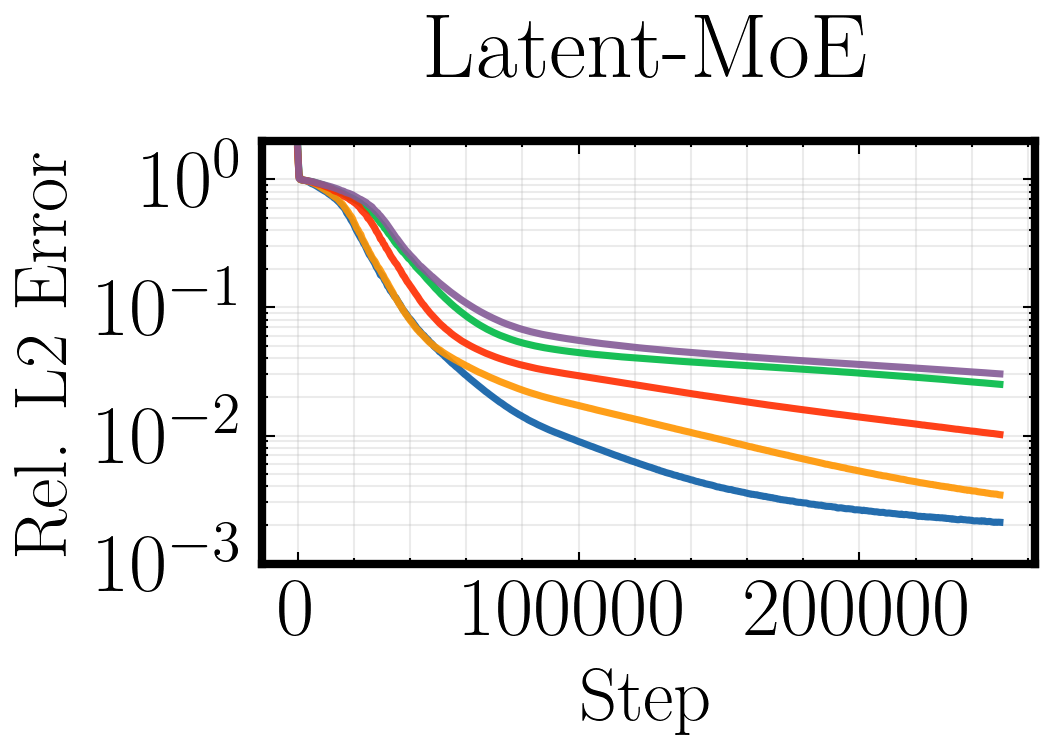}
    \caption{AD: Latent-MoE.}
  \end{subfigure}

  \vspace{0.5em}

  \begin{subfigure}[t]{0.24\textwidth}
    \centering
    \includegraphics[width=\linewidth]{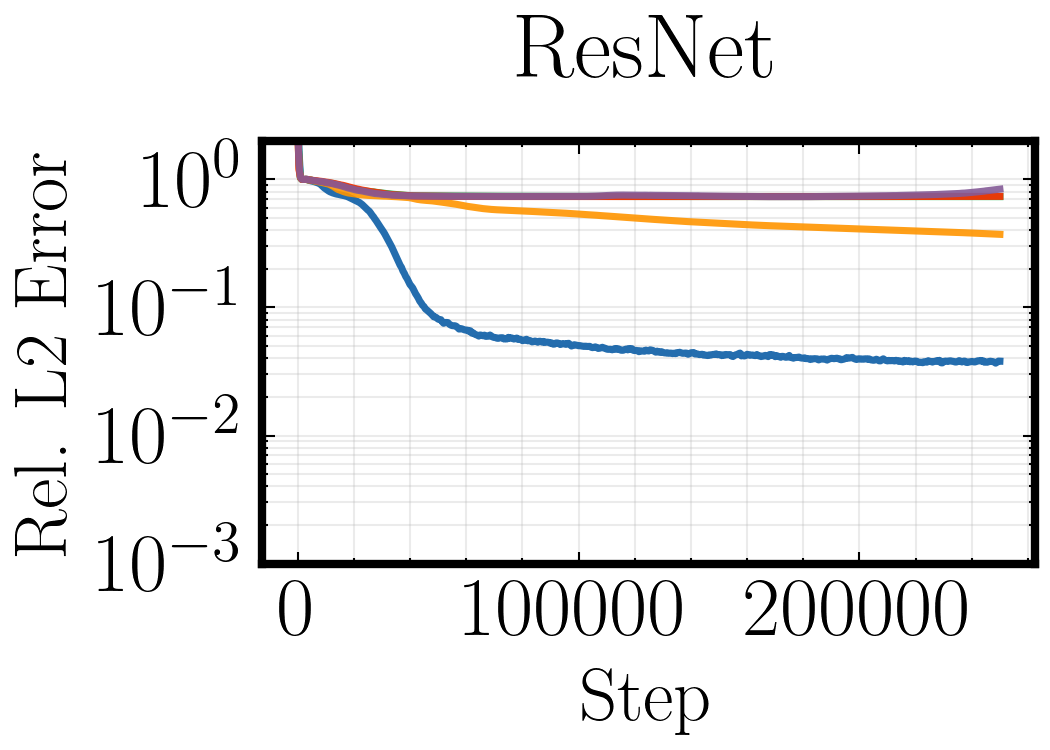}
    \caption{DW: ResNet.}
  \end{subfigure}
  \hfill
  \begin{subfigure}[t]{0.24\textwidth}
    \centering
    \includegraphics[width=\linewidth]{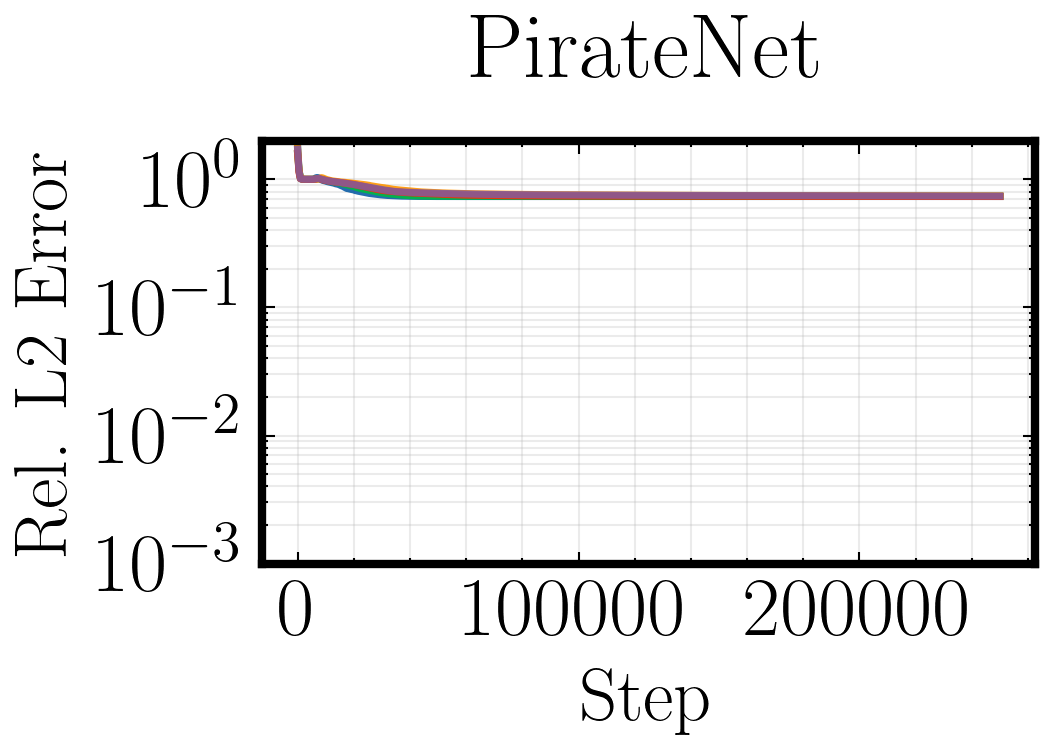}
    \caption{DW: PirateNet.}
  \end{subfigure}
  \hfill
  \begin{subfigure}[t]{0.24\textwidth}
    \centering
    \includegraphics[width=\linewidth]{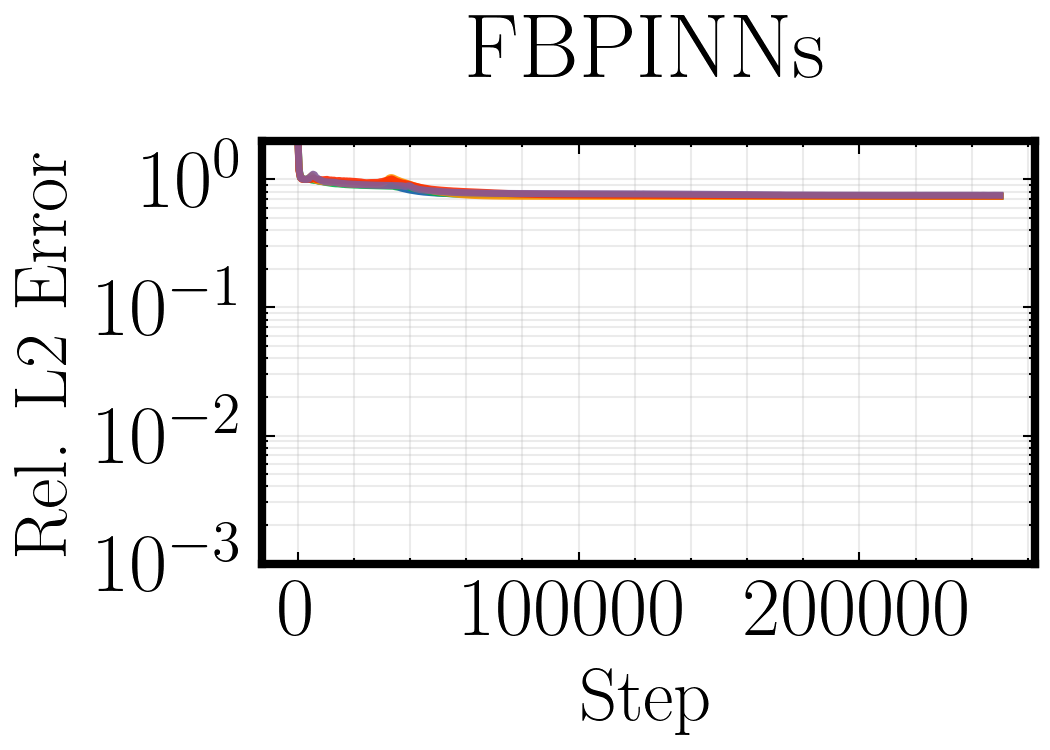}
    \caption{DW: FB-PINNs.}
  \end{subfigure}
  \hfill
  \begin{subfigure}[t]{0.24\textwidth}
    \centering
    \includegraphics[width=\linewidth]{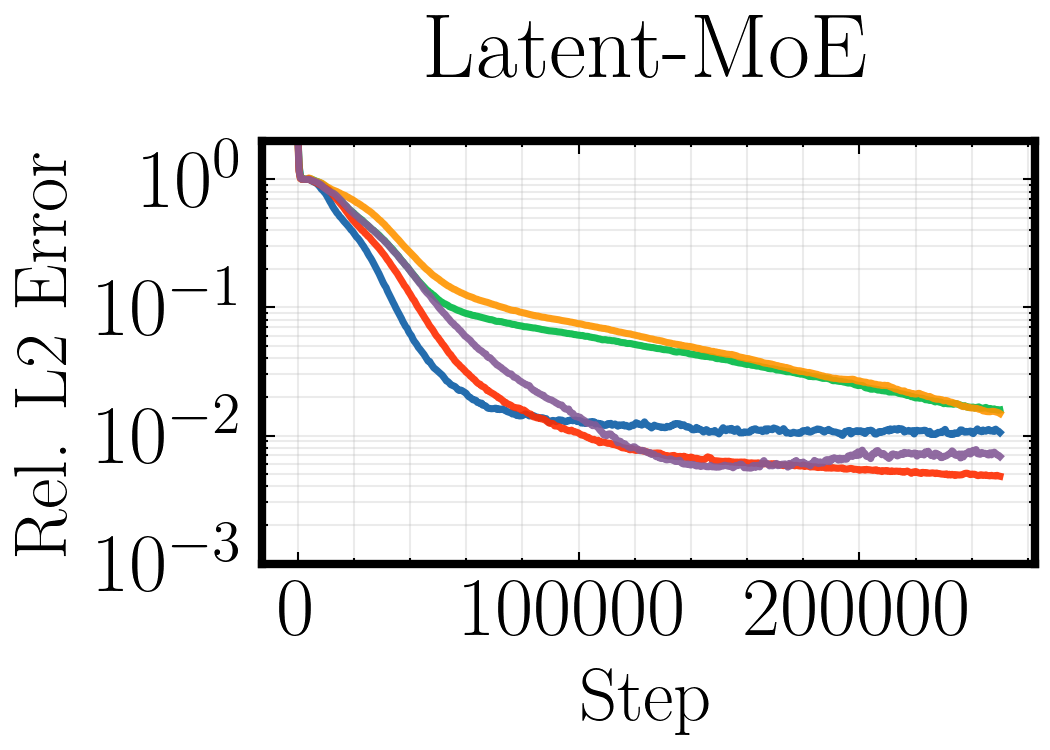}
    \caption{DW: Latent-MoE.}
  \end{subfigure}

  \begin{subfigure}[t]{0.24\textwidth}
    \centering
    \includegraphics[width=\linewidth]{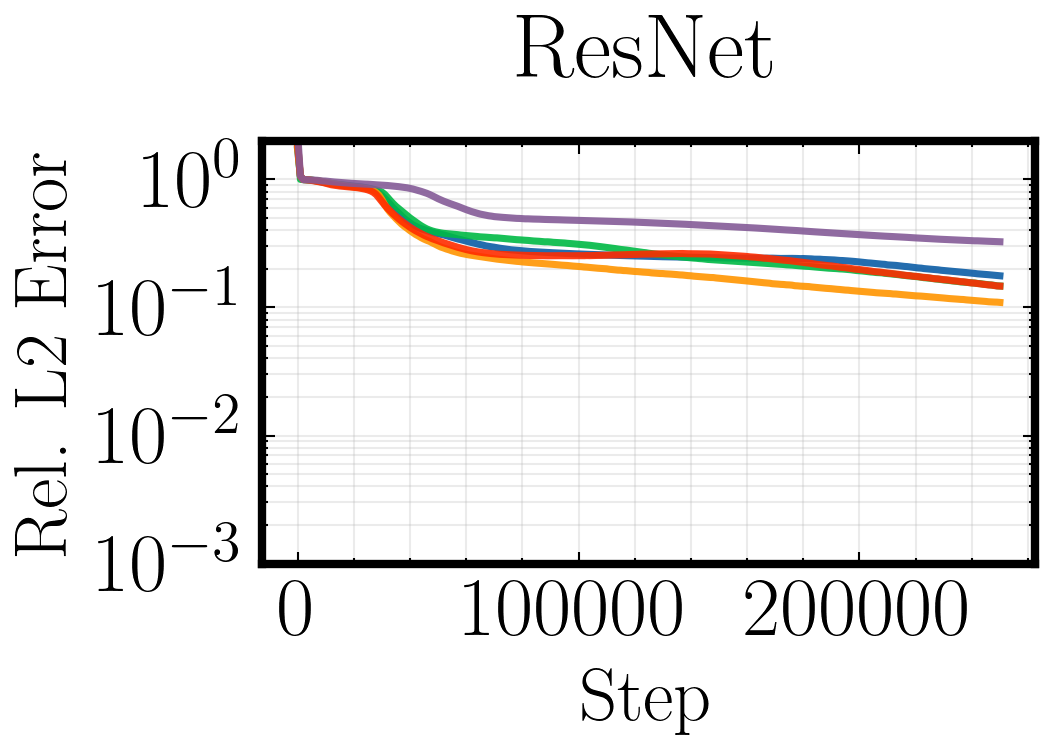}
    \caption{LW: ResNet.}
  \end{subfigure}
  \hfill
  \begin{subfigure}[t]{0.24\textwidth}
    \centering
    \includegraphics[width=\linewidth]{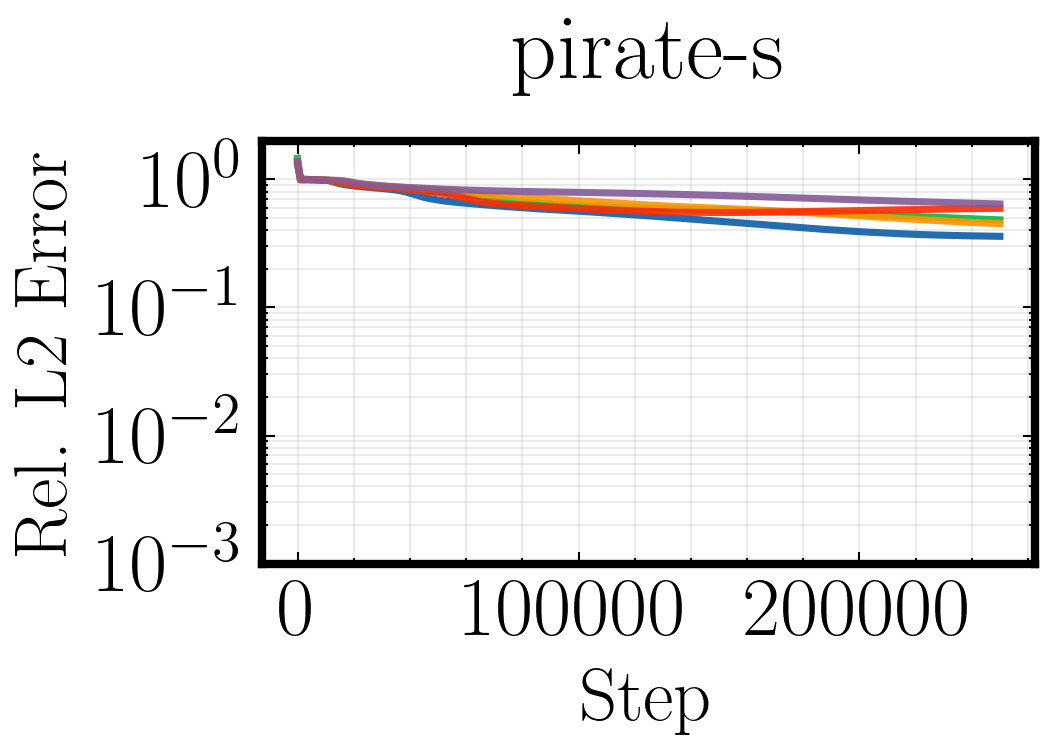}
    \caption{LW: PirateNet.}
  \end{subfigure}
  \hfill
  \begin{subfigure}[t]{0.24\textwidth}
    \centering
    \includegraphics[width=\linewidth]{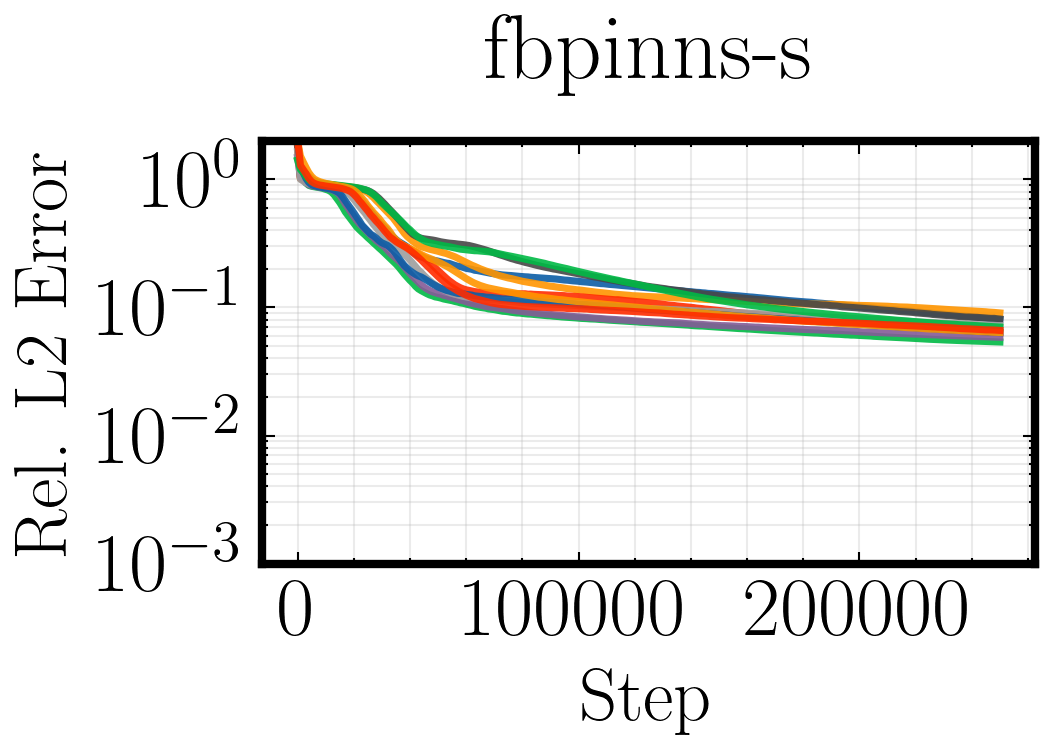}
    \caption{LW: FB-PINNs.}
  \end{subfigure}
  \hfill
  \begin{subfigure}[t]{0.24\textwidth}
    \centering
    \includegraphics[width=\linewidth]{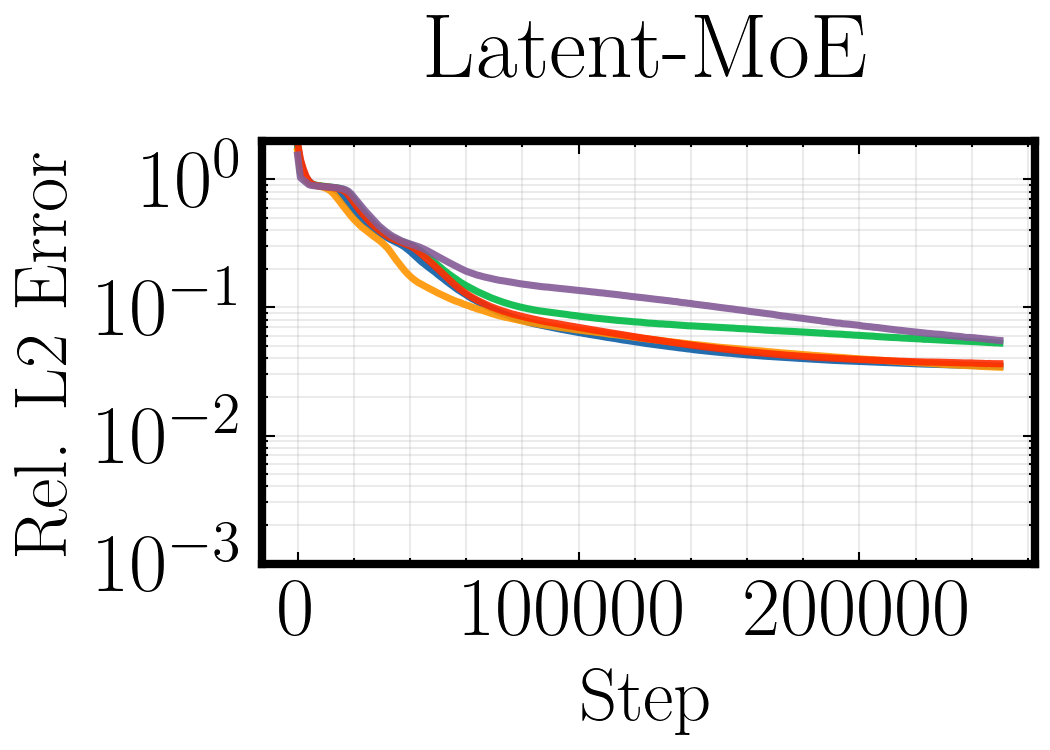}
    \caption{LW: Latent-MoE.}
  \end{subfigure}
  \caption{Relative $L_2$ trajectories of different random seeds for the four models on all benchmarks. Top row: advection-diffusion (AD). Middle row: damped wave (DW). Bottom row: layered wave (LW).}\label{fig:all_random_seed_trajectories}
\end{figure}

\section{Computational cost}\label{appendix:compute_cost}
All models are trained on a single NVIDIA RTX 6000 Ada Generation GPU. $250,000$ training steps on $500,000$ parameters models take approximately 2 hours per model, and $150,000$ training steps on smaller models ($350,000$ parameters) take approximately 1 hours per model. In this section, we report computational cost, including training cost and inference cost with all experts active of the larger models. Table~\ref{tab:compute_cost} shows training and inference time for each model at different widths (widening factor for ResNet, number of experts for Latent-MoE and FB-PINNs), evaluated on the damped-wave benchmark. For FB-PINNs and Latent-MoE, inference can also be run sparsely on non-overlapping subdomains, so at most two experts are active per evaluation point. Figure~\ref{fig:compute_cost_plots} illustrates that the Latent-MoE model is more computationally efficient than the FB-PINNs, thanks to the shared backbone.

\begin{table}[H]
\centering
\small
\setlength{\tabcolsep}{8pt}
\caption{Training and inference compute cost. The training cost is evaluated as an average of 1000 steps with the training batch size, and the inference time is evaluated as an average of 200 steps, each with 4096 points.}\label{tab:compute_cost}
\renewcommand{\arraystretch}{1.22}
\begin{tabular}{cccc}
\toprule
Architecture & Width & Train (ms) & Inference (ms) \\
\midrule
\multirow{4}{*}{ResNet} & 2 & 14.442 & 0.218 \\
 & 4 & 17.999 & 0.242 \\
 & 6 & 22.884 & 0.297 \\
 & 8 & 28.062 & 0.327 \\
\hline
\multirow{4}{*}{Latent MoE} & 2 & 20.308 & 0.208 \\
 & 4 & 23.517 & 0.249 \\
 & 6 & 27.070 & 0.298 \\
 & 8 & 30.423 & 0.372 \\
\hline
\multirow{4}{*}{FB-PINNs} & 2 & 22.712 & 0.228 \\
 & 4 & 27.021 & 0.267 \\
 & 6 & 31.832 & 0.361 \\
 & 8 & 36.453 & 0.411 \\
\hline
PirateNet & - & 29.395 & 0.376 \\
\bottomrule
\end{tabular}
\end{table}

Figure~\ref{fig:compute_cost_plots} provides the corresponding compute-cost visualizations for training and inference.
\begin{figure}[H]
  \centering
  \begin{subfigure}[t]{0.33\linewidth}
    \centering
    \includegraphics[width=\linewidth]{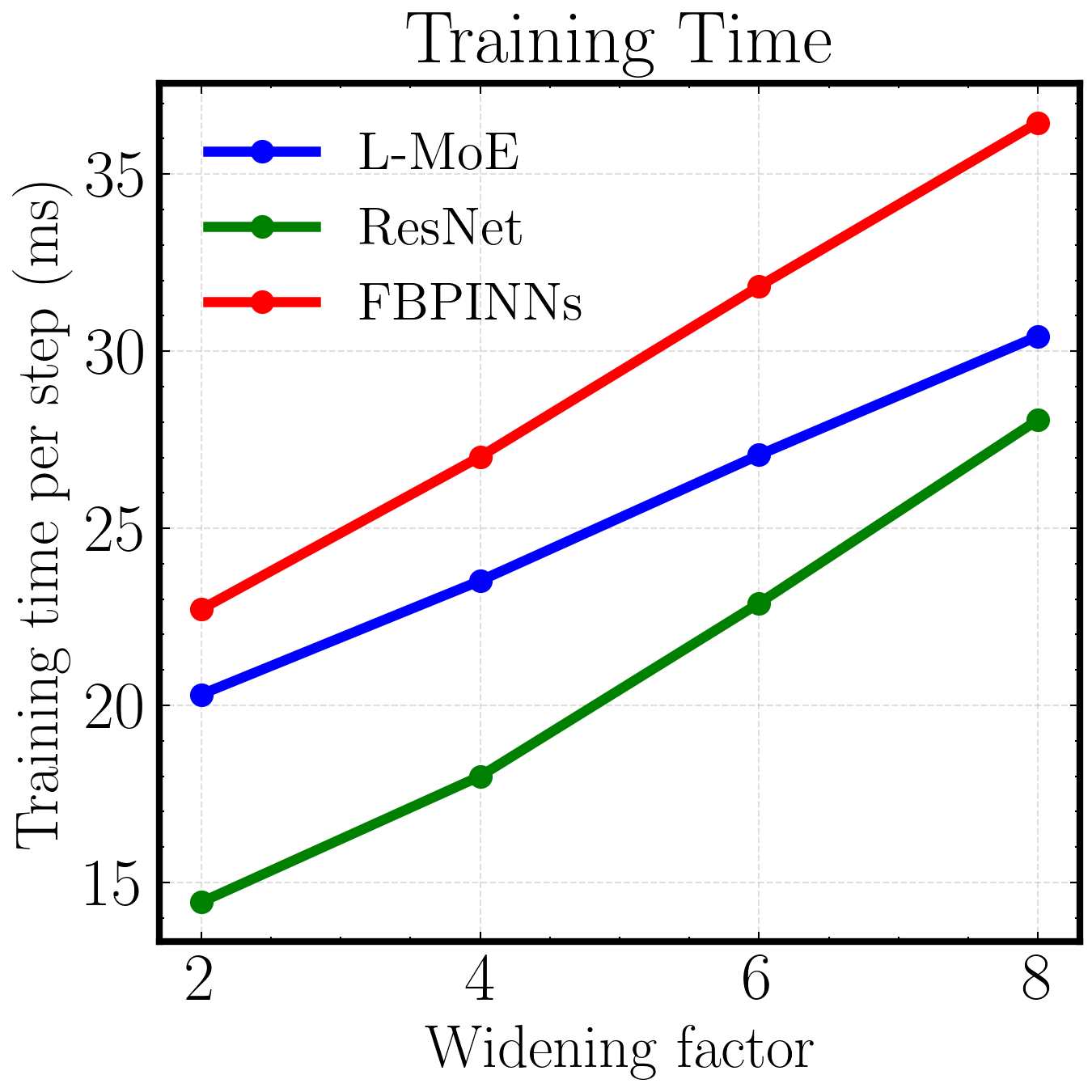}
    \caption{Training cost}.
  \end{subfigure}
  \begin{subfigure}[t]{0.33\linewidth}
    \centering
    \includegraphics[width=\linewidth]{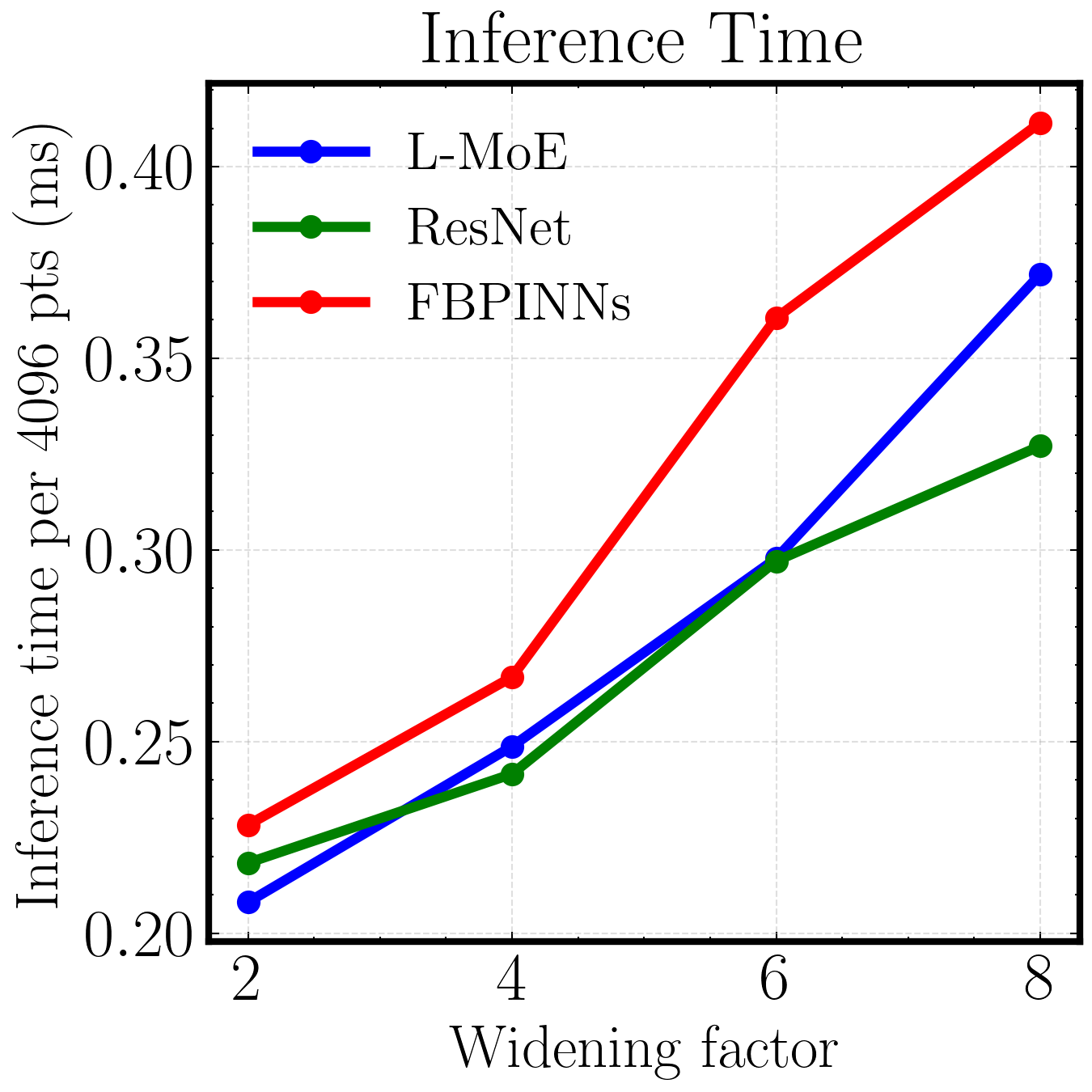}
    \caption{Inference cost.}
  \end{subfigure}
  \caption{Compute-cost curves across model families and widths for the damped-wave benchmark.}
  \label{fig:compute_cost_plots}
\end{figure}

\clearpage
\section{Reproducibility Statement}\label{appendix:reproducibility}
All experiments are seeded with \texttt{jax.random.PRNGKey} in JAX. However, exact reproducibility is still not guaranteed because GPU operations are not fully deterministic, and this effect is amplified by the stiff PINN loss landscape. As a result, outcomes may vary slightly across runs even with the same seed, especially across hardware accelerators, software versions, and operating systems.

\section{Declaration of LLM usage}\label{appendix:llm_usage}

We used publicly available LLMs for coding assistance, writing assistance, and proofreading. We carefully audited all LLM outputs and manually ran the experiments so that the LLMs had no access to experimental results. The tables are generated by a Python script that reads the raw data and formats it into LaTeX code.

\section{Broader Impact}\label{appendix:broader_impact}

Multi-regime physics -- wave propagation through layered media, reactive
flows with temperature-dependent kinetics, structures driven by staged
forcing -- is where many engineering and geoscience applications of PINNs
are heading, and where standard architectures fail silently: as we show,
even state-of-the-art baselines converge to plausible-looking spurious
solutions on staged dynamics. By characterizing this failure mode through
the NTK and providing an architectural remedy, this work expands the
regime in which PINNs are a defensible modeling tool. Extensions such as
physics-informed neural
operators~\cite{wang2021learningthe} make the same machinery applicable to
families of PDEs with varying coefficients and boundary conditions, broadening
practical reach to settings such as climate modeling, fluid dynamics, and
materials science. The same property is double-edged: improved apparent
fidelity does not substitute for validation against reference solvers or
experimental data, and PINN-based surrogates should not be deployed in
safety-critical settings without independent verification. We see no
specific dual-use risk associated with the methods presented here.